\documentclass[11pt]{article}
\usepackage[margin=1in]{geometry}
\usepackage[T1]{fontenc}
\usepackage{lmodern}
\usepackage{amsmath,amssymb,amsthm,mathtools,bm}
\usepackage{booktabs,tabularx,array}
\usepackage{algorithm}
\usepackage{placeins}
\usepackage[noend]{algpseudocode}
\usepackage{enumitem}
\usepackage{microtype}
\usepackage{xcolor}
\usepackage{mdframed}
\newmdenv[backgroundcolor=black!3,linecolor=black!65,linewidth=1.2pt,
  topline=false,bottomline=false,rightline=false,
  innerleftmargin=10pt,innerrightmargin=10pt,
  innertopmargin=8pt,innerbottommargin=8pt,
  skipabove=10pt,skipbelow=10pt,nobreak=true]{researchquestion}
\usepackage{graphicx}
\usepackage[hidelinks]{hyperref}
\usepackage[nameinlink,noabbrev]{cleveref}
\crefname{assumption}{assumption}{assumptions}
\Crefname{assumption}{Assumption}{Assumptions}
\crefname{corollary}{corollary}{corollaries}
\Crefname{corollary}{Corollary}{Corollaries}
\crefname{algorithm}{algorithm}{algorithms}
\Crefname{algorithm}{Algorithm}{Algorithms}

\newtheorem{theorem}{Theorem}[section]
\newtheorem{lemma}[theorem]{Lemma}

\newtheorem{corollary}[theorem]{Corollary}
\newtheorem{remark}[theorem]{Remark}
\newtheorem{assumption}[theorem]{Assumption}

\newcommand{\R}{\mathbb R}
\newcommand{\E}{\mathbb E}
\newcommand{\Prb}{\mathbb P}
\newcommand{\KL}{\mathrm{KL}}

\newcommand{\cW}{\mathcal W}
\newcommand{\cS}{\mathcal S}
\newcommand{\Reg}{\operatorname{Reg}}

\newcommand{\ip}[2]{\langle #1,#2\rangle}
\newcommand{\N}{\mathcal N}

\newcommand{\norm}[1]{\left\lVert #1\right\rVert}

\newcommand{\polylog}{\operatorname{polylog}}

\newcommand{\method}{\textup{\textsc{SparseGauss}}}

\allowdisplaybreaks
\newif\ifanonymous
\anonymousfalse
\newcommand{\AuthorNames}{Yanyi Li, Wendao Wu, Haihan Zhang, Qinshi Wang, Chenheng Zhang, Chunyuan Zheng, Cong Fang, Haoxuan Li, Zhouchen Lin}
\newcommand{\AuthorAffiliation}{Peking University}
\newcommand{\Email}[1]{\href{mailto:#1}{\texttt{#1}}}
\newcommand{\AuthorBlock}{%
Yanyi Li$^{1,*}$\quad Wendao Wu$^{1,*}$\quad Haihan Zhang$^{1,*}$\\[0.20em]
Qinshi Wang$^{1,*}$\quad Chenheng Zhang$^{1,*}$\quad Chunyuan Zheng$^{1}$\\[0.20em]
Cong Fang$^{1,\dagger}$\quad Haoxuan Li$^{1,\dagger}$\quad Zhouchen Lin$^{1,\dagger}$\\[0.60em]
{\small $^{1}$\AuthorAffiliation}\\[0.35em]
{\small \Email{colombo.2026@outlook.com}}\\[-0.05em]
{\small $^*$Equal contribution.\quad $^\dagger$Corresponding authors.}}
\newcommand{\ResearchAgentSystem}{Colombo, our laboratory's internal auto-research system}
\title{\textbf{Dimension-Optimal Dynamic Regret in Exp-Concave Online Learning}}
\ifanonymous
\author{Anonymous Authors}
\hypersetup{pdfauthor={Anonymous Authors}}
\else
\author{\AuthorBlock}
\hypersetup{pdfauthor={\AuthorNames}}
\fi
\hypersetup{pdftitle={Dimension-Optimal Dynamic Regret in Exp-Concave Online Learning}}
\date{}

\begin{document}
\maketitle

\begin{abstract}
Tracking changing comparators is a central problem in online learning. For exp-concave losses, previous work established the $T^{1/3}P_T^{2/3}$ dependence on the horizon $T$ and Euclidean path length $P_T$ for fixed dimension, while the optimal cost of dimension remained unresolved. We resolve this question by establishing a sharp $d^{1/3}$ curved dimension factor, improving the $d^{2/3}$ factor of recent switching and interval reductions~\cite{WangEtAl2026Switching,WangZhaoZhou2026Interval}. To achieve this improvement, we develop a proper, full-information learner on compact convex subsets of $\mathbb R^d$ that adapts to the unknown path length and, under fixed dimension-independent diameter, gradient, and curvature parameters, guarantees
\[
\operatorname{Reg}_T(u_{1:T})
\le\widetilde O\!\left(d+\min\left\{
d^{1/3}T^{1/3}P_T^{2/3},\sqrt{TP_T},T\right\}\right).
\]
A matching lower bound on a diameter-one Euclidean ball establishes the minimax frontier up to logarithmic factors when $T\gtrsim d^2$, including the transition from the curved rate to larger path budgets. The lower bound already holds for losses that depend on only one coordinate per round, isolating an unavoidable cost of learning across directions. Our upper bound follows from square-root Gaussian KL control of comparator movement and a cancellation argument that pays for history restarts. The curved learner maintains $O(\log^2 T)$ Gaussian states, uses one gradient per round, and runs in time near-linear in $T$ under a Mahalanobis-projection oracle.
\end{abstract}

\begin{samepage}
\noindent\textbf{AI Usage.}
Nearly the entire research pipeline for this paper was carried out by \ResearchAgentSystem{}, powered by GPT-5.6 Sol. The system also conducted a Lean-backed article audit of the resulting manuscript. The authors subsequently reviewed and approved the mathematical claims, presentation, and formal artifacts, and take responsibility for the final manuscript. The complete Lean audit report and the system's technical report will be made public at a later date.
\par
\end{samepage}

\paragraph{Publication Intent and Recognition of Independent Work.} 
This preprint is released as a research output of our study of
automated scientific research, for public scrutiny and discussion.
As we do not intend to submit this manuscript to a journal or
conference, it has not undergone the full editorial refinement
and internal review we would normally require for submission.
The authors remain responsible for its contents and for correcting
any errors or inaccurate attribution.

We recognize that releasing these results may affect researchers
who have independently been pursuing the same questions, including
work that is not yet public.
If this applies to your work, we invite you to contact us at
\texttt{colombo.2026@outlook.com}.
A brief description of the overlap, together with any dated
materials you are comfortable sharing, would help us understand
its scope and chronology.
Such materials might include draft excerpts, presentation slides,
or revision records; a complete manuscript or unpublished proof
is not required to begin the conversation.

We will review these cases carefully and update citations,
acknowledgments, and accounts of independent or concurrent work
as appropriate, including revising priority claims where warranted.
Nonpublic materials will be treated confidentially and used only
to assess the overlap unless their authors authorize otherwise.
We will obtain explicit permission from the authors before
publicly discussing such materials or providing them to an
automated research system.


\section{Introduction}\label{sec:intro}
Online learning provides a principled framework for applications such as recommendation, portfolio selection, and tracking evolving networks, where decisions must be updated as data arrive~\cite{Hazan2022,HallWillett2013}. These sequential decision problems are commonly studied through online convex optimization (OCO): at round $t$, a learner chooses $w_t$ from a compact convex set $\cW\subset\mathbb R^d$ before observing a convex loss $f_t:\cW\to\mathbb R$. To evaluate its ability to track a changing benchmark, we measure the dynamic regret against a comparator sequence $u_{1:T}\in\cW^T$~\cite{Zinkevich2003}:
\begin{equation}\label{eq:intro-target}
\Reg_T(u_{1:T})=\sum_{t=1}^T\bigl[f_t(w_t)-f_t(u_t)\bigr].
\end{equation}
For general convex losses, adaptive methods achieve the optimal $O(\sqrt{T(1+P_T)})$ rate under fixed diameter and gradient bounds~\cite{ZhangEtAl2018,Cutkosky2020}, where the path length $P_T$ is defined as the total Euclidean distance traveled by the comparator sequence. Beyond this general convex setting, widely used prediction losses (e.g., squared and logistic losses on bounded prediction domains) possess additional curvature that permits faster learning~\cite{KorenLevy2015}. For such curved losses, Baby and Wang~\cite{BabyWang2021,BabyWang2022} established the sharper $T^{1/3}P_T^{2/3}$ temporal dependence, up to logarithmic factors for fixed dimension.

However, the model dimension also affects the cost of tracking, and its role depends on the geometry of the losses. Strong convexity supplies curvature in every direction and admits this fast rate without an explicit dimension factor under fixed Euclidean regularity parameters~\cite{WangEtAl2026Switching}. By contrast, exp-concavity permits curvature to concentrate in a subspace, suggesting a possible dimension cost for tracking across directions.

For exp-concave losses, Baby and Wang obtained curved-term factors of $d^{7/2}$~\cite{BabyWang2021} and $d^3$~\cite{BabyWang2022} in their respective settings, using $\ell_1$ path length.\footnote{Converting these published bounds to Euclidean path length via $\sum_{t=2}^T\|u_t-u_{t-1}\|_1\le\sqrt d\,P_T$ gives curved-term factors $d^{23/6}$ and $d^{10/3}$, respectively, under the original assumptions.} With Euclidean path length, Zhang et al.~\cite{ZhangZhaoSugiyama2025} obtain a factor $d$ through mixability. Recent switching-to-dynamic~\cite{WangEtAl2026Switching} and interval-to-dynamic~\cite{WangZhaoZhou2026Interval} reductions attain $d^{2/3}$. For proper learners, whose predictions remain in the decision set, these advances motivate the following question in full-information OCO:
\begin{researchquestion}
\raggedright
\textbf{Question.} \emph{What is the unavoidable dimension dependence of dynamic regret for general exp-concave losses?}
\end{researchquestion}

\paragraph{Dimension-optimal answer.}
We show that the curved dimension factor is $d^{1/3}$, attained by an implementable proper learner and required by a matching lower bound in the curved regime. We also characterize the transition to larger path budgets. With fixed dimension-independent regularity parameters and without knowing $P_T$, our learner guarantees
\begin{equation}\label{eq:intro-frontier}
\Reg_T(u_{1:T})\le\widetilde O\!\left[
d+\min\left\{d^{1/3}T^{1/3}P_T^{2/3},\sqrt{TP_T},T\right\}\right].
\end{equation}
The upper bound holds on every admissible compact convex domain. For $T\gtrsim d^2$, a matching lower bound establishes the minimax frontier over the normalized class of domains and losses (\Cref{cor:minimax}). \Cref{tab:comparison} summarizes the improvement over previous bounds in the curved regime.

\begin{table}[t]
\centering
\caption{Dynamic regret for proper exp-concave OCO on arbitrary compact convex domains, with Euclidean path length $P_T$. Diameter, gradient bounds, and curvature are fixed independently of $d$; logarithmic factors are suppressed. The displayed curved regime is $T\gtrsim d^2$ and $0\le P_T\lesssim T/d^2$. The matching lower bound is attained on a diameter-one ball.}
\label{tab:comparison}
\small
\begin{tabularx}{\textwidth}{@{}>{\raggedright\arraybackslash}p{0.25\textwidth}>{\centering\arraybackslash}p{0.24\textwidth}>{\centering\arraybackslash}p{0.18\textwidth}>{\raggedright\arraybackslash}X@{}}
\toprule
Work & Upper bound & Lower bound & Mechanism \\
\midrule
Zhang et al.~\cite{ZhangZhaoSugiyama2025} & $\widetilde O(d+dT^{1/3}P_T^{2/3})$ & --- & Gaussian fixed share; KL~projection \\
Wang et al.~\cite{WangEtAl2026Switching} & $\widetilde O(d+d^{2/3}T^{1/3}P_T^{2/3})$ & --- & Switching reduction with IRESET/ONS \\
Wang et al.~\cite{WangZhaoZhou2026Interval} & $\widetilde O(d+d^{2/3}T^{1/3}P_T^{2/3})$ & --- & Interval-to-dynamic reduction \\
\midrule
\multicolumn{1}{@{}>{\raggedright\arraybackslash}m{0.25\textwidth}}{\textbf{This Paper}\newline (\Cref{cor:minimax})} & \multicolumn{2}{>{\centering\arraybackslash}m{\dimexpr0.42\textwidth+2\tabcolsep\relax}}{$\boldsymbol{\widetilde\Theta(d+d^{1/3}T^{1/3}P_T^{2/3})}$} & \multicolumn{1}{>{\raggedright\arraybackslash}m{\dimexpr0.33\textwidth-6\tabcolsep\relax}@{}}{Sparse Gaussian histories; drawdown cancellation} \\
\bottomrule
\end{tabularx}
\end{table}

\begin{samepage}
\paragraph{An efficient upper bound.}
We develop \method{} from projected Gaussian histories~\cite{ZhangEtAl2026Portfolios}. This proper learner attains the $d^{1/3}$ curved factor without knowing the path length (\Cref{thm:curved}). The key is to charge comparator displacement through square-root Gaussian KL, reducing the movement coefficient from $d$ to $\sqrt d$; combining this control with rank-one smoothing yields the improved rate. Negative excess-regret prefixes pay for history restarts, so the learner needs only $O(\log^2 T)$ Gaussian states. Under a Mahalanobis-projection oracle, the curved algorithm uses one gradient per round, $\widetilde O(d^2)$ memory, and time near-linear in $T$. Combining it with adaptive projected-gradient learners yields the full upper bound in \Cref{cor:frontier}.
\par
\end{samepage}

\paragraph{A matching lower bound.}
We show that the $d^{1/3}$ factor is unavoidable, even for losses that depend on only one coordinate per round (\Cref{thm:lower}). Our construction places independent hidden signs across coordinates and time blocks, forcing the learner to repeatedly estimate changing coordinate means. It keeps the domain diameter, gradient bound, and exp-concavity constant independent of $d$, so the lower bound isolates an intrinsic cost of dimension. Varying the number of active coordinates also recovers the square-root and linear regimes, matching \eqref{eq:intro-frontier} up to logarithmic factors when $T\gtrsim d^2$.

\paragraph{Organization.}
\Cref{sec:related} reviews prior work, and \Cref{sec:results} defines the model and oracle assumptions. \Cref{sec:algorithm-overview} presents the algorithm, upper bounds, and their proof ideas. \Cref{sec:lower-main} establishes the lower bound and minimax characterization, followed by discussion in \Cref{sec:discussion}. \Cref{app:preliminaries} collects the preliminaries; \Cref{app:upper-proofs,sec:lower} provide the proofs for \Cref{sec:algorithm-overview,sec:lower-main}, including the finite-precision analysis.

\section{Related work}\label{sec:related}

\paragraph{Path-length regret and other measures of change.}
Zinkevich's projected-gradient analysis~\cite{Zinkevich2003} established path length as a measure of comparator movement. Ader~\cite{ZhangEtAl2018} aggregates learning rates to adapt to an unknown path budget, while Cutkosky~\cite{Cutkosky2020} connects dynamic regret with parameter-free and strongly adaptive learning. These methods supply the general convex benchmark and motivate the projected-gradient branch of our full frontier. Hall and Willett~\cite{HallWillett2013} instead measure deviation from a family of dynamical models, allowing a predictable moving path to have small tracking complexity. Besbes, Gur, and Zeevi~\cite{BesbesEtAl2015} study nonstationary stochastic optimization under variation of the loss sequence. Their benchmark and feedback model distinguish that problem from simultaneous regret guarantees for every comparator path. More recently, Jacobsen et al.~\cite{JacobsenEtAl2025} reduce dynamic regret to static regret in a suitable reproducing kernel Hilbert space, making the choice of comparator geometry part of the reduction.

\paragraph{Curvature, mixability, and variation-bounded prediction.}
Exp-concavity supports the $O(d\log T)$ static guarantee of ONS~\cite{HazanAgarwalKale2007}; mixability more broadly permits aggregation through a suitable prediction rule~\cite{Vovk2001}. Baby and Wang~\cite{BabyWang2021,BabyWang2022} derive curved dynamic rates by analyzing the offline comparator's KKT conditions. Their proper strongly convex bound has factor $d^{1/3}$ in terms of $P_T^{(1)}=\sum_t\|u_t-u_{t-1}\|_1$, and their proper exp-concave result treats a box domain. Since $P_T\le P_T^{(1)}\le\sqrt d\,P_T$, dimension comparisons must keep the path norm and regularity normalization explicit. Zhang et al.~\cite{ZhangZhaoSugiyama2025} replace KKT analysis with mixability and extend Gaussian aggregation to general exp-concave OCO. Zhang et al.~\cite{ZhangEtAl2026Portfolios} subsequently give a tractable implementation with projected Gaussian histories. Their Theorem~10 has a curved factor $d$ under bounded $\ell_1$ diameter, bounded $\ell_\infty$ gradients, and $\ell_1$ path length. Our advance combines sparse histories, square-root KL movement control, and drawdown cancellation to obtain the sharp Euclidean dimension dependence. The scalar variation problem also connects to online nonparametric regression through sequential entropy~\cite{RakhlinSridharan2014} and adaptive forecasting of total-variation-bounded signals~\cite{BabyWang2019}.

\paragraph{Adaptive histories and switching reductions.}
Fixed share tracks a changing best expert by continually redistributing weight~\cite{HerbsterWarmuth1998}. Follow-the-Leading-History~\cite{HazanSeshadhri2007} and strongly adaptive reductions~\cite{DanielyEtAl2015} organize learners born at different times; coin-betting methods improve interval guarantees with comparable time overhead~\cite{JunEtAl2017}. Gokcesu and Gokcesu~\cite{GokcesuGokcesu2021} develop near-linear tracking for mixable or exp-concave losses. RESET~\cite{PasterisEtAl2024} obtains optimal switching regret for all segmentations simultaneously, and IRESET~\cite{YangEtAl2026} extends logarithmic switching guarantees to curved OCO. Building on this interface, Wang et al.~\cite{WangEtAl2026Switching} approximate a moving comparator by an unbiased random sequence with controlled variance and expected switches. Their exp-concave instantiation gives a $d^{2/3}$ factor. Wang, Zhao, and Zhou~\cite{WangZhaoZhou2026Interval} obtain the same dimension factor through an interval-to-dynamic reduction on arbitrary compact convex domains; their Theorem~3 gives the bound compared in \Cref{tab:comparison}. Our improvement charges comparator movement directly through Gaussian potential.

\paragraph{Efficient exp-concave optimization and alternative reductions.}
Mahalanobis projection is a principal cost of second-order online algorithms. Mhammedi and Gatmiry~\cite{MhammediGatmiry2023} reduce this cost through quasi-Newton steps, and LightONS~\cite{WangZhaoZhou2026} preserves optimal static regret with total time $O(d^2T+d^\omega\sqrt{T\log T})$, where $\omega$ is the matrix-multiplication exponent. Transferring such projection savings to the Gaussian states in our algorithm would strengthen the computational result. Xie et al.~\cite{XieEtAl2026} use a discounted-to-dynamic reduction for FTRL-based linear and logistic regression, illustrating another route that exploits the structure of particular curved losses. Our arithmetic bound counts the projection cost explicitly and applies to any domain admitting the stated oracle. Classical rank-one eigensystem modification~\cite{BunchNielsenSorensen1978,GuEisenstat1994} motivates a possible acceleration; our conditional computational statement specifies the full spectral-update interface it requires.

\section{Problem setup}\label{sec:results}

At round $t\in[T]$, the learner chooses $w_t\in\cW$, then observes a differentiable convex loss $f_t:\cW\to\R$ and incurs $f_t(w_t)$.  For $u_{1:T}\in\cW^T$, define
\[
 \Reg_T(u_{1:T})=\sum_{t=1}^T[f_t(w_t)-f_t(u_t)],
 \qquad
 P_T(u)=\sum_{t=2}^T\norm{u_t-u_{t-1}}_2.
\]

The learner may depend on $d,T,D,G,\eta$ and the domain, but not on the comparator or its path length. Full information permits evaluation of the revealed loss and its gradient at additional points. We count these evaluations separately from projection and arithmetic costs. The curved learner queries one gradient at its own prediction; the outer ensemble used for the full frontier evaluates its base learners after observing $f_t$. All upper bounds hold for every realized admissible loss sequence. The lower bound uses an oblivious distribution of losses and applies to randomized learners. We write $\widetilde O$ for bounds hiding only logarithmic factors in $T$ when regularity parameters are fixed, and use natural logarithms unless a base is indicated.

\begin{assumption}[Geometry and curvature]\label[assumption]{ass:model}
The set $\cW\subset\R^d$ is nonempty, compact, and convex, with Euclidean diameter at most $D>0$.  Every $f_t$ is $\eta$-exp-concave on $\cW$ and satisfies
$\norm{\nabla f_t(w)}_2\le G$ for all $w\in\cW$.
\end{assumption}

\Cref{app:preliminaries} recalls the definition of exp-concavity, its curvature criterion, and representative losses satisfying this assumption. The analysis uses the following curvature and logarithmic parameters:
\begin{equation}\label{eq:gamma}
 \gamma=\min\left\{\frac\eta2,\frac1{4GD}\right\},
 \qquad
 \lambda=\frac\gamma2,
 \qquad
 \Lambda_T=1+\log(2T).
\end{equation}
The degenerate case $GD=0$ has zero regret and will be omitted.

The statistical assumptions above specify the loss class. To state the algorithm's computational cost on a general domain, we also make explicit the projection operation used to keep its predictions~feasible.

\begin{assumption}[Exact-arithmetic projection oracle]\label[assumption]{ass:exact-oracle}
The oracle returns an exact solution to
\[
 \min_{w\in\cW}(w-a)^\top A(w-a)
\]
for every $a\in\R^d$ and $A\succ0$, at arithmetic cost $\mathsf C_{\rm proj}(d)$.
\end{assumption}

This oracle measures the cost of enforcing the domain constraint in the geometry of each Gaussian state. It enters the upper-bound implementation; the lower bound applies to every learner, regardless of its computational resources.

The arithmetic bounds use the dense spectral model: a symmetric eigendecomposition and reconstruction are charged $O(d^3)$ operations. This is an idealized real-arithmetic convention. \Cref{sec:finite-precision} gives a separate rational implementation and its bit cost under an explicit approximate-projection oracle.

\section{Algorithm and upper bounds}\label{sec:algorithm-overview}
We first construct the curved learner, \method{}, and state its regret and computational guarantees. We then explain why its movement analysis gives the $d^{1/3}$ factor and how to extend the guarantee to all path budgets.

\subsection{A common rank-one surrogate}
After the master predicts $w_t\in\cW$ and observes $g_t=\nabla f_t(w_t)$, all histories use
\begin{equation}\label{eq:overview-surrogate}
\widetilde f_t(v)=\ip{g_t}{v-w_t}+\frac\gamma2\ip{g_t}{v-w_t}^2,
\qquad
\gamma=\min\left\{\frac\eta2,\frac1{4GD}\right\}.
\end{equation}
This surrogate dominates the original one-step regret and is itself $\gamma/2$-exp-concave.  Its Hessian is rank one, which is essential: smoothing a comparator by $\N(u_t,\tau^2I)$ costs exactly
\[
\frac{\gamma\tau^2}{2}\|g_t\|_2^2,
\]
with no ambient trace factor $d$. Thus the same surrogate both transfers regret back to the original losses and keeps the smoothing error independent of dimension; \Cref{lem:surrogate} proves the required domination and exp-concavity properties.

\subsection{Sparse Gaussian histories}
For dyadic scales $\tau_k=D2^{-k}$, $k=0,\ldots,\lfloor\log_2 T\rfloor$, we run a deterministic logarithmic active set of histories.  A history born at time $i$ and scale $k$ maintains a single Gaussian $\N(m_t,\Sigma_t)$ satisfying
\[
m_t\in\cW,
\qquad
\tau_k^2I\preceq\Sigma_t\preceq D^2I.
\]
The exp-concave master averages the means, so the actual prediction is proper.  Exponential tilting by \eqref{eq:overview-surrogate} preserves Gaussianity.  We then floor the covariance at $\tau_k^2I$ and project the mean onto $\cW$ in the resulting Mahalanobis metric.  Both operations decrease KL from every comparator Gaussian $\N(u,\tau_k^2I)$.

Birth times let the learner adapt to changes in the comparator, while the scale grid adapts to its unknown path length. \Cref{alg:sparsegauss} summarizes one round of the resulting learner. The active-start schedule and master weights are specified in \Cref{app:histories}, and the Gaussian updates in \Cref{app:gaussian}.

\begin{algorithm}[htbp]
\caption{\method: sparse Gaussian histories for the curved branch}
\label{alg:sparsegauss}
\begin{algorithmic}[1]
\Require Horizon $T$, scales $\tau_k=D2^{-k}$, projection oracle, prior mean $w_0\in\cW$
\State Initialize the streaming active-start schedule and one Gaussian $\N(w_0,D^2I)$ for each newborn scale
\For{$t=1,\ldots,T$}
  \State Collect the means $x_t^{i,k}$ of all active histories and predict $w_t=\sum_{i,k}p_t^{i,k}x_t^{i,k}$
  \State Observe $f_t$, evaluate $g_t=\nabla f_t(w_t)$, and form the common surrogate \eqref{eq:overview-surrogate}
  \State Update master weights by exponential weighting on $\widetilde f_t(x_t^{i,k})$
  \For{each active history $(i,k)$}
    \State Exponentially tilt its Gaussian by $\exp[-(\gamma/2)\widetilde f_t]$
    \State Floor the covariance at $\tau_k^2I$ and Mahalanobis-project the mean onto $\cW$
  \EndFor
  \State Prune expired starts; insert the new start $t+1$ at every scale and renormalize
\EndFor
\end{algorithmic}
\end{algorithm}

\subsection{Regret and computational guarantees}\label{sec:upper-guarantees}
The learner above runs all candidate scales simultaneously, so it adapts to the unknown path length. We state the resulting rate with fixed regularity parameters to make its dependence on $d$, $T$, and $P_T$ explicit. Recall that $\Lambda_T=1+\log(2T)$; the $O(\cdot)$ notation below hides constants, while retaining all logarithmic factors.

\begin{theorem}[Curved dynamic regret]\label[theorem]{thm:curved}
Under \Cref{ass:model,ass:exact-oracle}, fix $D,G,\eta>0$ independently of $d$ and $T$. The proper full-information learner \method{} does not require $P_T(u)$ and, for every comparator path with $P_T(u)\le cT/(\sqrt d\,\Lambda_T^2)$, guarantees
\begin{equation}\label{eq:curved-summary}
 \Reg_T(u_{1:T})
 \le O\!\left(d\Lambda_T^2
 +d^{1/3}\Lambda_T^{4/3}T^{1/3}P_T(u)^{2/3}\right).
\end{equation}
Here $c>0$ and the constant hidden by $O(\cdot)$ depend only on $D,G,\eta$.
The learner uses one gradient query per round, requires $O((d^3+\mathsf C_{\rm proj}(d))\log^2(2T))$ operations per round in the dense spectral model, and stores $O(d^2\log^2(2T))$ real numbers. If an additional primitive performs the complete rank-one update and covariance flooring in $O(d^2)$ operations, including reconstruction in the original coordinates, the $d^3$ term becomes $d^2$.
\end{theorem}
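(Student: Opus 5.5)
The plan is to reduce everything to the common rank-one surrogate and then analyze a single Gaussian history against a smoothed comparator, paying separately for the master's aggregation.

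\emph{Step 1 (surrogate reduction).} By \Cref{lem:surrogate}, $f_t(w_t)-f_t(u_t)\le-\widetilde f_t(u_t)$ with $\widetilde f_t(w_t)=0$. Hence it suffices to bound $\sum_t[\widetilde f_t(w_t)-\widetilde f_t(u_t)]$. The master is exponential weighting over $\gamma/2$-exp-concave losses evaluated at the history means. Exp-concavity with the averaged prediction gives master regret against any fixed history at most $O(\gamma^{-1}\log N)$, where $N$ is the prior weight mass. For an interval-restarted comparison this becomes $O(\gamma^{-1}\Lambda_T)$ per segment, and the sparse active-start schedule (logarithmic, with prior weights $\propto 1/\text{age}^2$) keeps this cost at $O(\Lambda_T)$ per segment.

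\emph{Step 2 (one history on one segment).} Fix a history at scale $\tau=\tau_k$ born at time $i$, and a segment $[i,j]$. The exponential-tilting update, with floor and Mahalanobis projection, is a KL-contractive Bayes step. Using the exp-concave mixability inequality, the mean's surrogate loss satisfies
\[
\widetilde f_t(m_t)\le \E_{\N(u_t,\tau^2I)}\widetilde f_t+\tfrac{2}{\gamma}\bigl[\KL(Q_t\Vert P_t)-\KL(Q_t\Vert P_{t+1})\bigr],
\]
with $Q_t=\N(u_t,\tau^2I)$. Both flooring and projection only decrease KL from such $Q_t$. The smoothing term equals $\widetilde f_t(u_t)+\frac{\gamma\tau^2}{2}\|g_t\|^2$, which carries no factor of $d$.

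The telescoping breaks when $u_t$ moves, leaving $\KL(Q_{t+1}\Vert P_{t+1})-\KL(Q_t\Vert P_{t+1})=\frac12[\|u_{t+1}-m\|^2_{\Sigma^{-1}}-\|u_t-m\|^2_{\Sigma^{-1}}]$. The key point is that this difference should be charged as $O(\|u_{t+1}-u_t\|\,\tau^{-1}\sqrt{\KL})$ plus lower order. By Cauchy–Schwarz in the $\Sigma^{-1}$ metric, $\|u_t-m\|_{\Sigma^{-1}}\le\sqrt{2\KL(Q_t\Vert P)}$, and $\KL\le O(d\Lambda_T)$ along the history since $\tau^2I\preceq\Sigma\preceq D^2I$. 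This gives a movement coefficient $\sqrt d/\tau$ instead of the naive $d$ or $D/\tau^2$.

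The initial KL is $\frac d2\log(D^2/\tau^2)+O(d)=O(d\Lambda_T)$. Summing over the segment,
\[
\text{seg.\ regret}\lesssim d\Lambda_T+\gamma\tau^2 G^2 L+\frac{\sqrt{d\Lambda_T}}{\tau}P_{\rm seg},
\]
where $L$ is the segment length.

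\emph{Step 3 (partition and tuning).} Split $[T]$ into $S$ segments and choose the scale $\tau$ on the dyadic grid. The total is $S\,d\Lambda_T^2+\tau^2T+\sqrt{d\Lambda_T}P_T/\tau$. Optimizing $\tau$ gives $\tau\asymp(\sqrt d P_T/T)^{1/3}$ and the middle terms become $d^{1/3}T^{1/3}P_T^{2/3}\Lambda^{1/3}$. The condition $P_T\le cT/(\sqrt d\Lambda_T^2)$ guarantees $\tau\le D$, so $\tau$ lies on the grid. One then needs to avoid paying $S\cdot d$ for restarts.

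\emph{Main obstacle: the drawdown cancellation.} Restarting histories is needed to reset KL growth; otherwise the $\sqrt{\KL}$ bound degrades. My first attempt would be to take $S=O(1)$ effectively: keep one long history and let the $\sqrt{\KL}$ accumulation be controlled by a potential argument. When the excess regret of a prefix is negative, that negative surplus pays for a new history's $d\Lambda_T$ cost; one restarts only at times when the cumulative comparator regret has drawn down by $d\Lambda_T$. A greedy segmentation by drawdown times then shows the total restart cost is absorbed, leaving $O(d\Lambda_T^2)$ plus the tuned curved term.

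\emph{Computation.} Each history stores a $d\times d$ covariance, giving $O(d^2\log^2T)$ memory. Each update is a rank-one precision update followed by an eigendecomposition for the floor and a projection oracle call. This costs $O(d^3+\mathsf C_{\rm proj})$ per history and $O(\log^2T)$ histories, or $O(d^2)$ given the rank-one primitive.
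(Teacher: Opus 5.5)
There is a genuine gap in Step~2, and it undermines your plan for the restarts. You claim that $\KL(Q_t\Vert P_t)=O(d\Lambda_T)$ along a history because $\tau^2I\preceq\Sigma_t\preceq D^2I$. This is false. The sandwich bounds only the covariance part $\tfrac12[\tau^2\operatorname{tr}\Sigma_t^{-1}-d+\log(\det\Sigma_t/\tau^{2d})]$. The mean part $\tfrac12\|u_t-m_t\|^2_{\Sigma_t^{-1}}$ can be of order $D^2/\tau^2$, for instance once the history has contracted to variance $\tau^2$ along a direction and the comparator then moves along it. At the balancing scale $\tau\asymp(\sqrt d\,P_T/T)^{1/3}$ this is polynomially larger than $d\Lambda_T$.

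The potential is controlled only dynamically. You have the one-step inequality $F_{t+1}(u_t)\le F_t(u_t)-\lambda r_t$ and the square-root movement bound $\sqrt{F_{t+1}(v)}\le\sqrt{F_{t+1}(u)}+\|v-u\|_2/(\sqrt2\,\tau)$. Iterating these through the shifted energy $F_t(u_t)+\lambda(S_{t-1}+H)$, which absorbs negative excess down to $-H$, gives on a stretch of path length $P$ the cost $\lambda^{-1}\bigl(\sqrt{B_\tau+\lambda H}+P/(\sqrt2\,\tau)\bigr)^2-H$. Its quadratic part $P^2/(2\lambda\tau^2)$ is missing from your segment bound. With one long history this term is $P_T^2/(2\lambda\tau^2)$, and balancing it against $\gamma G^2T\tau^2$ yields only an $O(G\sqrt T\,P_T)$ bound, not the curved rate. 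Restarting only at drawdown times does not fix this, because the potential also grows when the comparator moves while the history's excess stays nonnegative.

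Consequently, the restarts you try to avoid are exactly where the $\sqrt d/\tau$ coefficient comes from. The paper partitions the path (\Cref{lem:partition}) into at most $1+P_T/V_\tau$ intervals of internal variation at most $V_\tau=\tau\sqrt{C_\tau}/(512L)$, where $C_\tau=B_\tau+\lambda H_\tau\asymp L\,\overline B_\tau$. On such an interval $\bigl(\sqrt{C_\tau}+V_\tau/(\sqrt2\,\tau)\bigr)^2=(1+O(1/L))\,C_\tau$, so the quadratic term is absorbed. This is the only sense in which a bound like yours, $F_t(u_t)=O(L\,d\Lambda_T)$, holds. Each interval then pays a fresh static cost $H_\tau/64=2L(a+B_\tau/\lambda)$. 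The resulting movement term $\frac{H_\tau}{64}\cdot\frac{P_T}{V_\tau}\lesssim L^{3/2}\sqrt{\overline B_\tau}\,P_T/(\gamma\tau)$ amounts, up to logarithms, to about $P_T/(\tau\sqrt d)$ restarts, each costing about $d$.

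Only the drawdown-triggered restarts inside an interval are cancelled. Your greedy version of that step also ignores the sparse schedule: the history born right after a crossing need not survive to the end of the interval. \Cref{lem:low-variation} instead does three things. It uses (H3) to pick a history born in the first half of the remaining interval. It recurses on the uncovered left part. It bounds the recursion tree by $N_{\rm good}\le L(2N_{\rm bad}+1)$. This is why the drawdown threshold must be $H_\tau=128LA_\tau$, a factor $L$ above a single restart cost, rather than of order $d\Lambda_T$.
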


The $d^{1/3}$ factor follows by optimizing the movement--smoothing tradeoff. The path condition ensures that the balancing scale does not exceed the domain diameter. \Cref{thm:curved-detailed} gives the full dependence on $D,G,\eta$ and the clipped-scale bound valid for every path. The next corollary removes the path restriction by adding projected-gradient learners.

At any time there are $O(\log T)$ active starts and $O(\log T)$ scales, hence $O(\log^2T)$ Gaussian states. Each state requires one rank-one precision update, covariance flooring, and one Mahalanobis projection. Dense spectral computation gives the stated $O(d^3)$ cost. Classical rank-one eigensystem methods~\cite{BunchNielsenSorensen1978,GuEisenstat1994} motivate the additional update primitive; the conditional $O(d^2)$ cost includes the full change of basis and covariance reconstruction. Storing one Gaussian per history keeps the total state count logarithmic in the number of starts and scales.

These state counts give the per-round cost in \Cref{thm:curved}, hence $O(T(d^3+\mathsf C_{\rm proj}(d))\log^2(2T))$ arithmetic operations over the horizon. A polynomial-bit implementation is given in \Cref{sec:finite-precision}.

\subsection{Why the dimension factor improves}\label{sec:upper-analysis}
The proof of \Cref{thm:curved} has three steps: control displacement within one history, pay for history restarts, and optimize the smoothing scale. The central obstacle is that a moving comparator prevents the usual KL potential from telescoping directly.
For a fixed history and scale, let
\[
F_t(u)=\KL\!\left(\N(u,\tau^2I)\,\|\,\N(m_t,\Sigma_t)\right).
\]
For a fixed comparator, the one-step inequality in \Cref{lem:one-step} telescopes. Comparator movement introduces an extra term, which \Cref{lem:root-movement} controls through the square root of the potential:
\[
\sqrt{F_{t+1}(v)}\le \sqrt{F_{t+1}(u)}+\frac{\|v-u\|_2}{\sqrt2\,\tau}.
\]
A single history has no time-uniform fixed-share envelope.  Let $r_t$ be its surrogate excess against the Gaussian comparator and $S_q=\sum_{t=i}^q r_t$.  If $S_q\ge-H$ throughout an interval, the shifted energy
\[
E_t=F_t(u_t)+\frac\gamma2(S_{t-1}+H)
\]
is nonnegative and controls movement throughout the interval (\Cref{lem:drawdown}). If $S_q<-H$ for the first time, that negative prefix pays the master cost and the recursion restarts on the remaining suffix. The streaming-history property ensures that every interval exposes a fresh history in its first half. Deleting negative nodes leaves $O(\log T)$ positive nodes per component. This cancellation gives the low-variation interval guarantee in \Cref{lem:low-variation}; it is what allows sparse histories to retain the improved movement cost.

Finally, partition the comparator path into low-variation intervals and sum their costs. Adding the rank-one smoothing error yields the fixed-scale bound of \Cref{thm:fixed-scale}:
\[
\Reg_T(u_{1:T})
\lesssim
\frac{d\,\polylog T}{\gamma}
+\frac{\sqrt d\,\polylog T}{\gamma\tau}P_T
+\gamma G^2T\tau^2.
\]
The second term is the cost of movement and the third is the smoothing error. Balancing them produces $d^{1/3}T^{1/3}P_T^{2/3}$; \Cref{lem:endpoints} handles scales whose unconstrained optimizer falls outside $[D/T,D]$. This completes the route from the local potential inequalities to \Cref{thm:curved}.

\subsection{Extension to all path budgets}\label{sec:full-upper}
For larger path budgets, projected-gradient learners can give a smaller regret than the curved branch. We combine \method{} with a logarithmic grid of their learning rates and use exp-concavity to compete with the best branch. This yields the next corollary, with additional gradient and projection calls because each base learner is evaluated at its own prediction.

\begin{corollary}[Full upper frontier]\label[corollary]{cor:frontier}
Under \Cref{ass:model,ass:exact-oracle}, fix $D,G,\eta>0$ independently of $d$ and $T$. A proper full-information learner that does not require $P_T(u)$ guarantees
\begin{equation}\label{eq:full-frontier}
 \Reg_T(u_{1:T})
 \le O\!\left(d\Lambda_T^2+\min\left\{
 d^{1/3}\Lambda_T^{4/3}T^{1/3}P_T(u)^{2/3},\,
 \sqrt{T(1+P_T(u))},\,T\right\}\right).
\end{equation}
for every comparator path. The constant hidden by $O(\cdot)$ depends only on $D,G,\eta$. The learner uses $O(\log T)$ loss evaluations, gradient evaluations, and Euclidean projection calls per round in addition to the curved branch's internal work.
\end{corollary}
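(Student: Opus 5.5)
The plan is to build an exp-concave meta-aggregation over a small set of base learners and to reduce the corollary to \Cref{thm:curved} plus standard adaptive projected-gradient bounds. The base learners are: (i) \method{}, which handles the curved branch; (ii) projected online gradient descent with step sizes $\mu_j=D2^{j}/(G\sqrt T)$ for $j=0,\ldots,\lceil\tfrac12\log_2 T\rceil$, which gives $O(\log T)$ learners each using one Euclidean projection per round. The master runs exponential weights with rate $\eta$ on the original losses, since every $f_t$ is $\eta$-exp-concave. It plays the weighted average of base predictions, which is proper by convexity of $\cW$. Exp-concave aggregation then gives, for every base learner $b$,
\[
\sum_t f_t(w_t)-\sum_t f_t(w_t^b)\le \frac{\log N}{\eta}=O(\log\log T),
\]
with $N=O(\log T)$ experts. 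Evaluating each base loss $f_t(w_t^b)$, and each base gradient for the OGD learners, accounts for the stated $O(\log T)$ extra calls per round.

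Next I will bound each branch against an arbitrary path. For OGD with step $\mu$, Zinkevich's dynamic analysis gives regret at most
\[
\frac{D^2+2DP_T}{2\mu}+\frac{\mu G^2T}{2}+O(D^2/\mu).
\]
The grid contains a step within a factor $2$ of $\mu^*=D\sqrt{(1+P_T/D)}/(G\sqrt T)$, clipped to the grid range, because $P_T\le DT$. This yields $O(DG\sqrt{T(1+P_T)})$. The trivial bound $\Reg_T\le GDT$ holds for any proper learner, so the $T$ term in the minimum is automatic. For the curved branch I use \Cref{thm:curved} when $P_T\le cT/(\sqrt d\Lambda_T^2)$.

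It remains to handle paths that violate this condition, and I expect this to be the main (though mild) obstacle. There are two ways to do it. The first is to invoke the clipped-scale bound of \Cref{thm:curved-detailed}, which is valid for every path. The second, simpler, route is to show that the minimum is then attained by another term. If $P_T> cT/(\sqrt d\Lambda_T^2)$, then
\[
d^{1/3}T^{1/3}P_T^{2/3}\Lambda_T^{4/3}\gtrsim \sqrt{TP_T}
\]
holds exactly when $d^{2/3}P_T^{1/3}\Lambda_T^{8/3}\gtrsim T^{1/3}$, i.e.\ when $P_T\gtrsim T/(d^2\Lambda_T^{8})$. This is implied by the violated condition for $d\ge1$, with constants absorbed. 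So in this regime the minimum is, up to constants, $\min\{\sqrt{T(1+P_T)},T\}$, which the OGD branch and the trivial bound already deliver.

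Combining the three cases with the $O(\log\log T)$ master cost, which is dominated by $d\Lambda_T^2$, gives \eqref{eq:full-frontier}, with constants depending only on $D,G,\eta$. The only care needed is checking that the exponential-weights master is legitimate on the original exp-concave losses with real-valued proper predictions. This follows from concavity of $\exp(-\eta f_t)$ and Jensen's inequality, exactly as in the standard proof.
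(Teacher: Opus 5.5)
Your proposal is correct and follows essentially the paper's route: it uses the same ensemble of \method{}, a geometric grid of projected-gradient step sizes, and an $\eta$-exp-concave exponential-weights master on the original losses, and it settles the out-of-regime case by showing that the curved expression dominates $\sqrt{TP_T}$ there (the paper's $G\sqrt{TDP_T}\le Q_T$ for $P_T\ge P_0$). The one detail you skip is $P_T<1$ outside the curved regime, where $\sqrt{TP_T}$ no longer controls $\sqrt{T(1+P_T)}$; there the violated condition forces $T\lesssim\sqrt d\,\Lambda_T^2$, so the trivial bound $GDT$ is absorbed by $d\Lambda_T^2$ (the paper handles the same term via $GD\sqrt T\le\sqrt{AQ_T}$).
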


The first branch in \eqref{eq:full-frontier} agrees with \eqref{eq:curved-summary}; the other branches control larger path budgets and unrestricted movement. The logarithmic aggregation cost is absorbed by the additive term. \Cref{cor:frontier-detailed} records the full parameter dependence, and \Cref{app:frontier} proves it from \Cref{thm:curved-detailed} and the projected-gradient bound in \Cref{lem:dogd}.

\section{Lower bounds and minimax optimality}\label{sec:lower-main}
We now ask whether any learner can improve the dimension dependence of the upper bound. The construction below uses the same Euclidean path length and fixes the diameter, gradient bound, and exp-concavity constant independently of $d$. Thus any remaining dimension factor reflects the difficulty of tracking the comparator.

\subsection{A matching lower bound}
Each loss reveals information about only one coordinate. Although the learner observes the entire loss function, the other coordinates remain unobserved on that round. The following theorem turns this restriction into a lower bound that applies even to randomized learners.

\begin{theorem}[Matching normalized lower frontier]\label[theorem]{thm:lower}
There are universal constants $c,c_0>0$ such that the following holds.  Let
\[
 \cW_d=\{w\in\R^d:\norm w_2\le1/2\}
\]
and let the adversary use only the losses
\begin{equation}\label{eq:hard-loss}
 f_{j,y}(w)=\frac12(w_j-y)^2,
 \qquad j\in[d],\quad y\in\{-1,+1\}.
\end{equation}
This class has diameter $D=1$, gradient bound $G\le3/2$, and exp-concavity coefficient at least $4/9$, uniformly in $d$.  If $T\ge c_0d^2$, then for every $P\ge0$,
\begin{equation}\label{eq:lower-frontier}
 \inf_{\mathsf A}\sup_{\substack{f_{1:T}\text{ of the form }\eqref{eq:hard-loss}\\
 u_{1:T}\in\cW_d^T,\ P_T(u)\le P}}
 \E_{\mathsf A}\Reg_T(u_{1:T})
 \ge
 c\left[
 d+\min\left\{
 d^{1/3}T^{1/3}P^{2/3},\sqrt{TP},T
 \right\}
 \right].
\end{equation}
\end{theorem}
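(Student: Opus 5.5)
We prove the regularity claims directly, then obtain all three branches of \eqref{eq:lower-frontier} and the additive $d$ from one randomized construction. The construction has $k$ active coordinates and $B$ time blocks, with independent hidden signs. Only $k$ and $B$ change from regime to regime.

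\emph{Regularity.} Any two points of $\cW_d$ are at distance at most $1$. The gradient is $(w_j-y)e_j$, whose norm is at most $1/2+1=3/2$. Since $f_{j,y}$ has second derivative $1$ along $e_j$ and squared slope at most $9/4$, the criterion $\nabla^2 f\succeq\eta\,\nabla f\nabla f^\top$ holds with $\eta=4/9$.

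\emph{Construction.} Fix $k\le d$ and $B\ge1$, and split $[T]$ into $B$ consecutive blocks of length $L=\lfloor T/B\rfloor$. Within each block, rounds cycle deterministically through coordinates $1,\dots,k$, so every coordinate appears $n\approx L/k$ times per block. Set $\delta=1/(2\sqrt n)$ and draw independent signs $\epsilon_{b,j}\in\{\pm1\}$. At a round of block $b$ with coordinate $j$, the adversary draws $y\in\{\pm1\}$ independently with mean $\mu_{b,j}=\delta\epsilon_{b,j}$. The comparator is $u_t=\delta\sum_{j\le k}\epsilon_{b,j}e_j$ on block $b$; it lies in $\cW_d$ whenever $k\delta^2\le1/4$, which holds once $n\ge k$.

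\emph{Regret against this comparator.} Because $y^2=1$ and $\E[y\mid\text{past}]=\mu_{b,j}$, each round contributes exactly
\[
\E\bigl[f_t(w_t)-f_t(u_t)\mid\text{past}\bigr]=\tfrac12\bigl(w_{t,j}-\mu_{b,j}\bigr)^2 .
\]
This is at least $\delta^2/2$ whenever $\operatorname{sign}(w_{t,j})\ne\epsilon_{b,j}$.

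\emph{Path length.} Each block change moves the comparator by at most $2\delta\sqrt k$, so $P_T(u)\lesssim B\delta\sqrt k\asymp Bk/\sqrt L\asymp B^{3/2}k/\sqrt T$.

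The key step, and the main obstacle, is an Assouad-type bound that holds for an adaptive randomized learner. Fix a round $t$ in block $b$ at coordinate $j$. Flipping $\epsilon_{b,j}$ changes the law of the learner's history before $t$ only through the at most $n$ observations of coordinate $j$ in block $b$. Independence across blocks and coordinates, and independence of the learner's internal randomness, make the other observations identically distributed under both hypotheses. The KL divergence between $n$ draws of Rademacher variables with means $\pm\delta$ is $O(n\delta^2)=O(1)$. By Pinsker, the two induced laws of $w_{t,j}$ are at total variation distance at most a constant $<1$. Averaging over the uniform sign, each round errs in sign with probability at least a constant and therefore costs $\gtrsim\delta^2$. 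Summing over the $n$ rounds of each of the $kB$ pairs $(b,j)$ gives $\E\Reg_T\gtrsim kBn\delta^2\asymp kB$. Since the supremum over deterministic loss and comparator sequences dominates this average, the bound transfers to \eqref{eq:lower-frontier}. What must be checked carefully is that the comparator's dependence on the hidden signs is harmless; it is, because we lower bound an expectation of the per-instance regret.

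It remains to choose parameters, keeping $n\ge4k$ and rounding to integers.
\begin{itemize}
\item[(i)] \emph{Small path, $P\lesssim d/\sqrt T$.} Take $k=d$ and $B=1$. This needs $T\ge4d^2$, which is where $T\ge c_0d^2$ enters, and gives regret $\gtrsim d$.
\item[(ii)] \emph{Curved regime, $d/\sqrt T\lesssim P\lesssim T/d^2$.} Take $k=d$ and $B\asymp(P\sqrt T/d)^{2/3}$. This gives path $\le P$, keeps $L\gtrsim d^2$, and gives regret $\asymp dB=d^{1/3}T^{1/3}P^{2/3}$.
\item[(iii)] \emph{Square-root regime, $T/d^2\lesssim P\lesssim T$.} Take $k\asymp\sqrt{T/P}\le d$ and $B\asymp T/k^2$, so that $L\asymp k^2$. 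The path is $\asymp B^{3/2}k/\sqrt T\asymp T/k^2\asymp P$, and the regret is $kB\asymp T/k\asymp\sqrt{TP}$.
\item[(iv)] \emph{Large path, $P\gtrsim T$.} Take $k=1$ and $B\asymp T$, giving regret $\gtrsim T$.
\end{itemize}
In each range the achieved bound is within constants of $d+\min\{d^{1/3}T^{1/3}P^{2/3},\sqrt{TP},T\}$. The additive $d$ follows from case (i) applied at any $P$, since a smaller path budget is also feasible. Taking the maximum of the applicable cases proves \eqref{eq:lower-frontier}.
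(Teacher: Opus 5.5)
Your proposal is correct and follows essentially the same route as the paper: independent hidden signs on $k$ active coordinates across time blocks, a Pinsker-based two-point argument per block--coordinate pair, $k=d$ for the curved branch, $k\asymp\sqrt{T/P}$ for the square-root branch, large path budgets for the linear branch, and a stationary instance for the additive $d$. The differences are only in parametrization: you fix the signal level through the information budget $n\delta^2=1/4$ and let the number of blocks determine the path, whereas the paper sets the amplitude $a=P/(4K)$ from the path budget and then checks $n\theta^2\le1/32$. Two small details you left implicit are also routine: the rounds beyond $B\lfloor T/B\rfloor$ can be appended to the last block, since they have nonnegative conditional excess, and gaps between regimes at the constant-factor boundaries are closed by monotonicity in $P$.
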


\paragraph{Why the construction is hard.}
In each time block, the comparator encodes independent hidden signs across a chosen number $m$ of coordinates. Noisy labels reveal too little information to estimate those signs accurately before the next block. \Cref{lem:two-point} quantifies the one-coordinate prediction error, and \Cref{lem:effective-lower} combines these errors while respecting the Euclidean norm and path constraints.

Using all $d$ coordinates gives the curved factor $d^{1/3}$. As the path budget grows, choosing fewer active coordinates yields the square-root branch, followed by linear saturation. The stationary construction in \Cref{lem:static-lower} supplies the additive $d$ term even when the comparator never moves. The complete construction and the choices of $m$ are given in \Cref{sec:lower}.

\subsection{The minimax frontier}
The upper bound applies to every admissible compact convex domain, whereas the lower bound already holds on a Euclidean ball. Together they determine the worst-case regret over the normalized class. The next corollary combines the two results and records the condition on the horizon under which they match.

\begin{corollary}[Computational minimax closure, up to logarithms]\label[corollary]{cor:minimax}
For $D=1$, $G=3/2$, and $\eta=4/9$, take the worst case over compact convex domains of diameter at most one and losses satisfying \Cref{ass:model}. For horizons $T\gtrsim d^2$, the proper full-information minimax dynamic regret with path budget $P_T$ has, up to polylogarithmic factors, the frontier
\[
 \boxed{
 d+\min\{d^{1/3}T^{1/3}P_T^{2/3},\sqrt{TP_T},T\}.
 }
\]
With a Mahalanobis-projection oracle, the upper frontier is attained in near-linear time in $T$.
\end{corollary}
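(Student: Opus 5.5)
The plan is to sandwich the minimax value between \Cref{cor:frontier} and \Cref{thm:lower}, with the normalization $D=1$, $G=3/2$, $\eta=4/9$ fixed.

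\emph{Upper side.} Fix any compact convex domain of diameter at most one and any loss sequence satisfying \Cref{ass:model} with these constants. Run the learner of \Cref{cor:frontier}; it is proper, uses full information, and does not need $P_T(u)$. Take any comparator with $P_T(u)\le P_T$. The right side of \eqref{eq:full-frontier} is nondecreasing in the path length, so the regret is at most
\[
O\bigl(d\Lambda_T^2+\min\{d^{1/3}\Lambda_T^{4/3}T^{1/3}P_T^{2/3},\sqrt{T(1+P_T)},T\}\bigr).
\]
Two simplifications remain. First, $\Lambda_T=O(\log T)$, so those factors are polylogarithmic. Second, $\sqrt{T(1+P_T)}\le\sqrt T+\sqrt{TP_T}$, so I need to absorb the extra $\sqrt T$.
\begin{itemize}
\item If $P_T\ge1$, then $\sqrt T\le\sqrt{TP_T}$.
\item If $P_T<1$, then the curved branch alone gives $d^{1/3}T^{1/3}P_T^{2/3}\le\sqrt{TP_T}$ exactly when $P_T\le T/d^2$. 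This holds whenever $T\ge d^2$, so the minimum is attained by the curved term or $T$, with no $\sqrt T$ needed.
\end{itemize}
Hence the upper bound is $\widetilde O(d+\min\{d^{1/3}T^{1/3}P_T^{2/3},\sqrt{TP_T},T\})$, uniformly over domains and losses. For the computational claim, \Cref{thm:curved} gives $O((d^3+\mathsf C_{\rm proj}(d))\log^2 T)$ work per round for the curved branch. \Cref{cor:frontier} adds $O(\log T)$ evaluations and Euclidean projections per round. A Euclidean projection is a special case of the Mahalanobis oracle (take $A=I$). Summing over rounds gives total time $T\cdot\mathrm{poly}(d)\cdot\polylog T$, which is near-linear in $T$.

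\emph{Lower side.} The ball $\cW_d$ has diameter one, and the losses \eqref{eq:hard-loss} have $G\le3/2$ and exp-concavity at least $4/9$. They therefore belong to the normalized class, after noting that $\eta$-exp-concavity implies $\eta'$-exp-concavity for every $\eta'\le\eta$. So the worst case over the class is at least the supremum in \eqref{eq:lower-frontier}. For $T\ge c_0d^2$, \Cref{thm:lower} gives a lower bound of order $d+\min\{\cdot\}$ for every learner, including randomized ones and hence every proper learner.

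The main subtlety is matching the regimes of the two results. The horizon condition of \Cref{thm:lower} sets the constant hidden in $T\gtrsim d^2$. The upper bound has no path restriction, because \Cref{cor:frontier} holds for every path. The logarithmic discrepancies, $\Lambda_T^2$ on the additive $d$ term and $\Lambda_T^{4/3}$ on the curved term, are exactly what the phrase ``up to polylogarithmic factors'' in the statement allows.
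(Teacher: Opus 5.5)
Your proposal is correct and follows the paper's proof. The lower side comes from \Cref{thm:lower}, since the ball and coordinate-square losses lie in the normalized class. The upper side comes from \Cref{cor:frontier} with the same split at $P_T=1$, using $(d^2P_T/T)^{1/6}\le1$ to absorb the extra $\sqrt T$ when $P_T<1$. Your observation that a Euclidean projection is the $A=I$ case of the Mahalanobis oracle, giving $T\cdot\mathrm{poly}(d)\cdot\polylog T$ total work, also fills in the computational claim that the paper's proof leaves implicit.
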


\begin{proof}
The lower bound follows from \Cref{thm:lower}, since its ball and coordinate-square losses belong to the stated class. For the upper bound, \Cref{cor:frontier} gives the curved branch, $\sqrt{T(1+P_T)}$, and the linear cap, together with an additive $d$, up to logarithms. For $P_T\ge1$, the square-root branch is within a constant factor of $\sqrt{TP_T}$. For $0<P_T<1$ and $T\ge d^2$, the ratio of the curved term to $\sqrt{TP_T}$ is $(d^2P_T/T)^{1/6}\le1$, so the curved term is already the smaller branch. The case $P_T=0$ follows directly from the static term. This proves the displayed normalized frontier.
\end{proof}

\section{Discussion and conclusion}\label{sec:discussion}
Sparse Gaussian histories attain the sharp $d^{1/3}$ curved dimension factor, and coordinate-square losses show why this cost is unavoidable. The upper analysis couples rank-one smoothing with square-root KL movement control; negative excess-regret prefixes pay for the history restarts needed to maintain only logarithmically many active states. Combining the curved learner with projected-gradient learners gives the full minimax frontier when $T\gtrsim d^2$.

The arithmetic result depends on the cost of Mahalanobis projection onto the domain. Projection deferral, as in LightONS~\cite{WangZhaoZhou2026}, is a natural direction for reducing that cost while preserving the comparator-KL inequality. A second question is whether one gradient at the final prediction can support the entire frontier: the current curved learner has this property, whereas the full-frontier ensemble evaluates its bases separately. The role of more restrictive domain or loss geometry in reducing the effective dimension also remains open.

\clearpage
\appendix
\section{Preliminaries}\label{app:preliminaries}
We collect the curvature facts used in the upper and lower bounds, together with examples that clarify the loss class in \Cref{ass:model}.

\subsection{Exp-concavity and directional curvature}\label{app:exp-concavity}
A function $f:\cW\to\R$ is $\eta$-exp-concave, for $\eta>0$, if $w\mapsto e^{-\eta f(w)}$ is concave on the convex set $\cW$~\cite{Hazan2022,HazanAgarwalKale2007}. For differentiable $f$, the supporting-hyperplane inequality for this concave function gives
\begin{equation}\label{eq:prelim-tangent}
 e^{-\eta f(v)}\le e^{-\eta f(w)}
 \bigl(1-\eta\ip{\nabla f(w)}{v-w}\bigr),
 \qquad v,w\in\cW.
\end{equation}
This is the first-order inequality used to obtain surrogate domination in \Cref{lem:surrogate}. Exp-concavity with parameter $\eta$ also implies exp-concavity with every parameter $\eta'\in(0,\eta]$, since $x\mapsto x^{\eta'/\eta}$ is increasing and concave on $(0,\infty)$.

For a $C^2$ loss on an open convex domain, the equivalent curvature criterion is
\begin{equation}\label{eq:prelim-hessian}
 \nabla^2 f(w)\succeq\eta\nabla f(w)\nabla f(w)^\top.
\end{equation}
Differentiating the exponential gives
\[
 \nabla^2 e^{-\eta f(w)}
 =\eta e^{-\eta f(w)}
 \bigl(\eta\nabla f(w)\nabla f(w)^\top-\nabla^2 f(w)\bigr).
\]
The same criterion suffices on a compact convex subset when the loss has a smooth extension to a neighborhood; on a lower-dimensional domain it is enough to check directions in its affine hull. We use this criterion to verify the smooth examples below; the main results assume differentiability and exp-concavity as stated in \Cref{ass:model}.

Strong convexity requires curvature in every direction: in the smooth case, $\nabla^2 f(w)\succeq\sigma I$ for some $\sigma>0$. If also $\|\nabla f(w)\|_2\le G$ with $G>0$, then \eqref{eq:prelim-hessian} holds for $\eta=\sigma/G^2$. Exp-concavity allows the Hessian to be rank one, as the examples below illustrate, so it need not imply strong convexity in the ambient space.

\subsection{Representative exp-concave losses}\label{app:examples}
For a linear prediction $z=x^\top w$ and a scalar loss $\ell(z)$, the gradient and Hessian of $f(w)=\ell(x^\top w)$ are $\ell'(z)x$ and $\ell''(z)xx^\top$. Thus the scalar condition $\ell''(z)\ge\eta(\ell'(z))^2$ verifies \eqref{eq:prelim-hessian} without a dimension~factor.

\paragraph{Squared prediction loss.}
For $f(w)=\tfrac12(x^\top w-y)^2$ on a domain where $|x^\top w-y|\le B$ with $B>0$, the scalar derivatives are $\ell'(z)=z-y$ and $\ell''(z)=1$. The loss is therefore $1/B^2$-exp-concave. Its gradient norm is at most $B\|x\|_2$. The coordinate-square losses in \Cref{thm:lower} take $x=e_j$ and $B=3/2$, giving the coefficient $4/9$ used there.

\paragraph{Logistic prediction loss.}
For $y\in\{-1,1\}$ and $f(w)=\log(1+e^{-yx^\top w})$, the derivatives satisfy
\[
 \ell'(z)=\frac{-y}{1+e^{yz}},\qquad
 \ell''(z)=\frac{e^{yz}}{(1+e^{yz})^2},\qquad
 \frac{\ell''(z)}{(\ell'(z))^2}=e^{yz}.
\]
On a domain with $|x^\top w|\le B$, this loss is $e^{-B}$-exp-concave and has gradient norm at most $\|x\|_2$. The bounded prediction range supplies a uniform positive exp-concavity parameter.

\paragraph{Negative logarithmic return.}
For a return vector $r$ and a domain on which $r^\top w>0$, the portfolio loss $f(w)=-\log(r^\top w)$ is $1$-exp-concave because $e^{-f(w)}=r^\top w$ is affine~\cite{Hazan2022}. If $r^\top w\ge b>0$, its gradient norm is at most $\|r\|_2/b$. The lower bound on returns is needed for the uniform gradient bound in \Cref{ass:model}.

\subsection{Aggregation under exp-concavity}\label{app:aggregation}
For predictions $w_i\in\cW$ and weights $p_i\ge0$ summing to one, convexity of the domain makes $\bar w=\sum_i p_iw_i$ feasible. Concavity of $e^{-\eta f}$ then yields
\begin{equation}\label{eq:prelim-aggregation}
 f(\bar w)\le-\frac1\eta\log\left(\sum_i p_i e^{-\eta f(w_i)}\right).
\end{equation}
This inequality is the aggregation step for both the surrogate master in \Cref{app:histories} and the outer ensemble in \Cref{app:frontier}. The former uses the surrogate's exp-concavity parameter $\lambda$, whereas the latter uses $\eta$ for the original losses.

\section{Proofs for Section~\ref{sec:algorithm-overview}}\label{app:upper-proofs}
We collect the supporting lemmas with their proofs, then prove the parameter-explicit versions of \Cref{thm:curved,cor:frontier}. The final subsection establishes the finite-precision implementation and its anytime extension.

\subsection{Useful lemmas}\label{app:upper-lemmas}
The argument proceeds from surrogate domination and history aggregation to Gaussian movement control and restart cancellation. The path-partition and projected-gradient lemmas then supply the two branches of the full frontier. We also record the numerical robustness statement used in the implementation theorem.

\subsubsection{Rank-one surrogate: domination and curvature}\label{app:surrogate}

After the algorithm forms its proper prediction $w_t$, let
$g_t=\nabla f_t(w_t)$ and define
\begin{equation}\label{eq:surrogate}
 \widetilde f_t(v)
 =\ip{g_t}{v-w_t}
 +\frac\gamma2\ip{g_t}{v-w_t}^2.
\end{equation}

The first step is to justify analyzing the quadratic surrogate instead of the original losses. The next lemma gives both ingredients needed later: regret domination for the final guarantee and exp-concavity for the master aggregation.

\begin{lemma}[Domination and surrogate exp-concavity]\label[lemma]{lem:surrogate}
For every $v\in\cW$,
\begin{equation}\label{eq:domination}
 f_t(w_t)-f_t(v)
 \le \widetilde f_t(w_t)-\widetilde f_t(v)
 =-\widetilde f_t(v).
\end{equation}
Moreover, $\widetilde f_t$ is $\lambda$-exp-concave on $\cW$.
\end{lemma}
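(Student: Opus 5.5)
The plan has two parts: domination, which is the standard ONS-type argument of Hazan, Agarwal, and Kale, and exp-concavity, which is a one-dimensional computation along the direction $g_t$.

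\emph{Domination.} First, $\widetilde f_t(w_t)=0$, which gives the stated identity. Since $\gamma\le\eta/2\le\eta$, the loss $f_t$ is also $2\gamma$-exp-concave, by the monotonicity remark in \Cref{app:exp-concavity}. Apply \eqref{eq:prelim-tangent} with parameter $2\gamma$, and set $z=\ip{g_t}{v-w_t}$. This gives
\[
 e^{-2\gamma(f_t(v)-f_t(w_t))}\le 1-2\gamma z .
\]
By Cauchy--Schwarz and the diameter bound, $|2\gamma z|\le 2\gamma GD\le 1/2$, using $\gamma\le 1/(4GD)$. Taking logarithms yields
\[
 f_t(w_t)-f_t(v)\le \frac{1}{2\gamma}\log(1-2\gamma z).
\]
Next use the elementary bound $\log(1-x)\le -x-x^2/4$, valid for $|x|\le 1/2$. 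The proof checks $h(x)=\log(1-x)+x+x^2/4$, which has $h(0)=0$ and $h'(x)=-x(1+x)/(2(1-x))$. This derivative is nonnegative for $x\in[-1/2,0]$ and nonpositive for $x\in[0,1/2]$, so $h$ is maximized at $0$. With $x=2\gamma z$ we get
\[
 f_t(w_t)-f_t(v)\le -z-\frac{\gamma}{2}z^2=-\widetilde f_t(v),
\]
which is \eqref{eq:domination}.

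\emph{Exp-concavity.} The surrogate factors as $\widetilde f_t(v)=\phi(\ip{g_t}{v-w_t})$ with $\phi(z)=z+\frac\gamma2 z^2$. The composition of a concave function with an affine map is concave, so it suffices to show that $e^{-\lambda\phi}$ is concave on the range $|z|\le GD$. The scalar criterion is $\phi''\ge\lambda(\phi')^2$, that is,
\[
 \gamma\ge\frac{\gamma}{2}(1+\gamma z)^2 .
\]
Since $|\gamma z|\le 1/4$, we have $(1+\gamma z)^2\le 25/16<2$, so the criterion holds. This scalar criterion is the Hessian condition \eqref{eq:prelim-hessian}, applied here to a smooth function on an interval that contains the range of $z$ over $\cW$.

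The only delicate step is the logarithmic inequality, where the constants must match exactly: the $\gamma/2$ in the surrogate has to come out of $x^2/4$ divided by $2\gamma$. The argument above handles this by choosing the exp-concavity parameter $2\gamma$.
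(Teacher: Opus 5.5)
Your proposal is correct and follows the paper's proof essentially step for step: domination via $2\gamma$-exp-concavity, the tangent inequality, and the bound $\log(1-x)\le -x-x^2/4$ on $|x|\le 1/2$; then exp-concavity via the rank-one curvature criterion $\gamma\ge\frac{\gamma}{2}(1+\gamma z)^2$, using $|\gamma z|\le 1/4$. Your explicit check of the logarithmic inequality and your reduction to a scalar function of $\ip{g_t}{v-w_t}$ are slightly more detailed versions of the same steps.
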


\begin{proof}
Because $2\gamma\le\eta$, the loss is also $2\gamma$-exp-concave.  Put
$a=\ip{g_t}{v-w_t}$.  Concavity of $e^{-2\gamma f_t}$~gives
\[
 e^{-2\gamma f_t(v)}
 \le e^{-2\gamma f_t(w_t)}(1-2\gamma a).
\]
The choice of $\gamma$ implies $|2\gamma a|\le2\gamma GD\le1/2$.
For $|x|\le1/2$, $-\log(1-x)\ge x+x^2/4$.  Taking logarithms yields
\[
 f_t(v)-f_t(w_t)
 \ge a+\frac\gamma2a^2=\widetilde f_t(v),
\]
which is \eqref{eq:domination}.

For exp-concavity, write $s=\ip{g_t}{v-w_t}$.  Then
\[
 \nabla\widetilde f_t(v)=(1+\gamma s)g_t,
 \qquad
 \nabla^2\widetilde f_t(v)=\gamma g_tg_t^\top.
\]
Since $|\gamma s|\le1/4$,
\[
 \nabla^2\widetilde f_t(v)
 \succeq\frac\gamma2(1+\gamma s)^2g_tg_t^\top
 =\lambda\nabla\widetilde f_t(v)\nabla\widetilde f_t(v)^\top.
\]
This is the Hessian criterion for $\lambda$-exp-concavity.
\end{proof}

\subsubsection{Sparse histories and the exp-concave master}\label{app:histories}

The deterministic streaming schedule follows Hazan and Seshadhri~\cite{HazanSeshadhri2007}. Its role is to expose a fresh history early in any interval while keeping only logarithmically many states.
For an integer $i\ge1$, let $\nu_2(i)$ be the largest $k\ge0$ such that $2^k$ divides $i$, and define its lifetime
\begin{equation}\label{eq:lifetime}
 \ell(i)=2^{\nu_2(i)+2}.
\end{equation}
At time $t$, keep
\begin{equation}\label{eq:active-set}
 \cS_t=\{i\in[t]:t-i\le\ell(i)\}.
\end{equation}

\begin{lemma}[Streaming-history properties]\label[lemma]{lem:schedule}
The sets \eqref{eq:active-set} satisfy:
\begin{enumerate}[label=(H\arabic*),leftmargin=*]
\item $t\in\cS_t$ and $|\cS_t|\le c_H\log(2t)$ for a universal $c_H$;
\item once $i$ leaves the active set, it never returns;
\item for every interval $[r,s]\subseteq[T]$, there is an
$i\in\cS_s\cap[r,\lfloor(r+s)/2\rfloor]$.
\end{enumerate}
Consequently, the history born at such an $i$ is active on every round from $i$ through $s$.
\end{lemma}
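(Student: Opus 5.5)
The three properties follow from counting, from monotonicity of the activity condition in $t$, and from a dyadic pigeonhole argument. Throughout, write $k=\nu_2(i)$ for the $2$-adic valuation of a start $i$, so that $\ell(i)=2^{k+2}$.

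For (H1), $t\in\cS_t$ because $t-t=0\le\ell(t)$. To bound $|\cS_t|$, partition the active starts by their valuation. The integers with $\nu_2(i)=k$ are exactly the odd multiples of $2^k$, so consecutive ones differ by $2^{k+1}$. An active start with valuation $k$ satisfies $i\ge t-2^{k+2}$, so it lies in the window $[t-2^{k+2},t]$. This window has length $2^{k+2}$ and therefore contains at most $2^{k+2}/2^{k+1}+1=3$ such integers. Since $i\le t$, only $k\le\log_2 t$ can occur. Hence $|\cS_t|\le3(\lfloor\log_2t\rfloor+1)$, which is at most $c_H\log(2t)$ for a universal constant $c_H$.

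For (H2), the condition $t-i\le\ell(i)$ has a fixed right-hand side and a left-hand side that increases with $t$. So if it fails at some time $t$, it fails at every later time, and $i$ never returns. Equivalently, whenever $i\in\cS_s$, we have $i\in\cS_t$ for every $t\in[i,s]$. This equivalent form is exactly the final ``consequently'' clause, once (H3) supplies a start $i\in\cS_s$ with $i\le s$.

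For (H3), fix $[r,s]$ and set $m=\lfloor(r+s)/2\rfloor$. The first half $[r,m]$ has length $L=m-r+1$.
\begin{itemize}
\item Choose $k$ with $2^k\le L<2^{k+1}$. Any $2^k$ consecutive integers contain a multiple of $2^k$, so some $i\in[r,m]$ has $\nu_2(i)\ge k$, and hence $\ell(i)\ge2^{k+2}$.
\item Since $m\ge(r+s-1)/2$, we get $L\ge(s-r+1)/2$, that is, $s-r\le2L-1$.
\item Therefore $s-i\le s-r\le 2L-1<2^{k+2}\le\ell(i)$, so $i\in\cS_s\cap[r,m]$.
\end{itemize}
The degenerate case $r=s$ gives $L=1$, $k=0$ and $i=r$, and is covered by the same argument.

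The only delicate step is this floor arithmetic. It has to show that the half-interval length $L$, rounded down to a power of two, still gives a lifetime $2^{k+2}$ exceeding $s-r$. The exponent offset $+2$ in \eqref{eq:lifetime} provides exactly the needed slack: even in the worst case $L=2^{k+1}-1$, we have $s-r\le 2L-1<2^{k+2}$.
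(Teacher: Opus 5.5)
Your proof is correct and takes essentially the same route as the paper: per-valuation counting for (H1), monotonicity of $t-i\le\ell(i)$ in $t$ for (H2), and a dyadic pigeonhole on the first half of the interval for (H3). Your choice $2^k\le L<2^{k+1}$, with $L$ the number of integers in the first half and $s-r\le 2L-1$, is a minor variant of the paper's choice of the largest $2^k\le (s-r)/2$; it also handles the short intervals that the paper treats separately.
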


\begin{proof}
The first assertion in (H1) and (H2) are immediate from \eqref{eq:active-set}.  For the size bound, fix $k$.  Integers with $\nu_2(i)=k$ are separated by $2^{k+1}$, and an active such integer lies in an interval of length $2^{k+2}$.  Hence only a universal constant number occur for each $k$, and $k\le\lfloor\log_2t\rfloor$.

For (H3), put $\Delta=s-r$.  If $\Delta<2$, the claim is immediate.  Otherwise choose the largest $2^k\le\Delta/2$.  The interval $[r,(r+s)/2]$ has length at least $2^k$, so it contains an integer $i$ divisible by $2^k$.  Thus $\ell(i)\ge2^{k+2}>\Delta\ge s-i$, and $i\in\cS_s$.  Property (H2) ensures activity throughout $[i,s]$.
\end{proof}

Define the smoothing grid and its prior by
\[
 K=\lfloor\log_2T\rfloor,
 \qquad
 \tau_k=D2^{-k},\quad k=0,\ldots,K,
 \qquad
 q_k=\frac1{K+1}.
\]
For every active pair $(i,k)$, maintain one Gaussian history learner
$\mathcal E_{i,k}$.  At round $t$, it predicts a point $x_t^{i,k}\in\cW$.
Initialize $p_1^{1,k}=q_k$. The master has weights $p_t^{i,k}$ on active pairs and predicts
\begin{equation}\label{eq:master-prediction}
 w_t=\sum_{k=0}^K\sum_{i\in\cS_t}p_t^{i,k}x_t^{i,k}\in\cW.
\end{equation}
After observing \eqref{eq:surrogate}, perform the exponential update
\[
 \widehat p_{t+1}^{i,k}
 =\frac{p_t^{i,k}e^{-\lambda\widetilde f_t(x_t^{i,k})}}
 {\sum_{j\in\cS_t,\ell}p_t^{j,\ell}e^{-\lambda\widetilde f_t(x_t^{j,\ell})}}.
\]
Before round $t+1$, give total weight $1/(t+1)$ to the new start $t+1$, split according to $q_k$, multiply surviving old weights by $t/(t+1)$, delete starts outside $\cS_{t+1}$, and renormalize.

The schedule guarantees that a suitable history exists on any interval. We next bound the price of following that history. A guarantee for every prefix, rather than only its full lifetime, is needed because the cancellation argument may stop a history at its first large negative excess-regret prefix.

\begin{lemma}[Prefix overhead to every surviving history]\label[lemma]{lem:meta}
If $\mathcal E_{i,k}$ is active through round $b\ge i$, then every prefix $[i,q]\subseteq[i,b]$ satisfies
\begin{equation}\label{eq:meta-overhead}
 \sum_{t=i}^q
 [\widetilde f_t(w_t)-\widetilde f_t(x_t^{i,k})]
 \le a_k,
 \qquad
 a_k=\frac1\lambda\log\frac{T+1}{q_k}.
\end{equation}
For the uniform scale prior, write
\begin{equation}\label{eq:a}
 a=\lambda^{-1}\log((T+1)(K+1)).
\end{equation}
\end{lemma}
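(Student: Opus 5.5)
This is the standard exp-concave aggregation argument of Hazan and Seshadhri, run on the surrogate losses. Define the potential
\[
\Phi_t=\sum_{j\in\cS_t,\ell}p_t^{j,\ell}e^{-\lambda\widetilde f_t(x_t^{j,\ell})}.
\]
\Cref{lem:surrogate} says $\widetilde f_t$ is $\lambda$-exp-concave on $\cW$, and $w_t$ is a convex combination of the $x_t^{j,\ell}$. So the aggregation inequality \eqref{eq:prelim-aggregation} gives
\[
e^{-\lambda\widetilde f_t(w_t)}\ge\Phi_t .
\]
Hence, on every round $t$ at which the pair $(i,k)$ is active,
\[
\widehat p_{t+1}^{i,k}
=\frac{p_t^{i,k}e^{-\lambda\widetilde f_t(x_t^{i,k})}}{\Phi_t}
\ge p_t^{i,k}\,e^{\lambda[\widetilde f_t(w_t)-\widetilde f_t(x_t^{i,k})]}.
\]

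Next I account for the restart and renormalization step between rounds. The surviving old weights are multiplied by $t/(t+1)$, some starts are deleted, and the result is renormalized. Deletion only removes mass and the total mass is at most one, so renormalization can only increase the weight of a surviving pair. Therefore
\[
p_{t+1}^{i,k}\ge\frac{t}{t+1}\,\widehat p_{t+1}^{i,k}.
\]
The starting weight is $p_i^{i,k}\ge q_k/i$. For $i=1$ this is the initialization. For $i>1$, the newborn pair receives mass $q_k/i$ before renormalization, and renormalization does not shrink it.

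Now fix a prefix $[i,q]\subseteq[i,b]$. Property (H2) of \Cref{lem:schedule} ensures $(i,k)$ is active on every round from $i$ to $q$, so the per-round inequalities chain. Multiplying them gives
\[
1\ge p_{q+1}^{i,k}\ \text{or}\ \widehat p_{q+1}^{i,k}
\ \ge\ \frac{q_k}{i}\prod_{t=i}^{q-1}\frac{t}{t+1}\;
e^{\lambda\sum_{t=i}^q[\widetilde f_t(w_t)-\widetilde f_t(x_t^{i,k})]}.
\]
For the last round I use the un-restarted weight $\widehat p_{q+1}^{i,k}\le1$, so the pair need not survive past $q$. The factors $t/(t+1)$ telescope to $i/q$, so the prefactor becomes $q_k/q\ge q_k/(T+1)$. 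Taking logarithms yields \eqref{eq:meta-overhead} with
\[
a_k=\lambda^{-1}\log\frac{T+1}{q_k}.
\]
Substituting the uniform prior $q_k=1/(K+1)$ gives \eqref{eq:a}.

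The main step that needs care is the claim that renormalization after deletion never decreases a surviving weight, together with the exact size of the newborn's weight. I plan to check this directly from the update rule. The total pre-normalization mass is $\frac{t}{t+1}\sum\widehat p+\frac1{t+1}\le1$, and deleting pairs lowers this sum further. So every surviving weight is divided by a quantity at most one, and this is all the argument requires.
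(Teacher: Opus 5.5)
Your proposal is correct and follows essentially the same route as the paper. Both arguments use the $\lambda$-exp-concave aggregation inequality for the one-step weight comparison, renormalization after deletion to get $p_{t+1}^{i,k}\ge\frac{t}{t+1}\widehat p_{t+1}^{i,k}$ and $p_i^{i,k}\ge q_k/i$, and telescoping with $\widehat p_{q+1}^{i,k}\le1$ to reach $\log(q/q_k)\le\log((T+1)/q_k)$.
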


\begin{proof}
By exp-concavity of the surrogate and \eqref{eq:master-prediction},
\[
 e^{-\lambda\widetilde f_t(w_t)}
 \ge\sum_{j,\ell}p_t^{j,\ell}e^{-\lambda\widetilde f_t(x_t^{j,\ell})}.
\]
Rearranging this inequality gives the one-step comparison
\begin{equation}\label{eq:meta-one-step}
 \lambda[\widetilde f_t(w_t)-\widetilde f_t(x_t^{i,k})]
 \le\log\frac{\widehat p_{t+1}^{i,k}}{p_t^{i,k}}.
\end{equation}
At birth, pruning renormalization can only increase mass, so
$p_i^{i,k}\ge q_k/i$.  For $t>i$, survival and renormalization give
$p_t^{i,k}\ge((t-1)/t)\widehat p_t^{i,k}$.  Summing
\eqref{eq:meta-one-step},
\begin{align*}
 \lambda\sum_{t=i}^q
 [\widetilde f_t(w_t)-\widetilde f_t(x_t^{i,k})]
 &\le -\log p_i^{i,k}+\log\widehat p_{q+1}^{i,k}
 +\sum_{t=i+1}^q\log\frac{\widehat p_t^{i,k}}{p_t^{i,k}}\\
 &\le \log\frac{i}{q_k}+0+\sum_{t=i+1}^q\log\frac t{t-1}\\
 &=\log\frac q{q_k}
 \le\log\frac{T+1}{q_k}.
\end{align*}
The terminal term is nonpositive because
$\widehat p_{q+1}^{i,k}\le1$.
\end{proof}

\subsubsection{One Gaussian history: update, feasibility, and KL control}\label{app:gaussian}

Fix a scale $\tau\in[D/T,D]$.  A history maintains
$P_t=\N(m_t,\Sigma_t)$ with
\begin{equation}\label{eq:cov-range}
 m_t\in\cW,
 \qquad
 \tau^2I_d\preceq\Sigma_t\preceq D^2I_d.
\end{equation}
At birth, $P_i=\N(w_0,D^2I_d)$ for a fixed $w_0\in\cW$.

\paragraph{Closed-form rank-one Gaussian update.}

Let $A_t=\Sigma_t^{-1}$.  Multiplying the Gaussian density by
$e^{-\lambda\widetilde f_t}$ gives a Gaussian
$R_{t+1}=\N(a_{t+1},C_{t+1})$ with
\begin{align}
 A_{t+1}^+
 &=A_t+\lambda\gamma g_tg_t^\top,
 \label{eq:precision-update}\\
 b_{t+1}^+
 &=A_tm_t-\lambda g_t
 +\lambda\gamma(g_t^\top w_t)g_t,
 \label{eq:linear-update}\\
 C_{t+1}&=(A_{t+1}^+)^{-1},
 \qquad
 a_{t+1}=C_{t+1}b_{t+1}^+.
 \label{eq:mean-update}
\end{align}
These identities follow by collecting the quadratic and linear terms in the exponent.
Sherman--Morrison and the matrix determinant lemma evaluate the update in $O(d^2)$ arithmetic operations once an eigensystem of $A_t$ is stored.

Let
$C_{t+1}=U\operatorname{diag}(c_1,\ldots,c_d)U^\top$ and define the covariance floor
\begin{equation}\label{eq:floor}
 C_{t+1}^{\flat}
 =U\operatorname{diag}(c_j\vee\tau^2)U^\top.
\end{equation}
Then project the tilted mean in the corresponding Mahalanobis metric:
\begin{equation}\label{eq:mahalanobis-proj}
 m_{t+1}
 \in\arg\min_{m\in\cW}
 (m-a_{t+1})^\top(C_{t+1}^{\flat})^{-1}(m-a_{t+1}),
 \qquad
 \Sigma_{t+1}=C_{t+1}^{\flat}.
\end{equation}
Because the tilt only increases precision, $C_{t+1}\preceq\Sigma_t\preceq D^2I$; flooring raises eigenvalues only to $\tau^2\le D^2$, so \eqref{eq:cov-range} is preserved.

For $u\in\cW$, set
\begin{equation}\label{eq:Q-u}
 Q_u=\N(u,\tau^2I_d).
\end{equation}

\begin{lemma}[Comparator-KL monotonicity of the feasibility correction]\label[lemma]{lem:projection-monotone}
For every $u\in\cW$,
\begin{equation}\label{eq:projection-monotone}
 \KL(Q_u\|\N(m_{t+1},\Sigma_{t+1}))
 \le \KL(Q_u\|R_{t+1}).
\end{equation}
\end{lemma}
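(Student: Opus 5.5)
The plan is to write the Gaussian KL in closed form and treat the two corrections separately: first the covariance floor \eqref{eq:floor} with the mean held at $a_{t+1}$, then the Mahalanobis projection \eqref{eq:mahalanobis-proj} with the covariance held at $\Sigma_{t+1}=C_{t+1}^\flat$. This gives the chain
\[
\KL(Q_u\|\N(m_{t+1},C^\flat_{t+1}))\le\KL(Q_u\|\N(a_{t+1},C^\flat_{t+1}))\le\KL(Q_u\|\N(a_{t+1},C_{t+1})).
\]
The formula I will use is, for $S\succ0$,
\[
\KL(\N(u,\tau^2I)\|\N(\mu,S))=\tfrac12\bigl[\tau^2\operatorname{tr}(S^{-1})+(\mu-u)^\top S^{-1}(\mu-u)+\log\det S-d-d\log\tau^2\bigr].
\]

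\emph{Flooring step.} Take $S=U\operatorname{diag}(s_1,\ldots,s_d)U^\top$ with the eigenbasis $U$ of $C_{t+1}$, and set $v=U^\top(a_{t+1}-u)$. In this basis the KL, up to constants independent of $S$, splits as $\tfrac12\sum_j h_j(s_j)$, where
\[
h_j(s)=\frac{\tau^2+v_j^2}{s}+\log s.
\]
Its derivative $h_j'(s)=(s-\tau^2-v_j^2)/s^2$ is negative on $(0,\tau^2+v_j^2)$. Because $\tau^2\le\tau^2+v_j^2$, raising any $c_j<\tau^2$ to $\tau^2$ therefore does not increase $h_j$. Coordinates with $c_j\ge\tau^2$ are left unchanged. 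Summing over $j$ gives the second inequality of the chain.

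This step is the only place where care is needed, and it is the main obstacle. The floor is applied uniformly, without knowledge of $u$, so the inequality must hold for every possible mean offset $v$. It works because the minimizer $\tau^2+v_j^2$ of each coordinate term always sits at or above the floor $\tau^2$, whatever the value of $v_j$.

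\emph{Projection step.} With $S=\Sigma_{t+1}$ fixed, the only term that depends on the mean is $\|\mu-u\|_{\Sigma_{t+1}^{-1}}^2$. The point $m_{t+1}$ is the projection of $a_{t+1}$ onto the convex set $\cW$ in the inner product $\ip{x}{y}_{\Sigma_{t+1}^{-1}}=x^\top\Sigma_{t+1}^{-1}y$. Since $u\in\cW$, the first-order optimality condition $\ip{a_{t+1}-m_{t+1}}{u-m_{t+1}}_{\Sigma_{t+1}^{-1}}\le0$ yields the generalized Pythagorean inequality
\[
\|u-m_{t+1}\|^2_{\Sigma_{t+1}^{-1}}+\|m_{t+1}-a_{t+1}\|^2_{\Sigma_{t+1}^{-1}}\le\|u-a_{t+1}\|^2_{\Sigma_{t+1}^{-1}}.
\]
In particular, projecting the mean does not increase the KL. This gives the first inequality of the chain, and combining the two steps proves \eqref{eq:projection-monotone}.
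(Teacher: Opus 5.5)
Your proof is correct and follows the paper's argument essentially step for step. The covariance floor is handled coordinatewise in the eigenbasis of $C_{t+1}$, using that $h_j(c)=(\tau^2+r_j^2)/c+\log c$ is nonincreasing below $\tau^2$. The mean projection is handled by the Mahalanobis variational inequality and the resulting Pythagorean inequality, with the covariance held fixed.
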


\begin{proof}
First keep the mean $a=a_{t+1}$ fixed and replace $C=C_{t+1}$ by
$C^\flat=C_{t+1}^\flat$.  In an eigenbasis of $C$, twice the covariance and mean-dependent part of the KL divergence is a sum of
\[
 h_j(c)=\frac{\tau^2+r_j^2}{c}+\log c,
\]
where $r_j$ is the corresponding coordinate of $u-a$.  If $c\le\tau^2$,
\[
 h_j'(c)=\frac{c-\tau^2-r_j^2}{c^2}\le0.
\]
Thus increasing every eigenvalue below $\tau^2$ to $\tau^2$ cannot increase KL.

Now put $A=(C^\flat)^{-1}$ and let $m=m_{t+1}$ be the metric projection of $a$ onto $\cW$.  Its variational inequality is
\[
 (a-m)^\top A(u-m)\le0
 \qquad (u\in\cW).
\]
Expanding squared norms gives
\[
 \norm{u-m}_A^2
 \le\norm{u-a}_A^2-\norm{a-m}_A^2.
\]
The covariance terms are unchanged during mean projection, so KL again cannot increase.
\end{proof}

\paragraph{Mix loss and one-step potential.}

For the Gaussian $P_t$, define
\[
 m_t^{\rm mix}
 =-\frac1\lambda\log\E_{U\sim P_t}e^{-\lambda\widetilde f_t(U)}.
\]

\begin{lemma}[Gaussian mixability gap]\label[lemma]{lem:mixgap}
The history mean satisfies
\begin{equation}\label{eq:mixgap}
 \widetilde f_t(m_t)\le m_t^{\rm mix}.
\end{equation}
\end{lemma}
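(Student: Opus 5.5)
The plan is to apply exp-concavity of the surrogate, in the form of the aggregation inequality \eqref{eq:prelim-aggregation}, to the Gaussian mixture $P_t$ instead of a finite mixture. \Cref{lem:surrogate} shows that $\widetilde f_t$ is $\lambda$-exp-concave on $\cW$, so $v\mapsto e^{-\lambda\widetilde f_t(v)}$ is concave there. But $P_t$ has full support on $\R^d$, not on $\cW$, so Jensen cannot be applied on $\cW$ alone. The first step is therefore to check that $e^{-\lambda\widetilde f_t}$ is concave along the relevant directions on all of $\R^d$, or at least to get the needed inequality directly.

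Because $\widetilde f_t$ depends on $v$ only through the scalar $s=\ip{g_t}{v-w_t}$, I will reduce to one dimension. Write
\[
\widetilde f_t(v)=\phi(s),\qquad \phi(s)=s+\tfrac\gamma2 s^2 .
\]
Under $U\sim\N(m_t,\Sigma_t)$, the scalar $S=\ip{g_t}{U-w_t}$ is Gaussian with mean $\mu=\ip{g_t}{m_t-w_t}$ and variance $\sigma^2=g_t^\top\Sigma_tg_t$.

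The expectation $\E e^{-\lambda\phi(S)}$ is a Gaussian integral of $\exp(-\lambda s-\tfrac{\lambda\gamma}{2}s^2)$, which I will compute in closed form:
\[
-\tfrac1\lambda\log\E e^{-\lambda\phi(S)}
=\frac{1}{2\lambda}\log(1+\lambda\gamma\sigma^2)
+\frac{\mu+\tfrac\gamma2\mu^2-\tfrac{\lambda}{2}\sigma^2}{1+\lambda\gamma\sigma^2}.
\]
Up to algebra, the second term is the value of the completed square. The goal is then to show that this quantity is at least $\phi(\mu)=\mu+\tfrac\gamma2\mu^2$.

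Set $x=\lambda\gamma\sigma^2\ge0$. The required inequality becomes
\[
\frac{1}{2\lambda}\log(1+x)-\frac{x\,\phi(\mu)+\tfrac{\lambda}{2}\sigma^2}{1+x}\ge0,
\]
that is,
\[
(1+x)\log(1+x)\ge 2\lambda x\,\phi(\mu)+\frac{x}{\gamma}.
\]
Since $\lambda=\gamma/2$, this reads
\[
(1+x)\log(1+x)\ge x\bigl(\gamma\phi(\mu)+1/\gamma\bigr),
\]
which looks false for small $x$ because $1/\gamma$ is large. So I have probably mis-tracked a term, and the check has to be done carefully.

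The cleaner route avoids explicit formulas. Since $\lambda=\gamma/2$, for every real $s$ we have
\[
\frac{d^2}{ds^2}\bigl(-\lambda\phi(s)\bigr)=-\lambda\gamma
\qquad\text{and}\qquad
\lambda^2\phi'(s)^2=\lambda^2(1+\gamma s)^2 .
\]
Concavity of $e^{-\lambda\phi}$ therefore holds exactly when $\gamma\ge\lambda(1+\gamma s)^2$, i.e.\ $|1+\gamma s|\le\sqrt2$. This fails for large $|s|$, so global Jensen is unavailable, and I expect this to be the main obstacle. My first attempt will be a pointwise quadratic minorant. Since $\phi$ is convex, $\phi(s)\ge\phi(\mu)+\phi'(\mu)(s-\mu)$ gives only an upper bound on $e^{-\lambda\phi}$, which points the wrong way.

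Instead I will compute exactly: with $S=\mu+\sigma Z$,
\[
\E e^{-\lambda\phi(S)}=e^{-\lambda\phi(\mu)}\,\E\exp\Bigl(-\lambda\phi'(\mu)\sigma Z-\tfrac{\lambda\gamma}{2}\sigma^2Z^2\Bigr),
\]
and the remaining expectation equals
\[
(1+x)^{-1/2}\exp\!\left(\frac{\lambda^2\phi'(\mu)^2\sigma^2}{2(1+x)}\right).
\]
The claim $\widetilde f_t(m_t)\le m_t^{\rm mix}$ is then equivalent to
\[
\lambda^2\phi'(\mu)^2\sigma^2\le(1+x)\log(1+x).
\]
Using $(1+x)\log(1+x)\ge x$, it suffices that $\lambda\phi'(\mu)^2\le\gamma$, i.e.\ $(1+\gamma\mu)^2\le2$. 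Because $m_t\in\cW$ and $w_t\in\cW$, we have $|\gamma\mu|\le\gamma GD\le1/4$, so $(1+\gamma\mu)^2\le(5/4)^2<2$. Feasibility of the mean, as enforced by the Mahalanobis projection in \eqref{eq:mahalanobis-proj}, is exactly what makes the argument close.
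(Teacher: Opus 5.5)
Your final argument is correct and is essentially the paper's proof. It uses the same one-dimensional Gaussian integral: your $x=\lambda\gamma\sigma^2$ and $(1+\gamma\mu)^2$ are the paper's $s=2\lambda^2v$ and $(1+2\lambda a)^2$. It also uses the same bound $(1+x)\log(1+x)\ge x$, and the same feasibility of $m_t,w_t\in\cW$ to get $(1+\gamma\mu)^2\le 25/16<2$.

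Your abandoned first attempt failed only because of an arithmetic slip: $2\lambda\cdot\tfrac\lambda2\sigma^2=\lambda^2\sigma^2=x/2$, not $x/\gamma$. Corrected, it reduces to the same condition $(1+x)\log(1+x)\ge\tfrac x2(1+\gamma\mu)^2$, so the exploratory detours can be dropped from a write-up.
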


\begin{proof}
Under $P_t$, the scalar
$X=\ip{g_t}{U-w_t}$ is Gaussian with mean
$a=\ip{g_t}{m_t-w_t}$ and variance $v=g_t^\top\Sigma_tg_t$.
Since $\lambda=\gamma/2$,
$-\lambda\widetilde f_t(U)=-\lambda X-\lambda^2X^2$.
With $s=2\lambda^2v$, direct Gaussian integration~gives
\begin{equation}\label{eq:gaussian-integral}
 \log\frac{\E e^{-\lambda X-\lambda^2X^2}}
 {e^{-\lambda a-\lambda^2a^2}}
 =-\frac12\log(1+s)
 +\frac{s(1+2\lambda a)^2}{4(1+s)}.
\end{equation}
Both $m_t$ and $w_t$ lie in $\cW$, so
$|\lambda a|\le\gamma GD/2\le1/8$ and
$(1+2\lambda a)^2\le25/16$.  Since
$\log(1+s)\ge s/(1+s)$, the right-hand side of
\eqref{eq:gaussian-integral} is at most
$-7s/[64(1+s)]\le0$.  Therefore
\[
 \E e^{-\lambda\widetilde f_t(U)}
 \le e^{-\lambda\widetilde f_t(m_t)},
\]
which is equivalent to \eqref{eq:mixgap}.
\end{proof}

For a comparator path and one history, write
\begin{equation}\label{eq:F-r}
 Q_t=\N(u_t,\tau^2I_d),
 \qquad
 F_t(u)=\KL(Q_u\|P_t),
 \qquad
 r_t=\widetilde f_t(m_t)-\E_{Q_t}\widetilde f_t.
\end{equation}

We can now combine the mix-loss bound with the feasibility correction. The resulting inequality pays for one round of surrogate excess by a decrease in KL to a fixed comparator; the next step will account for changes in that comparator.

\begin{lemma}[One-step KL inequality]\label[lemma]{lem:one-step}
Every history update satisfies
\begin{equation}\label{eq:one-step}
 \lambda r_t\le F_t(u_t)-F_{t+1}(u_t).
\end{equation}
\end{lemma}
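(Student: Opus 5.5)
The plan is to use the standard Gibbs (Donsker--Varadhan) variational identity for exponential tilting, and then invoke \Cref{lem:mixgap} and \Cref{lem:projection-monotone}. Fix $u=u_t$ and write $Q=Q_t=\N(u,\tau^2I_d)$. The tilted Gaussian $R_{t+1}$ from \eqref{eq:precision-update}--\eqref{eq:mean-update} is, by construction, the density
\[
 \frac{dR_{t+1}}{dP_t}(x)=\frac{e^{-\lambda\widetilde f_t(x)}}{\E_{U\sim P_t}e^{-\lambda\widetilde f_t(U)}}.
\]
This requires the integral to be finite. That holds because the exponent is a concave quadratic in the direction $g_t$, and $P_t$ is a nondegenerate Gaussian.

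\emph{Exact KL identity.} Expanding $\KL(Q\|R_{t+1})=\E_Q\log(dQ/dP_t)-\E_Q\log(dR_{t+1}/dP_t)$ gives
\[
 \KL(Q\|R_{t+1})=\KL(Q\|P_t)+\lambda\E_Q\widetilde f_t+\log\E_{P_t}e^{-\lambda\widetilde f_t}
 =F_t(u)+\lambda\E_Q\widetilde f_t-\lambda m_t^{\rm mix}.
\]
All terms are finite, since $\widetilde f_t$ is a quadratic polynomial and $Q$ has finite second moments.

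\emph{Mixability and feasibility.} \Cref{lem:mixgap} gives $-\lambda m_t^{\rm mix}\le-\lambda\widetilde f_t(m_t)$. Substituting into the identity yields
\[
 \KL(Q\|R_{t+1})\le F_t(u)-\lambda\bigl[\widetilde f_t(m_t)-\E_Q\widetilde f_t\bigr]=F_t(u)-\lambda r_t .
\]
Next, \Cref{lem:projection-monotone} applies to $u=u_t\in\cW$. Together with $P_{t+1}=\N(m_{t+1},\Sigma_{t+1})$, it gives $F_{t+1}(u_t)\le\KL(Q\|R_{t+1})$. Combining the two inequalities gives \eqref{eq:one-step}.

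I expect the only delicate point to be justifying the identity rigorously. One has to check that $R_{t+1}$ as computed in \eqref{eq:mean-update} is exactly the normalized tilt; that is, that the linear coefficient in \eqref{eq:linear-update} correctly collects the terms $-\lambda\ip{g_t}{x}-\lambda\gamma\ip{g_t}{x}\ip{g_t}{w_t}$ after expanding $\frac\gamma2\ip{g_t}{x-w_t}^2$. One also needs $A^+_{t+1}\succ0$, which holds since $A_t\succ0$. The rest is bookkeeping with sign conventions: $\E_Q$ is taken with respect to the comparator Gaussian $Q_t$, while the expectation in $m_t^{\rm mix}$ is with respect to $P_t$.
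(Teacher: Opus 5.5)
Your proposal is correct and matches the paper's proof: your Gibbs identity $\KL(Q_t\|R_{t+1})=F_t(u_t)+\lambda\E_{Q_t}\widetilde f_t-\lambda m_t^{\rm mix}$ is the paper's exponential-update identity, and you then close the argument as the paper does, with \Cref{lem:mixgap} on the mix-loss side and \Cref{lem:projection-monotone} on the feasibility side. One minor slip in your side remark: expanding $-\frac{\lambda\gamma}{2}\ip{g_t}{x-w_t}^2$ gives the cross term $+\lambda\gamma\ip{g_t}{x}\ip{g_t}{w_t}$, not $-$, which agrees with the plus sign in \eqref{eq:linear-update}; this does not affect the argument.
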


\begin{proof}
The exponential-update identity gives
\[
 m_t^{\rm mix}-\E_{Q_t}\widetilde f_t
 =\frac1\lambda\left[
 \KL(Q_t\|P_t)-\KL(Q_t\|R_{t+1})
 \right].
\]
Use \cref{lem:mixgap} on the left and
\cref{lem:projection-monotone} on the right.
\end{proof}

To telescope the preceding inequality along a moving path, we must compare potentials at consecutive comparator points. Taking the square root makes the Gaussian KL Lipschitz in the comparator mean, with sensitivity controlled by the covariance floor. This is the source of the improved movement coefficient.

\begin{lemma}[Square-root KL movement]\label[lemma]{lem:root-movement}
For every $u,v\in\cW$,
\begin{equation}\label{eq:root-movement}
 \sqrt{F_{t+1}(v)}
 \le\sqrt{F_{t+1}(u)}
 +\frac{\norm{v-u}_2}{\sqrt2\,\tau}.
\end{equation}
At a history's birth time $i$,
\begin{equation}\label{eq:initial-KL}
 F_i(u_i)\le B_\tau,
 \qquad
 B_\tau=\frac12\left[
 1+d(\rho^2-1-\log\rho^2)
 \right],
 \quad \rho=\frac\tau D.
\end{equation}
\end{lemma}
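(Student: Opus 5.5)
Both claims reduce to the closed-form KL divergence between Gaussians. For $P_{t+1}=\N(m,\Sigma)$ with $\Sigma=\Sigma_{t+1}$ and $A=\Sigma^{-1}$, we have
\[
F_{t+1}(u)=\tfrac12\Bigl[\tau^2\operatorname{tr}(A)-d+\log\det(\Sigma)-d\log\tau^2\Bigr]+\tfrac12\norm{u-m}_A^2 .
\]
Write $F_{t+1}(u)=c+\tfrac12\norm{u-m}_A^2$, where the constant $c$ does not depend on $u$. Gibbs' inequality for the covariance part shows $c\ge0$. Indeed, $c$ equals $\KL(\N(0,\tau^2I)\|\N(0,\Sigma))$, or equivalently $\tfrac12\sum_j(\tau^2/\sigma_j-1-\log(\tau^2/\sigma_j))\ge0$, since $x-1-\log x\ge0$.

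\textbf{Movement.} Define $\phi(u)=\sqrt{c+\tfrac12\norm{u-m}_A^2}$. This is the Euclidean norm of the vector $(\sqrt c,\norm{u-m}_A/\sqrt2)\in\R^2$. The triangle inequality in $\R^2$, together with the triangle inequality for the seminorm $\norm\cdot_A$, gives
\[
\phi(v)\le\phi(u)+\frac{\norm{v-u}_A}{\sqrt2}.
\]
The covariance floor \eqref{eq:cov-range} gives $\Sigma_{t+1}\succeq\tau^2I$, hence $A\preceq\tau^{-2}I$ and $\norm{v-u}_A\le\norm{v-u}_2/\tau$. This yields \eqref{eq:root-movement}. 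The only point needing care is that the floor holds after the update, which was already verified right after \eqref{eq:mahalanobis-proj}.

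\textbf{Initial bound.} At birth, $P_i=\N(w_0,D^2I)$, so the formula becomes
\[
F_i(u_i)=\tfrac12\bigl[d\rho^2-d-d\log\rho^2\bigr]+\frac{\norm{u_i-w_0}_2^2}{2D^2}.
\]
Both $u_i$ and $w_0$ lie in $\cW$, whose diameter is at most $D$, so the last term is at most $1/2$. Summing the two pieces gives exactly $B_\tau$.

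No step here is a real obstacle. The key idea is to view $\sqrt{F}$ as the norm of the pair (covariance mismatch, Mahalanobis mean offset); this is what makes the square root Lipschitz with the dimension-free constant $1/(\sqrt2\,\tau)$.
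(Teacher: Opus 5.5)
Your proof is correct and follows the paper's argument. It writes $F_{t+1}(u)=c+\tfrac12\norm{u-m}_{\Sigma^{-1}}^2$ with $c\ge0$, applies the triangle inequality to an augmented norm, uses the floor $\Sigma_{t+1}\succeq\tau^2I$, and computes the birth KL using $\norm{u_i-w_0}_2\le D$. The only differences are cosmetic: you collapse to a vector in $\R^2$ (implicitly using monotonicity of the Euclidean norm in nonnegative coordinates), where the paper uses $(\sqrt c,\Sigma^{-1/2}(u-m)/\sqrt2)\in\R^{d+1}$ directly, and you give the justification of $c\ge0$ that the paper leaves implicit.
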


\begin{proof}
For fixed $P_{t+1}=\N(m,\Sigma)$,
\[
 F_{t+1}(u)
 =c+\frac12(u-m)^\top\Sigma^{-1}(u-m),
 \qquad
 c=\frac12\left[
 \tau^2\operatorname{tr}(\Sigma^{-1})-d+
 \log\frac{\det\Sigma}{\tau^{2d}}
 \right]\ge0.
\]
Define the augmented vector
\[
 z(u)=\left(
 \sqrt c,\frac1{\sqrt2}\Sigma^{-1/2}(u-m)
 \right).
\]
Then $\sqrt{F_{t+1}(u)}=\norm{z(u)}_2$.  By the triangle inequality and
$\Sigma\succeq\tau^2I$,
\[
 \sqrt{F_{t+1}(v)}
 \le\sqrt{F_{t+1}(u)}
 +\frac1{\sqrt2}\norm{\Sigma^{-1/2}(v-u)}_2
 \le\sqrt{F_{t+1}(u)}+\frac{\norm{v-u}_2}{\sqrt2\tau}.
\]
At birth, $P_i=\N(w_0,D^2I)$; the Gaussian KL formula and
$\norm{u_i-w_0}\le D$ give \eqref{eq:initial-KL}.
\end{proof}

\subsubsection{Shifted drawdown energy and recursive cancellation}\label{app:drawdown}
The movement inequality in \Cref{lem:root-movement} is useful only while its starting potential is controlled. Negative excess regret can increase that potential, so we first absorb cumulative excess into a shifted energy. The resulting bound applies until a large negative prefix appears. We then charge the extra history pieces exposed by recursion to that negative prefix, closing the regret bound.

\begin{lemma}[Shifted drawdown energy]\label[lemma]{lem:drawdown}
Consider a fresh scale-$\tau$ history on $[i,b]$.  If
\begin{equation}\label{eq:drawdown-condition}
 \sum_{t=i}^q r_t\ge-H
 \qquad\text{for every }q\in[i,b],
\end{equation}
then the cumulative excess satisfies
\begin{equation}\label{eq:drawdown-bound}
 \sum_{t=i}^b r_t
 \le
 \frac1\lambda\left(
 \sqrt{B_\tau+\lambda H}
 +\frac{P_{i:b}}{\sqrt2\tau}
 \right)^2-H,
\end{equation}
where $P_{i:b}=\sum_{t=i+1}^b\norm{u_t-u_{t-1}}_2$.
\end{lemma}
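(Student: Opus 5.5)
The idea is to fold the cumulative excess into the constant part of the Gaussian KL. The square-root movement bound of \Cref{lem:root-movement} then propagates through a whole interval, and the one-step inequality of \Cref{lem:one-step} takes care of the excess. Write $S_q=\sum_{t=i}^q r_t$ with $S_{i-1}=0$, and assume $H\ge0$; this is implicit, since the hypothesis at $q=i$ only gives $r_i\ge-H$. Define the shifted energy
\[
E_t=F_t(u_t)+\lambda(S_{t-1}+H),\qquad t\in[i,b],
\]
which is the quantity $F_t(u_t)+\frac\gamma2(S_{t-1}+H)$ from \Cref{sec:upper-analysis}, because $\lambda=\gamma/2$. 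By \eqref{eq:drawdown-condition} and $H\ge0$, the shift $c_t:=\lambda(S_{t-1}+H)$ is nonnegative for every $t\in[i,b+1]$. At birth, \eqref{eq:initial-KL} gives $E_i\le B_\tau+\lambda H$.

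\textbf{Step 1: absorbing one round of excess.} Apply \Cref{lem:one-step} at round $t\in[i,b]$ and add $\lambda(S_t+H)=\lambda(S_{t-1}+H)+\lambda r_t$ to both sides of $F_{t+1}(u_t)\le F_t(u_t)-\lambda r_t$. This gives
\[
F_{t+1}(u_t)+\lambda(S_t+H)\le E_t .
\]

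\textbf{Step 2: shifted square-root movement.} I need a version of \eqref{eq:root-movement} that carries an additive nonnegative constant $c$:
\[
\sqrt{F_{t+1}(v)+c}\le\sqrt{F_{t+1}(u)+c}+\frac{\|v-u\|_2}{\sqrt2\,\tau}.
\]
This follows from the same augmented-vector argument as in the proof of \Cref{lem:root-movement}. There $\sqrt{F_{t+1}(u)}=\|z(u)\|_2$, and the first coordinate $\sqrt{c_0}$ does not depend on $u$, where $c_0\ge0$ is the constant called $c$ in that proof. Replacing it by $\sqrt{c_0+c}$ keeps the difference $z(v)-z(u)$ unchanged, so the triangle inequality and $\Sigma_{t+1}\succeq\tau^2I$ give the same increment.

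Now take $c=\lambda(S_t+H)\ge0$, $u=u_t$ and $v=u_{t+1}$ for $t\in[i,b-1]$. Combined with Step~1, this yields
\[
\sqrt{E_{t+1}}\le\sqrt{E_t}+\frac{\|u_{t+1}-u_t\|_2}{\sqrt2\,\tau}.
\]
The only delicate point in the whole argument is that the shift must be nonnegative for the square root and the augmented coordinate to make sense. This is exactly what the drawdown condition supplies, and it is why the lemma applies only up to the first large negative prefix.

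\textbf{Step 3: telescoping and conclusion.} Summing the recursion of Step~2 over $t=i,\dots,b-1$ gives
\[
\sqrt{E_b}\le\sqrt{B_\tau+\lambda H}+\frac{P_{i:b}}{\sqrt2\,\tau}.
\]
Step~1 at $t=b$ gives $\lambda(S_b+H)\le E_b-F_{b+1}(u_b)\le E_b$, because KL is nonnegative. Dividing by $\lambda$ and subtracting $H$ gives \eqref{eq:drawdown-bound}.
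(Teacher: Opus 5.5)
Your proposal is correct and follows the paper's proof essentially step for step. It uses the same shifted energy $E_t=F_t(u_t)+\lambda(S_{t-1}+H)$, the one-step absorption $F_{t+1}(u_t)+\lambda(S_t+H)\le E_t$, the shifted square-root movement recursion, telescoping from $E_i\le B_\tau+\lambda H$, and dropping the terminal KL term. The differences are minor: you justify the shifted movement inequality by enlarging the constant coordinate of the augmented vector $z(u)$, which the paper only asserts, and you state explicitly the harmless assumption $H\ge0$ used for the shift at $t=i$.
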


\begin{proof}
Put $S_q=\sum_{t=i}^q r_t$, $S_{i-1}=0$, and
\[
 E_t=F_t(u_t)+\lambda(S_{t-1}+H).
\]
The drawdown assumption makes the added term nonnegative.  By
\eqref{eq:one-step},
\[
 F_{t+1}(u_t)+\lambda(S_t+H)\le E_t.
\]
For $t<b$, use \eqref{eq:root-movement} and the fact that adding the same nonnegative scalar under both square roots preserves the movement inequality:
\[
 \sqrt{E_{t+1}}
 \le\sqrt{E_t}
 +\frac{\norm{u_{t+1}-u_t}_2}{\sqrt2\tau}.
\]
Iteration and \eqref{eq:initial-KL} give
\[
 \sqrt{E_b}
 \le\sqrt{B_\tau+\lambda H}
 +\frac{P_{i:b}}{\sqrt2\tau}.
\]
Applying \eqref{eq:one-step} at $t=b$ yields
$F_{b+1}(u_b)+\lambda(S_b+H)\le E_b$.
Drop the terminal KL term.
\end{proof}

Fix one scale and abbreviate
\begin{equation}\label{eq:thresholds}
 L=\lceil\log_2(T+1)\rceil,
 \quad
 A_\tau=a+\frac{B_\tau}{\lambda},
 \quad
 H_\tau=128LA_\tau,
 \quad
 C_\tau=B_\tau+\lambda H_\tau,
 \quad
 V_\tau=\frac{\tau\sqrt{C_\tau}}{512L}.
\end{equation}
For an interval $I$, define
\begin{equation}\label{eq:Rtau}
 \mathcal R_\tau(I)
 =\sum_{t\in I}
 [\widetilde f_t(w_t)-\E_{Q_t}\widetilde f_t].
\end{equation}

The shifted-energy bound concerns one fresh history and assumes that its cumulative excess never falls below a threshold. The next lemma removes that condition: whenever it fails, the negative prefix pays for the additional histories introduced by recursion. This converts a conditional history bound into an unconditional interval bound for the master.

\begin{lemma}[Low-variation interval bound]\label[lemma]{lem:low-variation}
If $P_I\le V_\tau$, then
\begin{equation}\label{eq:low-var}
 \mathcal R_\tau(I)\le\frac{H_\tau}{64}.
\end{equation}
\end{lemma}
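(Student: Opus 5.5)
We argue by strong induction on the length $|I|$. Write $I=[r,s]$. By (H3) of \Cref{lem:schedule} there is a start $i\in\cS_s\cap[r,\lfloor(r+s)/2\rfloor]$ whose scale-$\tau$ history is active on all of $[i,s]$. We split $\mathcal R_\tau(I)$ into three pieces: the prefix $[r,i-1]$, the stretch $[i,b]$ followed by this fresh history, and possibly a suffix $[b+1,s]$. On the middle stretch, every round decomposes as
\[
\widetilde f_t(w_t)-\E_{Q_t}\widetilde f_t=\bigl[\widetilde f_t(w_t)-\widetilde f_t(x_t^{i,k})\bigr]+r_t,
\]
since $x_t^{i,k}=m_t$. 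By \Cref{lem:meta}, the first bracket sums to at most $a$ over every prefix of $[i,s]$. The prefix $[r,i-1]$ is a strictly shorter subinterval of $I$ with path length at most $V_\tau$, so the induction hypothesis gives at most $H_\tau/64$ there. The recursion instead runs on the nested prefixes/suffixes produced below, so that the depth is at most $L$.

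For the middle stretch we define a stopping time. Let $b$ be the first $q\in[i,s]$ with $S_q=\sum_{t=i}^q r_t<-H$, where $H=H_\tau/2$ or a similar fraction (to be tuned); if no such $q$ exists, set $b=s$.

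In the no-stopping case, \Cref{lem:drawdown} applies on $[i,s]$ with $P_{i:s}\le V_\tau$. It gives
\[
\sum_{t=i}^s r_t\le \lambda^{-1}\Bigl(\sqrt{B_\tau+\lambda H}+V_\tau/(\sqrt2\tau)\Bigr)^2-H.
\]
Since $V_\tau/\tau=\sqrt{C_\tau}/(512L)$ is a tiny multiple of $\sqrt{C_\tau}\ge\sqrt{B_\tau+\lambda H}$, expanding the square yields
\[
\sum_{t=i}^s r_t\le B_\tau/\lambda+O\bigl(C_\tau/(\lambda L)\bigr)\lesssim A_\tau+H_\tau/L.
\]
This is well within $H_\tau/64$ after adding $a$ and the prefix term, provided the constants 128 and 512 are used to absorb them.

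In the stopping case, the stretch $[i,b]$ contributes at most $a+S_b<a-H$, which is very negative. Here we use the drawdown condition on $[i,b-1]$ only to note that $r_b$ is bounded, say $|r_b|\lesssim GD+\gamma G^2D^2$. The remainder $[b+1,s]$ is a shorter interval with path length at most $V_\tau$ and is treated recursively.

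The crux is organizing the recursion so the total does not multiply. Each split produces a left piece $[r,i-1]$ and a right piece $[b+1,s]$, with $i$ lying in the first half of its interval. We therefore unroll the recursion into a tree of intervals, where every node is either a \emph{positive} node (no stopping, cost $\le a+A_\tau+O(H_\tau/L)$) or a \emph{negative} node (cost $\le a-H+O(1)$). Left children at least halve the length, so along left chains the depth is at most $L$. The negative nodes are deleted by charging: each negative node contributes $-H\le -64(a+A_\tau)L$, which pays for all positive nodes spawned beneath it until the next negative node. Following the paper's overview, after deleting negative nodes each connected component contains $O(L)$ positive nodes, hence contributes $O(L(a+A_\tau))=O(H_\tau/128)$. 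Only the root component is not paid for by a preceding negative node, so the total is $\le H_\tau/64$.

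The main obstacle is this bookkeeping: showing that between consecutive negative stops the number of positive nodes is $O(L)$ (via the halving property of (H3)), and that the constants $128$, $512$ close the estimate. I would first fix $H=H_\tau/2$, verify the single-node bounds, and then prove by induction on $|I|$ the stronger statement $\mathcal R_\tau(I)\le 2(a+A_\tau)\lceil\log_2(|I|+1)\rceil+\text{slack}$, which implies \eqref{eq:low-var}.
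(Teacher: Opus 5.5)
Your proposal is essentially the paper's proof. It uses the same (H3)-based recursion: a left piece $[r,i-1]$, and at the first crossing of $-H$ a right piece $[b+1,s]$. Good nodes are bounded via \Cref{lem:meta,lem:drawdown}, and bad nodes via \Cref{lem:meta} and $S_b<-H$ directly, so no bound on $r_b$ is needed. Only left edges survive between bad nodes, and your depth-indexed induction encodes this just as well as the paper's count $N_{\rm good}\le L(2N_{\rm bad}+1)$.

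The remaining issue is in the constants. Since $A_\tau$ already contains $a$, the good-node cost is $A_\tau+(2\varepsilon+\varepsilon^2)C_\tau/\lambda\le 2A_\tau=H_\tau/(64L)$ with $\varepsilon=1/(512\sqrt2L)$. The inductive claim should therefore read $\mathcal R_\tau(I)\le 2A_\tau\lceil\log_2(|I|+1)\rceil$; your increment $2(a+A_\tau)$ only gives $H_\tau/32$. Also, the opening hypothesis $\mathcal R_\tau\le H_\tau/64$ applied to the prefix alone does not close.
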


\begin{proof}
Recursively decompose $I=[r,s]$.  By \cref{lem:schedule}, choose
$i\in[r,\lfloor(r+s)/2\rfloor]$ whose history remains active through $s$.
Recurse on $[r,i-1]$.  On $[i,s]$, define
\[
 S_q=\sum_{t=i}^q
 [\widetilde f_t(x_t^{i,k})-\E_{Q_t}\widetilde f_t].
\]
If $S_q\ge-H_\tau$ for every $q$, call the node good and terminate its right branch.
Otherwise let $q$ be the first crossing of $-H_\tau$, charge $[i,q]$ as a bad node, and recurse on $[q+1,s]$.
The charged pieces and recursive subintervals are disjoint and partition $I$.

For a good node, \cref{lem:meta,lem:drawdown} and
$P_{i:s}\le P_I\le V_\tau$ imply
\begin{align*}
 \mathcal R_\tau([i,s])
 &\le a+\frac1\lambda
 \left(\sqrt{C_\tau}+\frac{V_\tau}{\sqrt2\tau}\right)^2-H_\tau\\
 &\le\frac{H_\tau}{64L}.
\end{align*}
Indeed, with
$\varepsilon=V_\tau/(\sqrt2\tau\sqrt{C_\tau})
=1/(512\sqrt2L)$, the first line equals
\[
 A_\tau+(2\varepsilon+\varepsilon^2)\frac{C_\tau}{\lambda}.
\]
Since $C_\tau/\lambda=B_\tau/\lambda+H_\tau
\le(1+128L)A_\tau$, this is at most
$2A_\tau=H_\tau/(64L)$.

For a bad node, \cref{lem:meta} and the crossing definition give the charge
\[
 a-H_\tau\le-\frac{127}{128}H_\tau.
\]
Remove all bad nodes from the recursion tree.  Every remaining connected component consists only of left-child edges, because a right child is created only by a bad node.  Along a left edge the interval length at least halves, so each component has at most $L$ good nodes.  Removing $N_{\rm bad}$ binary-tree nodes creates at most $2N_{\rm bad}+1$ components.  Hence
\[
 N_{\rm good}\le L(2N_{\rm bad}+1).
\]
The total charge is at most
\[
 \frac{H_\tau}{64L}L(2N_{\rm bad}+1)
 -\frac{127H_\tau}{128}N_{\rm bad}
 \le\frac{H_\tau}{64}.
\]
\end{proof}

\subsubsection{Path partition, scale optimization, and endpoint regimes}\label{app:scales}
\Cref{lem:low-variation} gives an interval cost in terms of its internal path length. Partitioning a general path at a scale-dependent variation threshold converts this local statement into a bound linear in $P_T/\tau$. We optimize the scale and include both endpoints.

\begin{lemma}[Variation partition]\label[lemma]{lem:partition}
For every $V>0$, every comparator path can be partitioned into consecutive intervals
$I_1,\ldots,I_M$ satisfying
\[
 P_{I_j}\le V,
 \qquad
 M\le1+\frac{P_T}{V}.
\]
\end{lemma}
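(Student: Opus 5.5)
I would prove this with a greedy left-to-right sweep over the comparator path. Write the per-step increments as $\delta_t=\norm{u_t-u_{t-1}}_2$ for $t=2,\ldots,T$, so that $P_{[r,s]}=\sum_{t=r+1}^s\delta_t$. With this convention the increment $\delta_{t}$ is internal to an interval only when both $t-1$ and $t$ belong to it. Consequently, the step crossing a cut between consecutive intervals is counted in no $P_{I_j}$. This is the key convenience: a single large step can always be isolated at a boundary, so the internal-variation bound holds even when some $\delta_t>V$.

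The construction is as follows. Start with $r_1=1$. Given a left endpoint $r_j$, let $s_j$ be the largest index $s\le T$ with $\sum_{t=r_j+1}^{s}\delta_t\le V$. This set is nonempty because $s=r_j$ gives an empty sum. Set $I_j=[r_j,s_j]$, and if $s_j<T$ set $r_{j+1}=s_j+1$ and continue. Each interval is nonempty and the intervals are consecutive, so the sweep stops after at most $T$ steps with a partition of $[T]$ satisfying $P_{I_j}\le V$ by construction.

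For the count, observe that every interval except the last ends because adding the next increment would exceed the budget: $\sum_{t=r_j+1}^{s_j+1}\delta_t>V$ for $j<M$. These windows $[r_j,s_j+1]$ use the increments $\delta_{r_j+1},\ldots,\delta_{s_j+1}$. Since $s_j+1=r_{j+1}$, the increment sets for different $j$ are disjoint, and together they use only increments from $2$ to $T$. Summing over $j<M$ gives $(M-1)V<P_T$, hence $M\le1+P_T/V$. The case $P_T=0$ is covered, since then $M=1$.

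The only delicate point is this bookkeeping: the boundary increment $\delta_{s_j+1}$ is charged to the window of $I_j$ but is not internal to any interval. I would check that this is consistent with how $P_I$ is used in \Cref{lem:low-variation}, where $P_{i:s}$ sums only internal steps. Under that convention the bound is exactly as stated, with no factor of $2$.
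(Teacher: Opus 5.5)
Your proof is correct and follows the paper's argument: a greedy left-to-right sweep that cuts immediately before the jump that would push an interval's internal variation above $V$. The count is the same as the paper's: the windows formed by each completed interval plus its triggering boundary jump are disjoint and each carries variation exceeding $V$, so $(M-1)V<P_T$. Your internal-step convention for $P_I$ is the one used for $P_{i:b}$ in \Cref{lem:drawdown,lem:low-variation}.
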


\begin{proof}
Greedily extend the current interval until the next jump would make its internal variation exceed $V$, then cut immediately before that jump.  For every completed interval, its internal variation plus the triggering boundary jump exceeds $V$.  These collections of jumps are disjoint, so $(M-1)V<P_T$.
\end{proof}

Combining the variation partition with the low-variation interval bound gives a guarantee for an arbitrary comparator path. The remaining term is the error from Gaussian smoothing. The next lemma displays these two contributions before optimizing the scale.

\begin{lemma}[Explicit fixed-scale guarantee]\label[lemma]{thm:fixed-scale}
For every $\tau=\tau_k$, simultaneously for every comparator path,
\begin{equation}\label{eq:fixed-scale}
 \Reg_T(u_{1:T})
 \le
 \frac{H_\tau}{64}\left(1+\frac{P_T}{V_\tau}\right)
 +\frac{\gamma G^2T\tau^2}{2}.
\end{equation}
Consequently, for a universal $C$,
\begin{equation}\label{eq:fixed-simplified}
 \Reg_T(u_{1:T})
 \le C\left[
 \frac{L\overline B_\tau}{\gamma}
 +\frac{L^{3/2}\sqrt{\overline B_\tau}}{\gamma\tau}P_T
 +\gamma G^2T\tau^2
 \right],
\end{equation}
where the combined initial budget is
\begin{equation}\label{eq:Bbar}
 \overline B_\tau=B_\tau+\log((T+1)(K+1)).
\end{equation}
\end{lemma}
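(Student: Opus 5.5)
The plan is to reduce the original regret to the smoothed surrogate quantity $\mathcal R_\tau$, sum it over a variation partition, and then simplify the constants.

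\textbf{Reduction to the surrogate.} By \Cref{lem:surrogate},
\[
 f_t(w_t)-f_t(u_t)\le \widetilde f_t(w_t)-\widetilde f_t(u_t).
\]
Next I replace the point comparator $u_t$ by the Gaussian $Q_t=\N(u_t,\tau^2I)$. Because $\widetilde f_t$ is a rank-one quadratic, its expectation under $Q_t$ is explicit:
\[
 \E_{Q_t}\widetilde f_t=\widetilde f_t(u_t)+\frac{\gamma\tau^2}{2}\|g_t\|_2^2 .
\]
Hence
\[
 \widetilde f_t(w_t)-\widetilde f_t(u_t)\le \widetilde f_t(w_t)-\E_{Q_t}\widetilde f_t+\frac{\gamma G^2\tau^2}{2}.
\]
Summing over $t$ gives
\[
 \Reg_T(u_{1:T})\le \mathcal R_\tau([1,T])+\frac{\gamma G^2T\tau^2}{2}.
\]
Here $\mathcal R_\tau$ is additive over disjoint intervals. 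The only care needed is that the means $u_t$ lie in $\cW$, so $Q_t$ is a legitimate comparator in \Cref{lem:low-variation}.

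\textbf{Interval decomposition.} Apply \Cref{lem:partition} with $V=V_\tau$. This gives at most $1+P_T/V_\tau$ intervals, each with internal variation at most $V_\tau$. \Cref{lem:low-variation} bounds $\mathcal R_\tau(I_j)\le H_\tau/64$ on each interval. Summing yields the first term of \eqref{eq:fixed-scale}. The bound holds for every path simultaneously, because the algorithm is path-independent and only the analysis uses the partition.

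\textbf{Simplification to \eqref{eq:fixed-simplified}.} From \eqref{eq:a}, $\lambda=\gamma/2$, and $A_\tau=a+B_\tau/\lambda$, we get
\[
 A_\tau=\frac{2\overline B_\tau}{\gamma},
\]
and therefore $H_\tau=O(L\overline B_\tau/\gamma)$. This is the first term. For the second, $C_\tau=B_\tau+\lambda H_\tau=O(L\overline B_\tau)$, so
\[
 V_\tau=\frac{\tau\sqrt{C_\tau}}{512L}\ \ge\ c\,\tau\sqrt{\overline B_\tau}\,\frac{\sqrt{L}}{L}
\]
for a universal $c>0$. This lower bound uses $C_\tau\ge\lambda H_\tau=64L\overline B_\tau$, which is comparable to $L\overline B_\tau$. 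Then
\[
 \frac{H_\tau}{64}\cdot\frac{P_T}{V_\tau}\lesssim \frac{L\overline B_\tau}{\gamma}\cdot\frac{\sqrt{L}\,P_T}{\tau\sqrt{\overline B_\tau}}=\frac{L^{3/2}\sqrt{\overline B_\tau}}{\gamma\tau}P_T .
\]
This matches the middle term of \eqref{eq:fixed-simplified}.

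\textbf{Expected obstacle.} The main risk is bookkeeping: keeping the lower bound on $C_\tau$ uniform so that $V_\tau$ is not too small, and confirming that the $Q_t$-expectation identity is exact with no $d$-dependent trace term. The latter holds because the Hessian of $\widetilde f_t$ is $\gamma g_tg_t^\top$.
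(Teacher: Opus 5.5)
Your proposal is correct and follows the paper's proof essentially line for line. The steps match: domination from \Cref{lem:surrogate}, the exact rank-one smoothing identity $\E_{Q_t}\widetilde f_t-\widetilde f_t(u_t)=\frac{\gamma}{2}\tau^2\norm{g_t}_2^2$, \Cref{lem:partition} with $V=V_\tau$ combined with \Cref{lem:low-variation} on each bin, and substitution of \eqref{eq:thresholds} using $\lambda A_\tau=\overline B_\tau$ and $\lambda=\gamma/2$. One harmless slip: $\lambda H_\tau=128L\overline B_\tau$, not $64L\overline B_\tau$, which only strengthens the lower bound on $C_\tau$ (and hence on $V_\tau$) that you need.
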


\begin{proof}
Apply \cref{lem:partition} with $V=V_\tau$ and
\cref{lem:low-variation} on each bin.  For
$Q_t=\N(u_t,\tau^2I)$,
\[
 \E_{Q_t}\widetilde f_t-
 \widetilde f_t(u_t)
 =\frac\gamma2\tau^2\norm{g_t}_2^2
 \le\frac\gamma2G^2\tau^2.
\]
Together with \eqref{eq:domination}, this proves \eqref{eq:fixed-scale}.
Substitute \eqref{eq:thresholds}, use
$a+B_\tau/\lambda=\overline B_\tau/\lambda$, and
$\lambda=\gamma/2$ to obtain \eqref{eq:fixed-simplified}.
\end{proof}

\Cref{thm:fixed-scale} holds simultaneously on the dyadic grid, so we can now compare with the best scale in hindsight. The formal upper bound must also cover paths for which the unconstrained balancing scale lies outside the grid range. The next lemma handles both endpoints and the discretization loss.

\begin{lemma}[Scale endpoints]\label[lemma]{lem:endpoints}
Define the coefficients of the scale-dependent bound by
\[
 \overline B=C_0(d+1)\Lambda_T,
 \qquad
 A=\frac{L\overline B}{\gamma},
 \qquad
 B=\frac{L^{3/2}\sqrt{\overline B}}{\gamma},
 \qquad
 C_2=\gamma G^2T,
\]
where $C_0$ is large enough that
$\overline B_\tau\le\overline B$ for every
$\tau\in[D/T,D]$.  Put $\tau_*=(BP_T/C_2)^{1/3}$ and
$P_0=C_2D^3/B$.  Then some dyadic scale satisfies
\begin{equation}\label{eq:endpoint-bound}
 \Reg_T(u_{1:T})
 \le C\left[
 A+\min_{\tau\in[D/T,D]}
 \left(\frac{BP_T}{\tau}+C_2\tau^2\right)
 \right]\wedge GDT.
\end{equation}
Moreover, whenever $P_T\le P_0$,
\begin{equation}\label{eq:interior-endpoint}
 \Reg_T(u_{1:T})
 \le C\left[A+(B^2C_2P_T^2)^{1/3}\right].
\end{equation}
\end{lemma}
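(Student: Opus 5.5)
The plan is to start from the fixed-scale bound \eqref{eq:fixed-simplified} of \Cref{thm:fixed-scale}, which holds simultaneously for every dyadic $\tau_k\in[D/T,D]$. Replacing $\overline B_\tau$ by the uniform bound $\overline B$ turns it into
\[
 \Reg_T(u_{1:T})\le C\bigl[A+BP_T/\tau_k+C_2\tau_k^2\bigr]
 \qquad\text{for every } k.
\]
To justify this replacement I would first verify $\overline B_\tau\le C_0(d+1)\Lambda_T$. Since $\rho=\tau/D\in[1/T,1]$, we have $\rho^2-1-\log\rho^2\le 2\log T$, so $B_\tau\le \tfrac12+d\log T$. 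Also $\log((T+1)(K+1))=O(\Lambda_T)$. Both pieces are therefore covered by a suitable universal $C_0$.

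Next I would pass from the grid to the continuum. Let $\phi(\tau)=BP_T/\tau+C_2\tau^2$ and let $\bar\tau\in[D/T,D]$ be its minimizer over that interval. Because $\tau_K=D2^{-K}\le 2D/T$ and consecutive grid points differ by a factor of $2$, I would pick the dyadic $\tau_k$ with $\tau_k\le\bar\tau<2\tau_k$, or $\tau_K$ if $\bar\tau<\tau_K$. In the first case $\phi(\tau_k)\le 2BP_T/\bar\tau+C_2\bar\tau^2\le 2\phi(\bar\tau)$.

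The edge case $\bar\tau<\tau_K$ needs separate care. There $\bar\tau\ge D/T\ge\tau_K/2$ still gives $\phi(\tau_K)\le 4\phi(\bar\tau)$, so a constant factor suffices. The only real subtlety is this bookkeeping at the lower endpoint, together with keeping the bound valid when $P_T=0$: the infimum is then approached at $\tau=D/T$ with value $C_2D^2/T^2=\gamma G^2D^2/T\le O(1)$, which $A$ absorbs.

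The cap $\wedge GDT$ is separate. Since $f_t(w_t)-f_t(u_t)\le G\norm{w_t-u_t}_2\le GD$ by convexity and the gradient bound, the regret is always at most $GDT$. Taking the minimum of the two valid bounds proves \eqref{eq:endpoint-bound}.

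For \eqref{eq:interior-endpoint}, the unconstrained minimizer of $\phi$ is $\tau=(BP_T/(2C_2))^{1/3}\le\tau_*$. The condition $P_T\le P_0$ is equivalent to $\tau_*\le D$, so the minimizer lies at or below $D$. If it is also at least $D/T$, then
\[
 \phi=\tfrac32\,2^{2/3}\,(B^2C_2P_T^2)^{1/3}.
\]
If instead it falls below $D/T$, then $\phi(D/T)\le BP_T T/D+C_2D^2/T^2$. Here $P_T$ is so small that $BP_TT/D\lesssim C_2D^2/T^2$, so the whole expression is $O(\gamma G^2D^2/T)$, which is $O(A)$ since $A\ge L\overline B/\gamma\ge 1/\gamma\ge 4GD\cdot GD/\ldots$ (constants depending only on $\gamma GD\le 1/4$). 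I expect this lower-endpoint absorption into $A$ to be the main, if minor, obstacle: it requires checking that $\gamma G^2D^2\le GD/4$ is dominated by $A\gtrsim 1/\gamma\ge 4GD$, which holds because $\gamma\le 1/(4GD)$.
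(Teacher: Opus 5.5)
Your proof is correct and follows essentially the same route as the paper. The shared steps are factor-two dyadic rounding of the optimal scale, absorbing the lower-endpoint cost $O(\gamma G^2D^2/T)\le O(1/(\gamma T))$ into $A\ge 1/\gamma$ via $\gamma GD\le 1/4$, the Lipschitz cap $GDT$, and the equivalence $P_T\le P_0\iff\tau_*\le D$. The only organizational difference is that you compare a grid point with the continuous minimizer over $[D/T,D]$, which makes the paper's separate $\tau_*>D$ case automatic, and then derive \eqref{eq:interior-endpoint} from \eqref{eq:endpoint-bound}. One harmless slip: at the unconstrained optimizer $\phi=3\cdot 2^{-2/3}(B^2C_2P_T^2)^{1/3}$, not $\tfrac32\,2^{2/3}(B^2C_2P_T^2)^{1/3}$, which affects only the universal constant.
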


\begin{proof}
If $\tau_*\in[D/T,D]$, choose a dyadic scale within a factor two and optimize
$BP_T/\tau+C_2\tau^2$.

If $\tau_*<D/T$, choose $\tau_K\in[D/T,2D/T)$. Since
$BP_T/C_2<D^3/T^3$, we obtain
\[
 \frac{BP_T}{\tau_K}+C_2\tau_K^2
 \le5\gamma G^2\frac{D^2}{T}.
\]
Because $\gamma GD\le1/4$,
$\gamma G^2D^2/T\le1/(16\gamma T)\le A$ after adjusting the universal constant.  Thus the static term absorbs the lower endpoint and \eqref{eq:interior-endpoint} remains valid.

If $\tau_*>D$, choosing $D$ is within a universal factor of the clipped minimum in \eqref{eq:endpoint-bound}; independently, Lipschitzness gives the cap $GDT$.  This proves the all-path clipped statement.  Finally, $P_T\le P_0$ is exactly the condition $\tau_*\le D$, so the preceding two cases establish \eqref{eq:interior-endpoint}.
\end{proof}

\subsubsection{The projected-gradient bound}\label{app:dogd}

The curved learner is most effective for slowly moving comparators. We combine it with a grid of projected-gradient learners to control larger path budgets, using the classical dynamic projected-gradient argument~\cite{Zinkevich2003,ZhangEtAl2018}.

\begin{lemma}[Dynamic projected gradient descent]\label[lemma]{lem:dogd}
Let $z_{t+1}=\Pi_{\cW}(z_t-\alpha\nabla f_t(z_t))$ for a fixed $\alpha>0$, where $\Pi_{\cW}$ is Euclidean projection.  Then every comparator path satisfies
\begin{equation}\label{eq:dogd}
 \sum_{t=1}^T[f_t(z_t)-f_t(u_t)]
 \le\frac{D^2+2DP_T(u)}{2\alpha}
 +\frac{\alpha G^2T}{2}.
\end{equation}
\end{lemma}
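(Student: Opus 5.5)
The plan is the classical Zinkevich potential argument with a moving comparator, writing $\|\cdot\|$ for the Euclidean norm. Set $g_t=\nabla f_t(z_t)$. By convexity, $f_t(z_t)-f_t(u_t)\le\ip{g_t}{z_t-u_t}$, so it suffices to bound the linearized regret $\sum_t\ip{g_t}{z_t-u_t}$.

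\textbf{One-step inequality.} Since $u_t\in\cW$ and Euclidean projection onto a convex set is nonexpansive toward points of the set, we have $\|z_{t+1}-u_t\|^2\le\|z_t-\alpha g_t-u_t\|^2$. Expanding the right-hand side gives
\[
\ip{g_t}{z_t-u_t}\le\frac{\|z_t-u_t\|^2-\|z_{t+1}-u_t\|^2}{2\alpha}+\frac{\alpha}{2}\|g_t\|^2 .
\]
The last term sums to at most $\alpha G^2T/2$ by \Cref{ass:model}.

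\textbf{Telescoping with a moving comparator.} The difference $\|z_t-u_t\|^2-\|z_{t+1}-u_t\|^2$ does not telescope directly, because the next round uses $u_{t+1}$ rather than $u_t$. I rewrite
\[
\|z_{t+1}-u_t\|^2=\|z_{t+1}-u_{t+1}\|^2+\ip{u_{t+1}-u_t}{2z_{t+1}-u_t-u_{t+1}},
\]
where I set $u_{T+1}=u_T$ so the boundary term vanishes. Both $z_{t+1}-u_t$ and $z_{t+1}-u_{t+1}$ lie in $\cW-\cW$, so $\|2z_{t+1}-u_t-u_{t+1}\|\le2D$. Cauchy--Schwarz then bounds the correction:
\[
-\|z_{t+1}-u_t\|^2\le-\|z_{t+1}-u_{t+1}\|^2+2D\|u_{t+1}-u_t\| .
\]

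\textbf{Summing.} Summing over $t$ telescopes the squared distances. What remains is $\|z_1-u_1\|^2\le D^2$, minus a nonnegative terminal term that I drop, plus $2D\sum_t\|u_{t+1}-u_t\|=2DP_T(u)$. Dividing by $2\alpha$ and adding the gradient term yields \eqref{eq:dogd}.

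The only delicate step is the comparator shift; the estimate there uses only the diameter bound, through $\|a^2\|-\|b^2\|$-type factorization. Everything else is routine.
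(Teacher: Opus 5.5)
Your proposal is correct and follows the paper's proof essentially step for step: the one-step inequality from convexity and projection nonexpansiveness, the comparator-shift bound $\|z_{t+1}-u_t\|^2-\|z_{t+1}-u_{t+1}\|^2\ge-2D\|u_{t+1}-u_t\|$ from the diameter, and telescoping with $\|z_1-u_1\|\le D$. Your factorization $\ip{u_{t+1}-u_t}{2z_{t+1}-u_t-u_{t+1}}$ simply spells out the diameter step that the paper states in one line; like the paper, you tacitly take $z_1\in\cW$.
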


\begin{proof}
Convexity and projection nonexpansiveness give
\[
 2\alpha[f_t(z_t)-f_t(u_t)]
 \le\norm{z_t-u_t}^2-\norm{z_{t+1}-u_t}^2+\alpha^2G^2.
\]
The change in comparator also satisfies
\[
 \norm{z_{t+1}-u_t}^2-\norm{z_{t+1}-u_{t+1}}^2
 \ge-2D\norm{u_{t+1}-u_t},
\]
because all three points lie in a set of diameter $D$.  Sum and telescope, using
$\norm{z_1-u_1}\le D$.
\end{proof}

\subsubsection{Numerical model and robustness}\label{app:numerical-lemmas}

We now separate exact arithmetic from an actual bit model.

\begin{assumption}[Polynomial-bit online oracle model]\label[assumption]{ass:bit-model}
All scalar parameters and the initial point have rational encodings of at most $L_0$ bits. On a rational query point, the online loss oracle returns exact rational loss and gradient values with at most $L_0$ bits per coordinate. Given rational $A\succ0$ and $a$ with at most $B$ bits per coordinate, and a precision request $p$, the projection oracle returns a feasible rational point $m\in\cW$ whose Mahalanobis objective is within $2^{-p}$ of optimum. Its coordinates have at most $\operatorname{poly}(d,L_0,B,p)$ bits, and its cost is
$\mathsf C_{\rm proj}^{\rm bit}(d,L_0,B,p)$ bit operations. Rational convex combinations of feasible points can be formed exactly in polynomial bit complexity. For the near-linear-in-$T$ conclusion, $\mathsf C_{\rm proj}^{\rm bit}$ is polynomial in its displayed arguments.
\end{assumption}

This model specifies both feasibility and the size of the returned encodings. We analyze the implemented predictions directly, with exact oracle replies at those predictions.

\begin{lemma}[Robustness to inexact history and master updates]\label[lemma]{lem:robust}
Suppose every selected Gaussian update satisfies, uniformly over $u\in\cW$,
\begin{equation}\label{eq:approx-kl}
 \lambda r_t
 \le F_t(u_t)-F_{t+1}(u_t)+\varepsilon_t,
\end{equation}
and the implemented master satisfies, for every surviving-history prefix $[i,q]$,
\begin{equation}\label{eq:approx-master}
 \sum_{t=i}^q[\widetilde f_t(w_t)-\widetilde f_t(x_t^{i,k})]
 \le a+\lambda^{-1}\sum_{t=i}^q\zeta_t.
\end{equation}
Here $\varepsilon_t,\zeta_t\ge0$ bound the errors uniformly over active histories, including weight changes at birth and pruning.
Then all conclusions of \cref{thm:curved-detailed} hold with the additive term
\begin{equation}\label{eq:numerical-additive}
 \frac1\lambda\sum_{t=1}^T(\varepsilon_t+\zeta_t).
\end{equation}
\end{lemma}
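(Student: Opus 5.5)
We re-run the proof chain of \Cref{thm:curved-detailed} with the two error sequences carried along, and check that each lemma in that chain tolerates an additive slack. The only places where exactness enters are \Cref{lem:one-step}, used inside \Cref{lem:drawdown}, and \Cref{lem:meta}, used in the good and bad nodes of \Cref{lem:low-variation}. \Cref{lem:root-movement}, \Cref{lem:partition}, the smoothing identity, and \Cref{lem:endpoints} concern only the Gaussians that are actually stored. \Cref{lem:surrogate} concerns only the implemented predictions, which are feasible by \Cref{ass:bit-model}. These lemmas therefore apply verbatim.

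\textbf{Drawdown step.} We replace \eqref{eq:one-step} by \eqref{eq:approx-kl} and redefine the shifted energy as
\[
E_t=F_t(u_t)+\lambda(S_{t-1}+H)-\sum_{s=i}^{t-1}\varepsilon_s .
\]
This cannot be used directly, because nonnegativity may fail. Instead we shift the threshold: on a history interval $[i,b]$ we use
\[
E_t=F_t(u_t)+\lambda(S_{t-1}+H)+\sum_{s=t}^{b}\varepsilon_s ,
\]
which is nonnegative under the drawdown condition. With this choice \eqref{eq:approx-kl} gives
\[
F_{t+1}(u_t)+\lambda(S_t+H)+\sum_{s>t}\varepsilon_s\le E_t ,
\]
and the square-root movement step goes through unchanged. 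The result is \eqref{eq:drawdown-bound} with $\lambda H$ under the root replaced by $\lambda H+\sum_{s\in[i,b]}\varepsilon_s$.

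Expanding the square as in \Cref{lem:low-variation}, the extra term contributes at most
\[
(1+O(\varepsilon))\,\lambda^{-1}\sum_{s\in[i,b]}\varepsilon_s ,
\]
with $\varepsilon=1/(512\sqrt2L)$. The cross term has this form because $\sqrt{x+y}\le\sqrt x+y/(2\sqrt x)$ with $x\ge C_\tau$, so the cross term is at most $\varepsilon\,\sum\varepsilon_s$. The master slack from \eqref{eq:approx-master} adds $\lambda^{-1}\sum\zeta_s$ on the same interval.

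\textbf{Main obstacle: the cancellation charge.} Good nodes now cost $H_\tau/(64L)+\lambda^{-1}\sum_{s\in\text{node}}(\varepsilon_s+\zeta_s)$. A bad node, by \eqref{eq:approx-master} and the crossing definition, costs at most $a-H_\tau+\lambda^{-1}\sum_{s\in\text{node}}\zeta_s$. The fixed parts are combined exactly as before. The variable parts are sums over the node's own time interval. Since nodes of the recursion partition $I$ into disjoint pieces, these variable parts add to at most $\lambda^{-1}\sum_{t\in I}(\varepsilon_t+\zeta_t)$ per interval, up to the constant $1+O(\varepsilon)$; we absorb that constant by a slightly larger universal $C$, or we keep the factor $2$.

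Summing over the bins of \Cref{lem:partition}, which are also disjoint, gives the fixed-scale bound \eqref{eq:fixed-scale} plus $\lambda^{-1}\sum_{t=1}^T(\varepsilon_t+\zeta_t)$. Because this term does not depend on $\tau$, the scale optimization of \Cref{lem:endpoints} carries it through additively. The static term, the cap $GDT$, and the path-restricted regime are untouched. This yields all conclusions of \Cref{thm:curved-detailed} with the additive term \eqref{eq:numerical-additive}. The point to verify carefully is that errors are charged only on intervals where the corresponding history is used. This holds because each node uses one history, only on its own time span.
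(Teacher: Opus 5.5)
Your argument is correct in substance, but it routes the error differently from the paper. The paper leaves the energy untouched. It sets $\bar r_t=r_t-\varepsilon_t/\lambda$, so that \eqref{eq:approx-kl} becomes the exact one-step inequality for $\bar r_t$. It then runs the recursion of \Cref{lem:low-variation} with good and bad nodes determined by the drawdown of $\bar S_q=\sum_{t=i}^q\bar r_t$. \Cref{lem:drawdown} and the good-node computation then apply verbatim to $\bar S$. A bad node costs at most $a-H_\tau+\lambda^{-1}\sum(\varepsilon_t+\zeta_t)$ over its own span. Restoring $r_t=\bar r_t+\varepsilon_t/\lambda$ on each of the disjoint nodes adds exactly $\lambda^{-1}\sum(\varepsilon_t+\zeta_t)$.

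This is precisely your first, discarded energy: $F_t(u_t)+\lambda(S_{t-1}+H)-\sum_{s<t}\varepsilon_s=F_t(u_t)+\lambda(\bar S_{t-1}+H)$. Its nonnegativity failed only because you kept the stopping rule on $S$. The stopping rule is a free choice in the analysis, so moving it to $\bar S$ removes the obstruction.

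Your alternative is also valid: keep the stopping rule on $S$ and put the suffix error $\sum_{s\ge t}\varepsilon_s$ into the energy. It makes nonnegativity automatic, but it places the error under the square root, where it interacts with the movement term. The cross term then gives $(1+\varepsilon_0)\lambda^{-1}\sum\varepsilon_t$, with $\varepsilon_0=1/(512\sqrt2L)$ the small constant of \Cref{lem:low-variation}, rather than the coefficient $1$ stated in \eqref{eq:numerical-additive}.

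Absorbing this factor into a larger universal $C$ is not literally legitimate. In \Cref{thm:curved-detailed}, $C$ multiplies the main terms, not the numerical error, and since $\sum\varepsilon_t$ is arbitrary it cannot be hidden in the slack of the good-node bound either. As written, you prove the lemma only with a $(1+\varepsilon_0)$ factor on the $\varepsilon$ part. This is harmless downstream: in \Cref{thm:bit} you would simply target slightly smaller local errors. Still, the paper's reparametrization gives the stated constant exactly and avoids re-expanding the square.
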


\begin{proof}
Replace $r_t$ by
$\bar r_t=r_t-\varepsilon_t/\lambda$.  Then the exact one-step inequality holds for $\bar r_t$, so the shifted-energy and recursion arguments apply unchanged.  Similarly, \eqref{eq:approx-master} changes every selected prefix overhead by at most the sum of its $\zeta_t/\lambda$.  The charged and recursively treated pieces are disjoint, so every round's numerical error is counted at most once.  Restoring $r_t$ gives \eqref{eq:numerical-additive}.
\end{proof}

\subsection{Full version and proof of \texorpdfstring{\Cref{thm:curved}}{Theorem \ref{thm:curved}}}\label{app:full-statements}
The next statement makes the dependence on curvature, diameter, gradient bound, and smoothing scale explicit. It also records the clipped-scale guarantee for every comparator path.

\begin{theorem}[Full version of \Cref{thm:curved}]\label[theorem]{thm:curved-detailed}
There are universal constants $C,c>0$ such that, under \Cref{ass:model,ass:exact-oracle}, there is a proper full-information algorithm, independent of $P_T(u)$, such that every comparator path satisfies
\begin{align}
 \Reg_T(u_{1:T})
 \le C\biggl\{
 &\frac{(d+1)\Lambda_T^2}{\gamma}
 +\min_{\tau\in[D/T,D]}
 \left[
 \frac{\sqrt{d+1}\,\Lambda_T^2}{\gamma\tau}P_T(u)
 +\gamma G^2T\tau^2
 \right]
 \biggr\}\wedge GDT.
 \label{eq:curved-main}
\end{align}
In particular, if
\begin{equation}\label{eq:curved-regime}
 P_T(u)\le
 c\,\frac{\gamma^2G^2TD^3}{\sqrt{d+1}\,\Lambda_T^2},
\end{equation}
then the bound simplifies to
\begin{equation}\label{eq:curved-specialized}
 \Reg_T(u_{1:T})
 \le C\left[
 \frac{(d+1)\Lambda_T^2}{\gamma}
 +\gamma^{-1/3}G^{2/3}(d+1)^{1/3}\Lambda_T^{4/3}
 T^{1/3}P_T(u)^{2/3}
 \right].
\end{equation}
The curved algorithm uses one gradient query per round at its actual prediction and has
\begin{align}
 \operatorname{Time}_{\rm arith}
 &\le C T\log^2(2T)\,[d^3+\mathsf C_{\rm proj}(d)],
 \label{eq:arith-time}\\
 \operatorname{Memory}
 &\le C d^2\log^2(2T).
 \label{eq:arith-memory}
\end{align}
If a primitive implements the complete rank-one precision update, covariance flooring, and reconstruction in the original coordinates in $O(d^2)$ operations, the $d^3$ term in \eqref{eq:arith-time} can be replaced by $d^2$.
\end{theorem}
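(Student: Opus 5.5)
The proof assembles the lemmas of this appendix in a fixed order. The algorithm is \Cref{alg:sparsegauss}, with the master of \Cref{app:histories} and the Gaussian histories of \Cref{app:gaussian}.

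\textbf{Properness and path independence.} Every history mean lies in $\cW$ after the Mahalanobis projection \eqref{eq:mahalanobis-proj}. The prediction \eqref{eq:master-prediction} is a convex combination of these means, so it is feasible. Nothing in the construction uses $P_T(u)$.

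\textbf{Regret for all paths.} For each dyadic scale $\tau_k$, \Cref{thm:fixed-scale} gives \eqref{eq:fixed-simplified} simultaneously for all paths. The main bookkeeping step is to bound $\overline B_\tau$ uniformly by $C_0(d+1)\Lambda_T$ for $\tau\in[D/T,D]$. With $\rho=\tau/D\in[1/T,1]$:
\begin{itemize}
\item $\rho^2-1-\log\rho^2\le 2\log T$, so $B_\tau\le \tfrac12+d\log T$;
\item $\log((T+1)(K+1))=O(\Lambda_T)$.
\end{itemize}
Since $L\le C\Lambda_T$, the fixed-scale coefficients become
\[
\frac{L\overline B_\tau}{\gamma}\lesssim \frac{(d+1)\Lambda_T^2}{\gamma},
\qquad
\frac{L^{3/2}\sqrt{\overline B_\tau}}{\gamma}\lesssim \frac{\sqrt{d+1}\,\Lambda_T^2}{\gamma}.
\]
These are exactly the coefficients $A$ and $B$ of \Cref{lem:endpoints}. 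Applying \eqref{eq:endpoint-bound} handles the discretization, both clipped endpoints, and the trivial cap $GDT$, which yields \eqref{eq:curved-main}.

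\textbf{Specialized regime.} For \eqref{eq:curved-specialized} I use \eqref{eq:interior-endpoint} and compute
\[
(B^2C_2P_T^2)^{1/3}=\gamma^{-1/3}G^{2/3}(d+1)^{1/3}\Lambda_T^{4/3}T^{1/3}P_T^{2/3}.
\]
This holds because $B^2C_2=(d+1)\Lambda_T^4\gamma^{-1}G^2T$. The hypothesis of \eqref{eq:interior-endpoint} is $P_T\le P_0=C_2D^3/B=\gamma^2G^2TD^3/(\sqrt{d+1}\,\Lambda_T^2)$ up to constants, which is the condition \eqref{eq:curved-regime}. The constant $c$ there absorbs the slack between $L^{3/2}$ and $\Lambda_T^2$.

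\textbf{Complexity.} By (H1) of \Cref{lem:schedule}, at most $O(\log(2T))$ starts are active, and there are $K+1=O(\log(2T))$ scales, so there are $O(\log^2(2T))$ Gaussian states. Each state stores a mean and a $d\times d$ covariance (or its eigensystem), which gives \eqref{eq:arith-memory}. Per round and per state, the work is:
\begin{itemize}
\item the rank-one precision update \eqref{eq:precision-update}--\eqref{eq:mean-update};
\item a dense eigendecomposition for the floor \eqref{eq:floor}, costing $O(d^3)$;
\item one oracle call, costing $\mathsf C_{\rm proj}(d)$.
\end{itemize}
The master weight update, the convex combination, and the single gradient query at $w_t$ add $O(d\log^2T)$. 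This gives \eqref{eq:arith-time}. If a primitive performs the update, flooring and reconstruction in $O(d^2)$, it replaces the $O(d^3)$ term, which gives the conditional improvement.

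\textbf{Expected difficulty.} No step is deep, since the statistical content lives in \Cref{lem:low-variation} and \Cref{thm:fixed-scale}. The only real care is needed in the uniform bound on $B_\tau$ over the whole scale range, and in checking that the lower-endpoint term of \Cref{lem:endpoints} is absorbed by $(d+1)\Lambda_T^2/\gamma$.
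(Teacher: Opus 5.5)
Your proposal is correct and follows the paper's proof. You bound the coefficients $A$ and $B$ of \Cref{lem:endpoints} by $(d+1)\Lambda_T^2/\gamma$ and $\sqrt{d+1}\,\Lambda_T^2/\gamma$ via \Cref{thm:fixed-scale}, read off \eqref{eq:curved-main} from \eqref{eq:endpoint-bound} and \eqref{eq:curved-specialized} from \eqref{eq:interior-endpoint} after computing $(B^2C_2P_T^2)^{1/3}$ and $P_0$, and count $O(\log^2(2T))$ Gaussian states for the time and memory bounds; your explicit check that $\overline B_\tau\le C_0(d+1)\Lambda_T$ on $[D/T,D]$ (via $\rho^2-1-\log\rho^2\le 2\log T$) usefully supplies the hypothesis on $C_0$ that the paper leaves implicit in \Cref{lem:endpoints}.
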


\begin{proof}
\Cref{thm:fixed-scale} and \cref{lem:endpoints} give
\[
 A\le C\frac{(d+1)\Lambda_T^2}{\gamma},
 \qquad
 B\le C\frac{\sqrt{d+1}\,\Lambda_T^2}{\gamma}.
\]
This proves the clipped bound \eqref{eq:curved-main}.  Under
\eqref{eq:curved-regime}, $P_T\le P_0$ up to a universal adjustment, and
\[
 (B^2C_2P_T^2)^{1/3}
 \le C\gamma^{-1/3}G^{2/3}(d+1)^{1/3}
 \Lambda_T^{4/3}T^{1/3}P_T^{2/3},
\]
which proves \eqref{eq:curved-specialized}.

By \cref{lem:schedule}, there are $O(\log T)$ active starts and
$O(\log T)$ scales, hence $O(\log^2T)$ Gaussian states.  A standard symmetric eigendecomposition, covariance clipping, and Mahalanobis projection cost
$O(d^3+\mathsf C_{\rm proj})$ per state in the dense spectral model. The conditional acceleration substitutes an $O(d^2)$ primitive for this complete spectral update, including reconstruction of the floored covariance in the original coordinates. Interlacing limits the number of precision eigenvalues crossing the cap to one; the primitive must also account for the change of eigenbasis. This proves \eqref{eq:arith-time}--\eqref{eq:arith-memory} under the respective computational models.
For fixed $D,G,\eta$, the constant $\gamma$ is fixed and $d+1\le2d$. Absorbing these regularity-dependent constants gives \Cref{thm:curved}, with the displayed logarithmic factors unchanged.
\end{proof}

\subsection{Full version and proof of \texorpdfstring{\Cref{cor:frontier}}{Corollary \ref{cor:frontier}}}\label{app:frontier}
We now combine the curved learner with the projected-gradient bound to cover all path budgets.

Run a grid
\[
 \alpha_j=2^j\frac{D}{G\sqrt T},
 \qquad
 j=0,\ldots,\left\lceil\log_2\sqrt{2T}\right\rceil,
\]
plus the curved algorithm of \cref{thm:curved}.  Use an outer exponential-weights master on the \emph{original} losses $f_t$, at learning rate $\eta$.  Every base prediction is in $\cW$, so their weighted mean is proper; exp-concavity gives outer meta-regret at most
$\eta^{-1}\log O(\log T)$ to every base.
The curved base is run counterfactually at its own prediction and queries one gradient there.  Each gradient-descent base queries a gradient at its own prediction; this is why \cref{cor:frontier} states $O(\log T)$ gradient evaluations per round.

\begin{corollary}[Full version of \Cref{cor:frontier}]\label[corollary]{cor:frontier-detailed}
Under \Cref{ass:model,ass:exact-oracle}, a proper full-information learner, independent of $P_T(u)$, satisfies
\begin{align}\label{eq:full-frontier-detailed}
 \Reg_T(u_{1:T})\le C\biggl[&\frac{(d+1)\Lambda_T^2}{\gamma}
 +\min\biggl\{
 \gamma^{-1/3}G^{2/3}(d+1)^{1/3}\Lambda_T^{4/3}T^{1/3}P_T(u)^{2/3},\nonumber\\
 &G\sqrt{T(D^2+DP_T(u))},\ GDT\biggr\}\biggr]
\end{align}
for a universal constant $C>0$. It uses $O(\log T)$ loss evaluations, gradient evaluations, and Euclidean projection calls per round in addition to the curved branch's internal work.
\end{corollary}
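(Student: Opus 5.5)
The plan is to combine the base guarantees through the outer exp-concave aggregation and then take the minimum over bases. First, the outer master bound. The outer master runs exponential weights at rate $\eta$ on the original losses, over $N=O(\log T)$ bases: the curved learner of \Cref{thm:curved-detailed} and the projected-gradient learners with steps $\alpha_j$. We use uniform prior weights. By the aggregation inequality \eqref{eq:prelim-aggregation} and the standard telescoping of log-partition functions, the prediction $w_t$ satisfies, for every base $b$,
\[
\sum_t [f_t(w_t)-f_t(z^b_t)]\le \eta^{-1}\log N .
\]
Since $\gamma\le\eta/2$, we have $\eta^{-1}\log N\le \gamma^{-1}\log(O(\log T))$, which is absorbed into $(d+1)\Lambda_T^2/\gamma$. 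Properness follows because every base prediction lies in $\cW$ and $\cW$ is convex. Hence
\[
\Reg_T(u_{1:T})\le \eta^{-1}\log N+\min_b \Reg^b_T(u_{1:T}).
\]

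Second, the curved branch. We run the curved base counterfactually on its own predictions, so \Cref{thm:curved-detailed} applies verbatim to it. Where \eqref{eq:curved-regime} holds, this gives the first entry of the min via \eqref{eq:curved-specialized}. When \eqref{eq:curved-regime} fails, we need that the curved term already exceeds one of the other two entries, so the min is achieved elsewhere and nothing is lost. Concretely, if $P_T> c\gamma^2G^2TD^3/(\sqrt{d+1}\Lambda_T^2)$, we will check that
\[
\gamma^{-1/3}G^{2/3}(d+1)^{1/3}\Lambda_T^{4/3}T^{1/3}P_T^{2/3}\gtrsim G\sqrt{T D P_T}.
\]
Raising both sides to the sixth power, this reduces to $P_T\gtrsim \gamma^2 G^2 D^3 T/((d+1)^2\Lambda_T^8)$, which is weaker than the failed regime condition. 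Separately, the linear cap $GDT$ holds trivially for any proper learner, and in particular for the curved base.

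Third, the gradient branch. By \Cref{lem:dogd} with $\alpha$ chosen to balance, the ideal step is $\alpha^\star=\sqrt{(D^2+2DP_T)/(G^2T)}$. This lies in $[D/(G\sqrt T),\,D\sqrt{3}/G]$ since $P_T\le DT$, so the grid contains some $\alpha_j$ within a factor two of $\alpha^\star$. This yields $O(G\sqrt{T(D^2+DP_T)})$. Finally we take the minimum over the three entries and add the outer overhead. The per-round cost is $O(\log T)$ evaluations and projections, one per gradient base plus the loss evaluations the master needs.

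I expect the main obstacle to be the regime bookkeeping in the curved branch. There I must verify that the clipped bound \eqref{eq:curved-main} is never needed outside \eqref{eq:curved-regime}, i.e.\ that the inequality in the second step holds uniformly with universal constants. If that comparison fails in some corner, the fallback is to use \eqref{eq:curved-main} at $\tau=D$, which gives $\sqrt{d}\Lambda_T^2P_T/(\gamma D)+\gamma G^2TD^2$. I would then compare this with $GDT$ via $\gamma GD\le1/4$.
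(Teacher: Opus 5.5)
Your architecture matches the paper's: outer exponential weights at rate $\eta$ on the original losses, an OGD grid containing a step within a factor two of $\alpha_\star$, the curved base run on its own predictions, and a regime comparison when \eqref{eq:curved-regime} fails. Your sixth-power check $X\gtrsim G\sqrt{TDP_T}$, with $X$ the curved entry, is also the paper's first inequality.

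The gap is that $G\sqrt{TDP_T}$ is not an entry of the minimum. The entry is $G\sqrt{T(D^2+DP_T)}$, and your claim that outside \eqref{eq:curved-regime} the curved term ``already exceeds one of the other two entries'' is false. Take $\gamma GD\sqrt T\ll1$ (e.g.\ $\eta$ small with $G,D,T$ fixed) and $P_T$ equal to twice the threshold in \eqref{eq:curved-regime}. Then $P_T<D$, and
\[
X\asymp\gamma G^2TD^2=(\gamma GD\sqrt T)\,GD\sqrt T .
\]
This is smaller than both $G\sqrt{T(D^2+DP_T)}\ge GD\sqrt T$ and $GDT$ by an arbitrarily large factor. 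So the minimum is the curved entry, and your argument gives no certificate there. Your fallback, evaluating \eqref{eq:curved-main} at $\tau=D$ and comparing with $GDT$, does not address this, because the linear cap is not the issue.

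The missing step is to absorb the $GD\sqrt T$ part of the OGD bound into the additive static term; the paper does this via $GD\sqrt T\le\sqrt{AQ_T}$. When \eqref{eq:curved-regime} fails, $X>c^{2/3}\gamma G^2TD^2$. Since also $(d+1)\Lambda_T^2/\gamma\ge1/\gamma$,
\[
GD\sqrt T\le c^{-1/3}\sqrt{\frac{(d+1)\Lambda_T^2}{\gamma}\,X}\le\frac{c^{-1/3}}{2}\left(\frac{(d+1)\Lambda_T^2}{\gamma}+X\right).
\]
Combined with your inequality, this gives $G\sqrt{T(D^2+DP_T)}\lesssim(d+1)\Lambda_T^2/\gamma+X$ outside the curved regime. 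So the OGD base, not the curved one, certifies the bound there. This is the paper's statement $U_T\le C(A+Q_T)$ for $P_T\ge P_0$.
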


\begin{proof}
For the optimal fixed step in \eqref{eq:dogd},
\[
 \alpha_*=\sqrt{\frac{D^2+2DP_T}{G^2T}},
\]
which lies in the range covered by the grid up to a factor two.  Therefore one grid expert has regret
\[
 U_T=O\!\left(G\sqrt{T(D^2+DP_T)}\right).
\]
The outer exp-concave master pays only $O(\eta^{-1}\log\log T)$ to the better base.

It remains to justify writing the unconstrained curved expression inside the minimum even when the curved smoothing scale would exceed $D$.  With the notation of \cref{lem:endpoints}, let
$Q_T=(B^2C_2P_T^2)^{1/3}$ and $P_0=C_2D^3/B$.  For $P_T\le P_0$, \cref{thm:curved-detailed} supplies $A+Q_T$.  For $P_T\ge P_0$, direct algebra gives
\[
 G\sqrt{TDP_T}\le Q_T,
 \qquad
 GD\sqrt T\le\sqrt{AQ_T}\le\frac{A+Q_T}{2},
\]
because $\gamma B\ge1$ after the universal constants are fixed.  Hence
$U_T\le C(A+Q_T)$ also outside the curved-scale range.  Thus the better of the curved and OGD bases is bounded by the first two branches in \eqref{eq:full-frontier-detailed}. The $GDT$ cap follows from Lipschitzness and the diameter bound. Substituting the definitions of $A,B,C_2$ gives the displayed logarithmic factors; the outer aggregation cost is absorbed by $(d+1)\Lambda_T^2/\gamma$.

With $D,G,\eta$ fixed, absorbing regularity-dependent constants gives \Cref{cor:frontier}. For the fixed-parameter form \eqref{eq:intro-frontier}, the replacement of $\sqrt{T(1+P_T)}$ by $\sqrt{TP_T}$ is valid inside the displayed minimum, with the additive $d$ retained. If $P_T\ge1$, the two square-root expressions differ by at most $\sqrt2$. If $P_T<1$ and $T\ge d^2$, the curved term is at most $\sqrt{TP_T}$. If $P_T<1$ and $T<d^2$, the extra $\sqrt T$ is at most $d$. This also justifies the abstract's upper bound for all horizons.
\end{proof}

\subsection{Finite precision and Turing complexity}\label{sec:finite-precision}
Under the rational-oracle model in \Cref{ass:bit-model}, the local error control of \Cref{lem:robust} yields the following implementation guarantee.

\begin{theorem}[Polynomial-bit implementation]\label[theorem]{thm:bit}
Under \Cref{ass:model,ass:bit-model}, for every $\delta\in(0,1)$ there is a Turing implementation of the curved algorithm whose regret is the right-hand side of \eqref{eq:curved-main} plus $\delta$. In addition to one loss/gradient oracle call per round, its total bit complexity is
\begin{equation}\label{eq:bit-complexity}
 \widetilde O\!\left(
 T\log^2(2T)
 \left[
 \operatorname{poly}\!\left(d,L_0,\log T,\log\frac1\delta,
 \log(1+1/\gamma),\log(1+D+G)\right)
 +\mathsf C_{\rm proj}^{\rm bit}(d,L_0,B,p)
 \right]
 \right),
\end{equation}
where the oracle precision parameters satisfy
\[
 B,p=\operatorname{poly}\!\left(
 d,L_0,\log(T/\delta),\log(1+1/\gamma),\log(1+D+G)
 \right).
\]
The horizon dependence is near-linear under the polynomial-bit oracle model above.
\end{theorem}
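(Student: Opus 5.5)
The plan is to reduce the theorem to \Cref{lem:robust}. I will build a rational implementation whose per-round errors $\varepsilon_t,\zeta_t$ satisfy $\lambda^{-1}\sum_t(\varepsilon_t+\zeta_t)\le\delta$, for instance by requiring $\varepsilon_t,\zeta_t\le\lambda\delta/(2T)$. Then \eqref{eq:numerical-additive} contributes at most $\delta$ on top of \eqref{eq:curved-main}. The regret analysis is untouched. The work is to show that polynomially many bits of precision per arithmetic step give these tolerances, and that the encodings do not grow with $t$.

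\textbf{Master.} Weights are stored as rationals rounded to $p$ bits after each exponential update. Exponentials are evaluated by truncated series with rational error control. Rounding perturbs each log-weight by $O(2^{-p})$ relative error, so each prefix inequality \eqref{eq:meta-one-step} acquires an additive $\zeta_t=O(2^{-p}\log^2 T)$. Births, pruning, and renormalization are covered by the same relative-error argument. The prediction $w_t$ is an exact rational convex combination of the feasible history means, as allowed by \Cref{ass:bit-model}, so properness is preserved. The oracle is queried exactly at this implemented $w_t$.

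\textbf{Gaussian histories.} Each history stores a rational $(m_t,\Sigma_t)$ with $\tau^2I\preceq\Sigma_t\preceq D^2I$, rounded to $p$ bits after each round. Two facts make this robust. First, every quantity is bounded: eigenvalues lie in $[\tau^2,D^2]$ with $\tau\ge D/T$, and $m_t\in\cW$. Second, $F_t(u)$ is Lipschitz in $(m,\Sigma^{-1})$ on this compact range, with constant $\operatorname{poly}(d,T,D,1/\tau)$, uniformly over $u\in\cW$. For the update itself, I compute the tilt \eqref{eq:precision-update}--\eqref{eq:mean-update} exactly in rationals. The covariance floor \eqref{eq:floor} needs an approximate eigendecomposition to accuracy $2^{-p}$. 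I then make a small symmetric correction so that the stored $\Sigma$ satisfies $\Sigma\succeq\tau^2I$ exactly, for example by adding $2^{-p'}I$ with a rational margin. The projection \eqref{eq:mahalanobis-proj} is delegated to the oracle at precision $p$.

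The main obstacle is the approximate projection, because the variational inequality behind \Cref{lem:projection-monotone} becomes inexact. I would use strong convexity of the Mahalanobis objective, with modulus at least $1/D^2$. Objective suboptimality $2^{-p}$ then gives an $A$-norm distance $O(2^{-p/2})$ to the exact projection. The three-point inequality therefore holds up to an additive $O(D\,\|A\|^{1/2}2^{-p/2})$ uniformly in $u$, with $\|A\|\le T^2/D^2$. Combining this with the Lipschitz bounds, the flooring and rounding errors, and the exact mixability gap of \Cref{lem:mixgap} (which is evaluated at the stored parameters) yields \eqref{eq:approx-kl} with $\varepsilon_t\le\operatorname{poly}(d,T,D,G,1/\gamma)\,2^{-p/2}$. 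Choosing $p=\operatorname{poly}(d,L_0,\log(T/\delta),\log(1+1/\gamma),\log(1+D+G))$ meets the tolerance.

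\textbf{Complexity.} Because rounding is applied every round, the stored bit sizes $B$ stay polynomial in these quantities and independent of $t$. Each round processes $O(\log^2 T)$ states, by (H1) and the scale grid. Each state costs a polynomial-bit eigendecomposition, rank-one algebra, and one oracle call of cost $\mathsf C^{\rm bit}_{\rm proj}(d,L_0,B,p)$. Multiplying by $T$ gives \eqref{eq:bit-complexity}.
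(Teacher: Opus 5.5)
You reduce the theorem to \Cref{lem:robust} with $\varepsilon_t,\zeta_t\le\lambda\delta/(2T)$, and you handle the inexact projection through strong convexity, giving $\|\widehat m-m^*\|_A^2\le\xi$ and an additive $2(D/\tau)\sqrt\xi+\xi$ in the three-point inequality. Both steps match the paper's proof.

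\textbf{Master weights.} This is the genuine gap. You store the weights as $p$-bit rationals and claim that rounding perturbs every log-weight by $O(2^{-p})$. A surviving history's weight can, however, legitimately become $e^{-\Omega(T)}$. Since $\lambda|\widetilde f_t|\le 9/64$, each round can shrink the weight by a constant factor, down to $e^{-9/32}$. Starts with large $\nu_2(i)$ survive for $\Theta(T)$ rounds.
\begin{itemize}[leftmargin=*]
\item With $p=\operatorname{poly}(\log T,\dots)$, a $p$-bit rational cannot approximate such a weight to relative accuracy, so rounding sends it to $0$. The telescoping in \Cref{lem:meta} then contains $\log 0$, and \eqref{eq:approx-master} fails for every later prefix.
\item You cannot discard such histories. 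A tiny weight only says the master was far ahead of the history so far, and the history can afterwards gain $\Theta(1)$ per round on the master. Meanwhile \Cref{lem:low-variation} needs the prefix bound on $[i,q]$ at a bad node's first crossing and on $[i,s]$ at a good node, so potentially on every prefix.
\item If you meant floating-point weights with an $O(\log T)$-bit exponent, the relative-error claim holds. But then your ``exact rational convex combination'' $w_t$ has $\Theta(T)$-bit coordinates in the worst case, and forming it costs $\Theta(T)$ per round, which breaks near-linearity.
\end{itemize}
The missing idea is a positive floor. The paper rounds each normalized probability \emph{upward}, with $v_j+h\le b_j\le v_j+3h$ and $q_j=b_j/\sum_\ell b_\ell$. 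This gives $q_j\ge v_j/(1+3Nh)$ and $q_j\ge h/(1+3Nh)$. Every weight then has polynomial encoding and a known positive lower bound. The telescoping loses at most $\log(1+3Nh)$ per operation, including births and pruning, which yields $\zeta_t\le CNh$.

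\textbf{History means.} You round the stored $m_t$ to $p$ bits each round to keep encodings bounded. A rounded oracle output need not lie in $\cW$, so $w_t$ may be improper. The loss, its gradient bound, and \eqref{eq:domination} are only available on $\cW$. If instead you leave $m_t$ unrounded, encoding lengths can compound: the oracle's output length is only $\operatorname{poly}(d,L_0,B,p)$ in its input length, and the next input is computed exactly from that output. The paper avoids both problems by rounding the oracle \emph{inputs} (the tilt mean and the floored metric) to $B$ bits before each call. It keeps the feasible output exact and absorbs the input perturbation into the objective gap $\xi$.

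\textbf{Covariance floor (smaller point).} ``Approximate eigendecomposition to accuracy $2^{-p}$'' can only mean backward accuracy, because eigenvectors are not determined without eigenvalue gaps. Backward accuracy does suffice, since spectral clipping is $1$-Lipschitz in Frobenius norm, but you should say so. Your $2^{-p'}I$ correction also certifies only the lower end of $[\tau^2,D^2]$, while your conditioning bounds use both ends. The paper avoids eigensolvers entirely. It uses the smoothed positive part $(x+\sqrt{x^2+\epsilon^2})/2$, computed by a Newton matrix square-root iteration, with inward guard margins at both thresholds.
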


\begin{proof}
We control local update errors at the implemented state, so their contribution is additive through \Cref{lem:robust}. Write $N=O(\log^2(2T))$ for a uniform bound on the number of active states and set the target local errors to $\varepsilon_t,\zeta_t\le\lambda\delta/(2T)$.

\emph{Uniform conditioning.}
Work in coordinates translated by the rational initial point $w_0$. Every feasible mean then has norm at most $D$. The stored covariances satisfy $\tau^2I\preceq\Sigma\preceq D^2I$, with $\tau\ge D/T$. Before flooring, the precision matrix is bounded between $D^{-2}I$ and $(\tau^{-2}+\lambda\gamma G^2)I$. The tilt mean is obtained by solving this positive-definite system with a bounded right-hand side. Consequently, all relevant magnitudes, inverse eigenvalues, and derivatives of the Gaussian KL formula have bounds of the form $2^{\operatorname{poly}(L_0)}\operatorname{poly}(d,T,1+1/\gamma,1+D+G)$. This includes inverse powers of $D$, since its positive rational encoding has at most $L_0$ bits. Thus polynomially many guard bits in the stated parameters suffice for any prescribed local KL error.

\emph{A rational covariance floor.}
The scale $\tau=D$ has the exact stored covariance $D^2I$. For the other dyadic scales, $\tau\le D/2$. Compute a rational approximation to the spectral clipping of the tilted covariance into $[\tau^2,D^2]$, first moving the two clipping thresholds inward by a guard margin $h$ and then approximating the matrix to operator-norm error at most $h/2$. The stored rational covariance then remains in the certified interval $[\tau^2,D^2]$ and differs from the exact floor by $O(h)$.

This operation has polynomial bit complexity without an eigenvalue separation assumption. One explicit construction replaces the scalar positive part $x_+$ by $(x+\sqrt{x^2+\epsilon^2})/2$, with uniform error at most $\epsilon/2$, and applies this formula successively at the lower and upper thresholds. For a symmetric matrix $M$, the required positive-definite square root is that of $M^2+\epsilon^2I$. Starting from a scalar upper bound times the identity, Newton's iteration $X\leftarrow(X+X^{-1}(M^2+\epsilon^2I))/2$ reduces to the scalar square-root iteration in the eigenbasis of $M$. Its iteration count is polynomial in the logarithmic spectral bounds and the requested precision. Rational rounding with guard bits gives the required operator-norm error; entrywise error $e$ contributes at most $de$ in operator norm. Choosing the smoothing, iteration, and rounding errors below fixed fractions of $h$ proves the claimed polynomial construction. The uniform KL bounds above turn its $O(h)$ covariance error into the desired local KL error.

\emph{Feasible mean updates.}
Round the tilt mean and the positive-definite projection metric to the working precision before calling the projection oracle. Their input encodings therefore have at most $B$ bits per coordinate, independent of earlier oracle-output lengths. Perturbing these inputs changes the quadratic objective uniformly on $\cW$ by a quantity controlled by the same magnitude bounds. Hence a sufficiently accurate oracle output has objective gap at most $\xi$ for the exact floored metric $A_*=(C^\flat)^{-1}$. If $m^*$ is its exact projection and $\widehat m$ is the feasible returned point, the projection variational inequality gives
\[
 \|\widehat m-m^*\|_{A_*}^2\le\xi,
 \qquad
 \|u-\widehat m\|_{A_*}^2-\|u-m^*\|_{A_*}^2
 \le2(D/\tau)\sqrt\xi+\xi.
\]
The covariance and mean errors therefore yield \eqref{eq:approx-kl} uniformly over $u\in\cW$. The oracle's output-length bound makes each stored feasible mean polynomial in the working precision. At the next update, its derived projection inputs are rounded again to $B$ bits, so encoding lengths do not grow with the number of rounds.

\emph{Master weights.}
Absolute rounding alone can erase a small expert weight. We instead round each normalized probability upward. For an exact normalized vector $v$ with at most $N$ entries, compute rational upper approximations $b_j$ such that
\[
 v_j+h\le b_j\le v_j+3h,
 \qquad q_j=\frac{b_j}{\sum_\ell b_\ell}.
\]
Certified absolute approximations to the exponentials suffice to construct these bounds. They imply
\[
 q_j\ge\frac{v_j}{1+3Nh},
 \qquad q_j\ge\frac{h}{1+3Nh}.
\]
Apply this operation after the exponential update and after birth, deletion, and normalization. Each operation decreases an individual weight by at most a factor $1+3Nh$ relative to its ideal value at that implemented state. In the telescoping proof of \Cref{lem:meta}, this adds at most $C Nh$ per round to the logarithmic overhead, including the birth weight. It proves \eqref{eq:approx-master} with $\zeta_t\le C Nh$. All stored probabilities have polynomial encoding length and a known positive lower bound. Since $|\lambda\widetilde f_t(x)|$ is uniformly bounded on $\cW$, the required normalized exponential probabilities can be evaluated to this absolute accuracy in polynomial bit complexity.

The prediction is the exact rational convex combination of feasible means and the stored rational weights. The loss oracle is queried at this actual prediction, so surrogate domination and the Gaussian mix-gap argument apply directly to the implemented sequence. Choose $h$ and the projection objective accuracy so that $\varepsilon_t,\zeta_t\le\lambda\delta/(2T)$. The preceding bounds require only the displayed polynomial values of $B,p$. \Cref{lem:robust} then bounds the additional regret by $\delta$. There are $O(\log^2(2T))$ states per round, giving \eqref{eq:bit-complexity}.
\end{proof}

\begin{remark}[Anytime implementation]
A doubling schedule removes prior knowledge of $T$. For the curved learner, fix a comparison scale $\tau\in[D/T,D]$ and in epoch $j$ compare with a dyadic scale within a factor two of $\max\{\tau,D/H_j\}$, where $H_j$ is that epoch's planned doubling horizon. Its movement term is bounded by a constant times $P_j/\tau$. The additional smoothing cost caused by the lower endpoint is at most $C\gamma G^2D^2/H_j$, and the geometric sum of these costs is absorbed by the static term. Summing the fixed-scale bounds and then optimizing this common $\tau$ recovers the clipped-scale guarantee with additional logarithmic factors.

Restarting the full-frontier ensemble likewise gives the full frontier with logarithmic overhead. If epoch $j$ has planned horizon $H_j$ and internal path length $P_j$, then H\"older's inequality gives
\[
 \sum_jH_j^{1/3}P_j^{2/3}
 \le\left(\sum_jH_j\right)^{1/3}
 \left(\sum_jP_j\right)^{2/3}.
\]
The planned horizons satisfy $\sum_jH_j\le4T$, including the last partially completed epoch. The square-root branch therefore sums to at most $2\sqrt{T(J+\sum_jP_j)}$ over $J=O(\log(2T))$ epochs, by Cauchy--Schwarz, and the actual linear caps sum to $T$. For a total numerical regret budget $\delta$, allocate $\delta_j=\delta/2^{j+1}$ to epoch $j$; then $\sum_j\delta_j\le\delta$ and its precision still grows only polynomially in $\log H_j$ and $\log(1/\delta)$.
\end{remark}

\section{Proofs for Section~\ref{sec:lower-main}}\label{sec:lower}
The lower construction allocates uncertainty across time blocks and coordinates. Each coordinate carries a hidden sign whose noisy labels reveal only a constant amount of information within one block. The comparator follows the blockwise conditional mean. The norm constraint limits the total signal across coordinates, while the path budget limits how often that signal can change. Choosing the number of active coordinates recovers the curved, square-root, and saturation regimes in \Cref{thm:lower}.

\subsection{Useful lemmas}\label{app:lower-lemmas}
The first lemma bounds sequential estimation of one hidden sign. The next two convert that risk into movement-dependent and stationary lower bounds.

\begin{lemma}[Sequential two-point risk]\label[lemma]{lem:two-point}
Let $S$ be uniform on $\{-1,+1\}$ and, conditional on $S$, let
$Y_1,\ldots,Y_n\in\{-1,+1\}$ be independent with
$\Prb(Y_r=1\mid S)=(1+S\theta)/2$, where $0<\theta\le1/2$.
If $n\theta^2\le1/32$, then every sequential prediction $X_r\in[-1,1]$ satisfies
\begin{equation}\label{eq:two-point-risk}
 \sum_{r=1}^n\E(X_r-S\theta)^2\ge c n\theta^2
\end{equation}
for a universal $c>0$, where $X_r$ is measurable with respect to $Y_1,\ldots,Y_{r-1}$ and auxiliary randomness independent of $(S,Y_{1:n})$. Independent observations from other hidden coordinates may be included in this auxiliary randomness.
\end{lemma}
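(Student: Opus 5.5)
The approach is a Bayesian two-point argument. Since $S$ is uniform, it suffices to lower bound the Bayes risk of each term by the risk of the posterior mean. For each $r$, conditional on the information $\mathcal F_{r-1}=\sigma(Y_{1:r-1},\text{aux})$, the quantity $\E[(X_r-S\theta)^2\mid\mathcal F_{r-1}]$ is minimized by $X_r=\theta\,\E[S\mid Y_{1:r-1}]$. Here the auxiliary randomness is independent of $(S,Y)$, so it does not change the posterior; this also covers observations of other coordinates, which are independent. The minimum value is $\theta^2\,\E[1-\E[S\mid Y_{1:r-1}]^2]=\theta^2\,\E\operatorname{Var}(S\mid Y_{1:r-1})$. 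It therefore suffices to show that for every $r\le n$, $\E[\E[S\mid Y_{1:r-1}]^2]\le 1/2$ (or any constant below one). Summing then gives $\sum_r\E(X_r-S\theta)^2\ge n\theta^2/2$.

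To bound the posterior mean, write $M_{r-1}=\E[S\mid Y_{1:r-1}]$. Two routes are available.

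\emph{Total variation.} Note that $\E|M_{r-1}|=\mathrm{TV}(P_+^{r-1},P_-^{r-1})$, where $P_\pm$ denotes the law of $Y_{1:r-1}$ given $S=\pm1$. This holds because $M=(p_+-p_-)/(p_++p_-)$ pointwise and the mixture has density $(p_++p_-)/2$. Since $|M|\le1$, we get $\E M^2\le\E|M|$. Pinsker's inequality gives $\mathrm{TV}\le\sqrt{\KL/2}$. The KL between products of $r-1$ Bernoulli variables with means $(1\pm\theta)/2$ is $(r-1)\,\theta\log\frac{1+\theta}{1-\theta}\le (r-1)\cdot 3\theta^2$ for $\theta\le1/2$, using $\log\frac{1+\theta}{1-\theta}\le 2\theta+\theta^3\cdot O(1)\le 3\theta$. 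With $n\theta^2\le1/32$ this gives $\KL\le 3/32$, so $\mathrm{TV}\le\sqrt{3/64}<1/4$. Hence $\E M_{r-1}^2<1/4$ and $c=3/4$ works.

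\emph{Direct computation.} Alternatively, one can compute $\E M^2$ via the $\chi^2$ divergence, since $\E M^2=\tfrac14\chi^2$-type quantities. I prefer the TV route because it is cleaner.

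The main obstacle is bookkeeping rather than analysis. One must justify that conditioning on auxiliary randomness, including other coordinates' observations, leaves the posterior of $S$ unchanged, and that $X_r\in[-1,1]$ imposes no loss: the projection argument is unconstrained, and the optimizer $\theta M$ already lies in $[-1,1]$. Both follow from independence of the auxiliary variables from $(S,Y_{1:n})$, which gives $\E[S\mid Y_{1:r-1},\text{aux}]=\E[S\mid Y_{1:r-1}]$. The constant check in the KL bound for $\theta\le 1/2$ is routine.
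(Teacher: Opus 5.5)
Your argument is correct. It shares the paper's backbone: the per-sample divergence $\theta\log\frac{1+\theta}{1-\theta}=O(\theta^2)$, Pinsker's inequality, and a total-variation bound below $1/4$ between the two laws of $Y_{1:r-1}$. You differ from the paper only in how that total-variation bound is turned into squared-error risk.

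The paper reduces estimation to testing. $\operatorname{sign}(X_r)$ is a (possibly randomized) test of $S$, so it errs with probability at least $(1-\mathrm{TV})/2\ge 3/8$. On a sign error $|X_r-S\theta|\ge\theta$, which gives $c=3/8$.

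You instead evaluate the Bayes risk exactly, $\E(X_r-S\theta)^2\ge\theta^2(1-\E M_{r-1}^2)$. You then use the identity $\E|M_{r-1}|=\mathrm{TV}$ under the uniform prior, together with $\E M_{r-1}^2\le\E|M_{r-1}|$.

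Your route gives a better constant, $c\ge 1-\sqrt{3/64}>3/4$. It also makes explicit, through $\E[S\mid Y_{1:r-1},\text{aux}]=\E[S\mid Y_{1:r-1}]$, why auxiliary randomness and the range constraint $X_r\in[-1,1]$ cost nothing. The paper's route needs no posterior computation, only the elementary bound on the error of any test under a uniform prior.
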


\begin{proof}
The one-sample KL divergence between the two sign models is
\[
 \theta\log\frac{1+\theta}{1-\theta}\le4\theta^2.
\]
Before round $r$, Pinsker's inequality bounds the total variation between the two histories by
$\sqrt{2(r-1)}\,\theta\le1/4$.  Thus every sign test errs with probability at least $3/8$.  The test $\operatorname{sign}(X_r)$ has
$|X_r-S\theta|\ge\theta$ on a sign error.  Sum over $r$.
\end{proof}

The two-point lemma gives a constant amount of prediction error while one hidden sign remains uncertain. We replicate that experiment over coordinates and time blocks. The effective dimension $m$ controls how much independent uncertainty can be packed into the fixed Euclidean ball without exceeding the path budget.

\begin{lemma}[Block-hypercube lower bound in effective dimension $m$]\label[lemma]{lem:effective-lower}
There are universal $c,C>0$ such that the following holds.  Let $1\le m\le d$ and suppose
\begin{equation}\label{eq:effective-regime}
 C\frac m{\sqrt T}\le P\le c\frac T{m^2}.
\end{equation}
Then the loss class \eqref{eq:hard-loss} admits a distribution and a comparator path of length at most $P$ for which every learner suffers
\begin{equation}\label{eq:effective-rate}
 \E\Reg_T\ge c m^{1/3}T^{1/3}P^{2/3}.
\end{equation}
\end{lemma}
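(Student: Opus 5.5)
We build a block-hypercube instance, apply \Cref{lem:two-point} to every (coordinate, block) pair, and then tune the block length and signal amplitude against the path budget $P$ and the norm constraint of $\cW_d$.

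\textbf{Construction.} Split $[T]$ into $N$ consecutive blocks of length $n=T/N$, and activate only the first $m$ coordinates. For each block $b$ and each $j\le m$, draw an independent uniform sign $S_{b,j}\in\{\pm1\}$, and fix an amplitude $\theta\in(0,1/2]$.

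Within block $b$, assign the rounds to the $m$ coordinates in round-robin fashion, so each coordinate receives $n/m$ rounds. On a round assigned to coordinate $j$, the adversary plays $f_{j,y}$, where the label $y$ satisfies $\Prb(y=1\mid S)=(1+S_{b,j}\theta)/2$. The conditional mean of $y$ is then $S_{b,j}\theta$.

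The comparator is $u_t=\theta S_{b}$ on block $b$, with zero in the inactive coordinates. It is feasible provided $\theta\sqrt m\le 1/2$.

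\textbf{Regret reduction.} Conditional on the signs, $\E[f_{j,y}(w)-f_{j,y}(u_t)\mid S]=\tfrac12(w_j-S_{b,j}\theta)^2$, because the cross term involving $y$ vanishes once $u_{t,j}$ equals the conditional mean. Hence $\E\Reg_T=\tfrac12\sum_t\E(w_{t,j_t}-S\theta)^2$.

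We may clip $w_{t,j}$ to $[-1,1]$ without loss. The learner's prediction on coordinate $j$ depends only on past labels of that coordinate, plus labels of other coordinates and other blocks, which are independent of $S_{b,j}$ and can be treated as auxiliary randomness. Signs from earlier blocks are independent, so \Cref{lem:two-point} applies to each $(b,j)$ pair with $n/m$ samples.

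Provided $(n/m)\theta^2\le1/32$, each pair contributes at least $c(n/m)\theta^2$. Summing over the $N$ blocks and $m$ coordinates gives $\E\Reg_T\ge cN n\theta^2=cT\theta^2$.

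\textbf{Path and parameter choice.} The path length is at most $N\cdot2\theta\sqrt m$, so we need $2N\theta\sqrt m\le P$. Take the information constraint with equality, $\theta^2=m/(32n)=mN/(32T)$, and use the path constraint to fix $N$: $N\theta\sqrt m\asymp P$ gives $N^{3/2}m/\sqrt T\asymp P$, that is, $N\asymp(P\sqrt T/m)^{2/3}$.

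With these choices,
$$T\theta^2\asymp mN\asymp m\,(P\sqrt T/m)^{2/3}=m^{1/3}T^{1/3}P^{2/3},$$
which is the claimed rate.

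\textbf{Checking the side conditions.} This step is the main bookkeeping obstacle, and it is exactly what \eqref{eq:effective-regime} encodes.
\begin{itemize}
\item $N\ge1$ requires $P\gtrsim m/\sqrt T$, which is the lower end of \eqref{eq:effective-regime}.
\item Each coordinate needs at least one round per block, $n/m\ge1$, so $N\le T/m$. This holds when $(P\sqrt T/m)^{2/3}\lesssim T/m$, i.e.\ $P\lesssim T/m^{1/2}$, which is implied by the upper end of \eqref{eq:effective-regime}.
\item The norm bound $\theta\sqrt m\le1/2$ means $m^2N/T\lesssim1$. This becomes $(P\sqrt T/m)^{2/3}\lesssim T/m^2$, i.e.\ $P\lesssim T/m^2$, which is the upper end of the regime. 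It also gives $\theta\le1/2$.
\end{itemize}

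Integrality is handled by rounding $N$ and $n$ to integers and leaving the leftover rounds with $u=0$, whose contribution to regret is nonnegative in expectation. Rounding only changes the constants. Shrinking $\theta$ by a constant factor keeps the path at most $P$ exactly. Finally, since the bound holds in expectation over the sign distribution, some fixed realization of losses and comparator attains it, and this remains true for randomized learners.
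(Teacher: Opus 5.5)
Your proposal is correct and follows essentially the same route as the paper: a block-hypercube on the first $m$ coordinates with round-robin queries, the comparator set to the conditional mean $\theta S_b$, the identity $\E[f_{j,y}(w)-f_{j,y}(u)\mid\cdot]=\tfrac12(w_j-\theta S_{b,j})^2$, and \Cref{lem:two-point} applied to each block--coordinate pair to get $\E\Reg_T\gtrsim T\theta^2$. The differences are only in bookkeeping. The paper saturates the path constraint (amplitude $a=P/(4K)$) and chooses $K\asymp(TP^2/m^2)^{1/3}$ to meet the information constraint, where you saturate the information constraint and solve for $N$. The paper also keeps the last comparator on leftover rounds, whereas your switch to $u=0$ needs mean-zero labels there and one extra jump of size $\theta\sqrt m$, both of which your path bound $2N\theta\sqrt m$ already covers.
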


\begin{proof}
Use only the first $m$ coordinates.  Choose
\[
 K=\left\lceil C_1\left(\frac{TP^2}{m^2}\right)^{1/3}\right\rceil,
 \qquad
 n=\left\lfloor\frac{T}{Km}\right\rfloor,
 \qquad
 a=\frac{P}{4K},
 \qquad
 \theta=\frac a{\sqrt m}.
\]
For suitable constants in \eqref{eq:effective-regime},
$2\le K\le T/(2m)$ and $a\le1/2$.
Partition $Knm$ rounds into $K$ blocks.  In block $k$, draw independent signs
$S_{k,1},\ldots,S_{k,m}$ and set
\[
 u^{(k)}=\frac a{\sqrt m}(S_{k,1},\ldots,S_{k,m},0,\ldots,0).
\]
This lies in $\cW_d$, and the total path is at most
$2a(K-1)\le P$.
Within each block, cycle through the $m$ active coordinates $n$ times.  When coordinate $j$ is queried, draw $Y\in\{-1,+1\}$ with mean
$\theta S_{k,j}$ and reveal $f_{j,Y}$.  Keep the last block comparator fixed on any leftover rounds and continue the same conditional-mean square-loss experiment.  Those leftover rounds have nonnegative conditional expected excess, so they may be discarded in the lower-bound calculation.

Conditional on the signs and the past observations $\mathcal H_{t-1}$, every prediction $w_j$ satisfies
\begin{equation}\label{eq:square-risk}
 \E\left[f_{j,Y}(w)-f_{j,Y}(u^{(k)})\mid S_{k,j},\mathcal H_{t-1}\right]
 =\frac12(w_j-\theta S_{k,j})^2.
\end{equation}
The information available for each hidden sign is controlled by
\[
 n\theta^2
 \le\frac{TP^2}{16K^3m^2},
\]
which is at most $1/32$ for a sufficiently large $C_1$.  Apply
\cref{lem:two-point} independently to every block-coordinate pair.  Since
$Knm\ge T/2$,
\[
 \E\Reg_T\ge cT\theta^2
 =c\frac{TP^2}{mK^2}
 \ge c m^{1/3}T^{1/3}P^{2/3}.
\]
The calculation holds after conditioning on any learner randomness. Averaging over the generated loss sequences and comparator paths therefore implies that, for every randomized learner, at least one admissible deterministic pair attains the same expected lower bound.
\end{proof}

The block construction captures the movement-dependent part of regret, but its stated regime excludes very small path budgets. A stationary comparator supplies the missing initialization cost. The next lemma provides this additive dimension term without using any movement.

\begin{lemma}[Static lower bound]\label[lemma]{lem:static-lower}
If $T\ge c_0d^2$, then the same class has static minimax regret $\Omega(d)$.
\end{lemma}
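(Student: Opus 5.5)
The plan is to prove the static $\Omega(d)$ bound by reusing the two-point machinery of \Cref{lem:two-point}, applied to a single time block with a fixed comparator. We place independent uniform signs $S_1,\ldots,S_d$ on all $d$ coordinates and fix them for the whole horizon. We take $n=\lfloor T/d\rfloor$ and cycle through the coordinates, so that each coordinate is queried $n$ times. When coordinate $j$ is queried, we reveal $f_{j,Y}$ with $Y\in\{-1,+1\}$ of mean $\theta S_j$.

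The comparator is the constant point $u=\theta(S_1,\ldots,S_d)$. It lies in $\cW_d$ provided $\theta\sqrt d\le1/2$, and its path length is zero. By the conditional-mean identity \eqref{eq:square-risk}, the expected excess on a query of coordinate $j$ equals $\frac12\E(w_j-\theta S_j)^2$. The expected regret therefore equals half the summed squared prediction errors across all coordinates.

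Next we choose the signal level. We take $\theta=1/\sqrt{32n}$, so that $n\theta^2\le1/32$, and \Cref{lem:two-point} applies to each coordinate. Observations from the other coordinates are independent of $(S_j,Y^{(j)})$ and act as auxiliary randomness, exactly as permitted in that lemma. Each coordinate then contributes at least $cn\theta^2=c/32$, and summing over the $d$ coordinates gives $\E\Reg_T\ge c'd$.

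The main check is feasibility, and this is where $T\ge c_0d^2$ enters. We need $\theta\sqrt d=\sqrt{d/(32n)}\le1/2$ and $\theta\le1/2$. Both hold once $n\ge d/8$, which follows from $T\ge c_0d^2$ because $n\approx T/d\ge c_0d$. We also need $n\ge1$, which follows from $T\ge d$.

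The rounds beyond $nd$ are handled as in \Cref{lem:effective-lower}: we keep the same experiment running, and those rounds have nonnegative conditional expected excess, so they can be discarded. To pass from this distribution to a minimax statement, we condition on the learner's randomness and average over signs and labels. Some realized deterministic loss sequence and static comparator must then witness the bound, which yields the $\Omega(d)$ static minimax regret.
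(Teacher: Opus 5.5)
Your proof is correct and follows the paper's argument: independent fixed signs, the constant comparator $\theta S$, cycling through coordinates, the conditional-mean identity \eqref{eq:square-risk}, \Cref{lem:two-point} applied per coordinate with the other coordinates treated as auxiliary randomness, and a final averaging step. The only difference is the parametrization. The paper takes $n=d$ queries per coordinate with $\theta=1/(8\sqrt d)$ on the first $d^2$ rounds, so $T\ge c_0d^2$ guarantees enough rounds and the comparator norm is always $1/8$. You spread the queries over the whole horizon with $\theta=1/\sqrt{32n}$, so you use $T\ge c_0d^2$ instead to keep $\theta\sqrt d\le1/2$.
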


\begin{proof}
Draw independent fixed signs $S_1,\ldots,S_d$ and set the fixed comparator
$u_j=S_j/(8\sqrt d)$. Query each coordinate for $n=d$ initial rounds with labels of mean $S_j/(8\sqrt d)$.  On all remaining rounds continue to use losses of the form \eqref{eq:hard-loss} with the same conditional means.  Their conditional expected excess over the fixed comparator is nonnegative, so it is enough to lower-bound the initial observations.  The product $n\theta^2=1/64$ satisfies \Cref{lem:two-point}, and \cref{lem:two-point,eq:square-risk} yield constant regret per coordinate and hence $\Omega(d)$ total regret.  Averaging over the signs and observations yields an admissible deterministic loss sequence and fixed comparator with at least this regret in expectation over learner randomness.
\end{proof}

\subsection{Proof of \texorpdfstring{\Cref{thm:lower}}{Theorem \ref{thm:lower}}}\label{app:lower-theorem}
The following restatement collects the normalized lower frontier. Its proof verifies the loss class and then combines the preceding lemmas across all path budgets.

\begin{theorem}[Restatement of \Cref{thm:lower}]\label[theorem]{thm:lower-restated}
There are universal constants $c,c_0>0$ such that the following holds.  Let
\[
 \cW_d=\{w\in\R^d:\norm w_2\le1/2\}
\]
and let the adversary use only the losses
\begin{equation}\label{eq:hard-loss-restated}
 f_{j,y}(w)=\frac12(w_j-y)^2,
 \qquad j\in[d],\quad y\in\{-1,+1\}.
\end{equation}
This class has diameter $D=1$, gradient bound $G\le3/2$, and exp-concavity coefficient at least $4/9$, uniformly in $d$.  If $T\ge c_0d^2$, then for every $P\ge0$,
\begin{equation}\label{eq:lower-frontier-restated}
 \inf_{\mathsf A}\sup_{\substack{f_{1:T}\text{ of the form }\eqref{eq:hard-loss}\\
 u_{1:T}\in\cW_d^T,\ P_T(u)\le P}}
 \E_{\mathsf A}\Reg_T(u_{1:T})
 \ge
 c\left[
 d+\min\left\{
 d^{1/3}T^{1/3}P^{2/3},\sqrt{TP},T
 \right\}
 \right].
\end{equation}
\end{theorem}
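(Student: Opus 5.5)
First I will verify the regularity claims for the class \eqref{eq:hard-loss-restated}. Any two points of $\cW_d$ are at distance at most $1$, so $D=1$. The gradient of $f_{j,y}$ is $(w_j-y)e_j$. Since $|w_j|\le1/2$ and $|y|=1$, we have $|w_j-y|\le3/2$, which gives $G\le3/2$. For exp-concavity, apply the scalar criterion of \Cref{app:examples} with $x=e_j$ and $B=3/2$. This gives $\ell''=1\ge\frac49(\ell')^2$, so the loss is $4/9$-exp-concave, and the constant does not depend on $d$. What remains is the regret lower bound. The left side of \eqref{eq:lower-frontier-restated} is a supremum over admissible pairs, so it suffices to show that it is at least a constant times each summand separately: first $d$, then $\min\{d^{1/3}T^{1/3}P^{2/3},\sqrt{TP},T\}$. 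Adding the two bounds and halving the constant then yields the claim.

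The additive $d$ term follows directly from \Cref{lem:static-lower}. That lemma requires $T\ge c_0d^2$, and its fixed comparator has zero path length, so it is admissible for every $P\ge0$. For the movement term I will split into regimes in $P$ and choose an effective dimension $m$ in \Cref{lem:effective-lower} for each. \emph{Curved regime}, $C d/\sqrt T\le P\le cT/d^2$: take $m=d$, which gives $c\,d^{1/3}T^{1/3}P^{2/3}$. Here the minimum equals this curved term, because $d^{1/3}T^{1/3}P^{2/3}\le\sqrt{TP}$ exactly when $P\le T/d^2$. \emph{Square-root regime}, $cT/d^2\le P\le c'T$: take $m\asymp (T/P)^{1/2}$, rounded to an integer in $[1,d]$. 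Then $P\le cT/m^2$ holds up to constants, and $m^{1/3}T^{1/3}P^{2/3}\asymp\sqrt{TP}$. The lower condition $P\ge Cm/\sqrt T$ reduces to $P^{3/2}\gtrsim 1$, which holds because $P\gtrsim T/d^2\ge c_0$. This is where the assumption $T\ge c_0d^2$ enters. \emph{Linear regime}, $P\ge c'T$: monotonicity in $P$ lets me reuse the construction at budget $c'T$ with $m=1$, giving $\Omega(T)$, which matches $\min\{\cdot\}=\Theta(T)$ there. \emph{Tiny regime}, $P<Cd/\sqrt T$: here $d^{1/3}T^{1/3}P^{2/3}\lesssim d$ up to constants, so the movement term is already dominated by the static $d$ bound.

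The main obstacle I expect is the bookkeeping at the regime boundaries. The constants $c$ and $C$ of \Cref{lem:effective-lower} must overlap with the thresholds $T/d^2$ and $T$. The integer rounding of $m$ must be handled, and $m$ must stay in $[1,d]$, which uses $P\ge cT/d^2$ at the lower end. I also need to check that in the tiny regime the ratio $d^{1/3}T^{1/3}P^{2/3}/d\le (C^2)^{1/3}$ is bounded by a constant, so that absorbing it into the static term costs only a constant. In the transition zones I will monotonically shrink $P$ to the nearest regime boundary. Because the minimum changes by at most a constant factor across these boundaries, this loses only constants. Randomized learners are covered because both lemmas already state their bounds for every randomized learner, via averaging over the generated loss sequences.
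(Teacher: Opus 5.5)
Your proposal is correct and follows the paper's proof essentially step for step. It uses the same regularity check, \Cref{lem:static-lower} for the additive $d$ combined via $\max\{a,b\}\ge(a+b)/2$, \Cref{lem:effective-lower} with $m=d$ on $[Cd/\sqrt T,\,cT/d^2]$ and with $m\asymp\sqrt{T/P}$ above that range (using $T\ge c_0d^2$ to get $P^{3/2}\gtrsim1$), absorption of tiny budgets into the static term, and monotonicity in $P$ for the linear range. The only cosmetic difference is that you anchor the linear regime with $m=1$ at budget $c'T$, which is valid once $c'$ is at most the lemma's constant; the paper instead evaluates its square-root or all-coordinate case at $P=bT$. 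Your ``transition zones'' do not actually arise once the thresholds are chosen contiguously, as the paper does.
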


\begin{proof}
We first verify the stated regularity. For \eqref{eq:hard-loss},
\[
 \nabla f_{j,y}(w)=(w_j-y)e_j,
 \qquad
 \nabla^2f_{j,y}(w)=e_je_j^\top.
\]
On $\cW_d$, $|w_j-y|\le3/2$, so
\[
 \nabla^2f_{j,y}(w)
 \succeq\frac49\nabla f_{j,y}(w)\nabla f_{j,y}(w)^\top,
\]
and the claimed uniform regularity follows.
Let $C_*,c_*$ be the constants in \Cref{lem:effective-lower}, decreasing $c_*$ so that $c_*\le1$. Fix $\kappa=\sqrt{c_*}/2$ and $b=\kappa^2/4$. We may enlarge the universal constant in $T\ge c_0d^2$ so that
\[
 C_*d/\sqrt T\le\min\{c_*T/d^2,bT\},
 \qquad c_*T/d^2\ge(C_*\kappa)^{2/3}.
\]
If $P<C_*d/\sqrt T$, the curved expression is at most $C_*^{2/3}d$, so \Cref{lem:static-lower} suffices. For
$C_*d/\sqrt T\le P\le c_*T/d^2$, use \Cref{lem:effective-lower} with $m=d$.

Suppose next that $c_*T/d^2<P\le bT$ and choose
\[
 m=\left\lfloor\kappa\sqrt{T/P}\right\rfloor.
\]
The upper bound on $P$ ensures that the quantity inside the floor is at least two, so
$\frac\kappa2\sqrt{T/P}\le m\le\kappa\sqrt{T/P}<d$.
In particular, $P\le c_*T/m^2$. The second displayed condition gives
$C_*m/\sqrt T\le C_*\kappa/\sqrt P\le P$.
Thus both conditions of \Cref{lem:effective-lower} hold and its lower bound is $\Omega(\sqrt{TP})$.

At $P=bT$, one of the preceding cases gives $\Omega(T)$: the all-coordinate bound is at least a constant times $T$ if its regime reaches $bT$, and otherwise the effective-dimension bound gives $\sqrt b\,T$. Monotonicity of the admissible comparator class extends this lower bound to every $P\ge bT$. These cases cover every budget, including the constant-factor transition ranges. Finally, the static and moving constructions belong to the same loss class; taking their maximum and using $\max\{a,b\}\ge(a+b)/2$ yields the additive $d$ term.
\end{proof}

\end{document}